\definecolor{darkred}{rgb}{0.7,0,0}
\definecolor{darkblue}{rgb}{0,0,0.7}
\definecolor{darkgreen}{rgb}{0,0.4,0}
\newtheorem{thm}{Theorem}
\newtheorem{prop}{Proposition}
\newtheorem{asmp}{Assumption}
\newtheorem{defn}{Definition}
\newtheorem{lem}{Lemma}
\newtheorem{rem}{Remark}
\theoremstyle{definition}
\newtheorem{ex}{Example}
\begin{document}

\title{Bayesian Frequency Estimation Under Local Differential Privacy With an Adaptive Randomized Response Mechanism}

\author[]{Soner Aydın}
\author[]{Sinan Yıldırım}
\affil[]{Sabancı University, Faculty of Engineering and Natural Sciences, İstanbul, Turkey}
\affil[]{\small{\texttt{soner.aydin@sabanciuniv.edu, sinanyildirim@sabanciuniv.edu}}}

\date{\today}
\maketitle


\begin{abstract}
Frequency estimation plays a critical role in many applications involving personal and private categorical data. Such data are often collected sequentially over time, making it valuable to estimate their distribution online while preserving privacy. We propose AdOBEst-LDP, a new algorithm for adaptive, online Bayesian estimation of categorical distributions under local differential privacy (LDP). The key idea behind AdOBEst-LDP is to enhance the utility of future privatized categorical data by leveraging inference from previously collected privatized data. To achieve this, AdOBEst-LDP uses a new adaptive LDP mechanism to collect privatized data. This LDP mechanism constrains its output to a \emph{subset} of categories that `predicts' the next user's data. By adapting the subset selection process to the past privatized data via Bayesian estimation, the algorithm improves the utility of future privatized data. To quantify utility, we explore various well-known information metrics, including (but not limited to) the Fisher information matrix, total variation distance, and information entropy. For Bayesian estimation, we utilize \emph{posterior sampling} through stochastic gradient Langevin dynamics, a computationally efficient approximate Markov chain Monte Carlo (MCMC) method.

We provide a theoretical analysis showing that (i) the posterior distribution of the category probabilities targeted with Bayesian estimation converges to the true probabilities even for approximate posterior sampling, and (ii) AdOBEst-LDP eventually selects the optimal subset for its LDP mechanism with high probability if posterior sampling is performed exactly. We also present numerical results to validate the estimation accuracy of AdOBEst-LDP. Our comparisons show its superior performance against non-adaptive and semi-adaptive competitors across different privacy levels and distributional parameters.

\textbf{Keywords:} Local differential privacy, frequency distribution, adaptive estimation, posterior sampling, stochastic gradient Langevin dynamics 
\end{abstract}

\section{Introduction} \label{section:Intro}
Frequency estimation is the focus of many applications that involve personal and private categorical data. Suppose a type of sensitive information is represented as a random variable $X$ with a categorical distribution denoted by $\text{Cat}(\theta)$, where $\theta$ is a $K$-dimensional probability vector. As real-life examples, this could be the distribution of the types of a product bought by the customers of an online shopping company, responses to a poll question like ``Which party will you vote for in the next elections?'', occupational affiliations of the people who visit the website of a governmental agency, and so on.

In this paper, we propose an adaptive and online algorithm to estimate $\theta$ in a \textit{Local Differential Privacy} (LDP) framework where $X$ is unobserved and instead, we have access to a randomized response $Y$ derived from $X$. In the LDP framework, a central aggregator receives each user's randomized (privatized) data to be used for inferential tasks. In that sense, LDP differs from global DP \citep{dwork2006differential} where the aggregator privatizes operations on the sensitive dataset after it collects the sensitive data without noise. Hence LDP can be said to provide a stricter form of privacy and is used in cases where the aggregator may not be trustable \citep{Kasiviswanathan_et_al_2011}. 
Below, we give a more formal definition of $\epsilon$-LDP as a property that concerns a randomized mechanism.
\begin{defn}[Local differential privacy] \label{defn: LDP}
A randomized mechanism $\mathcal{M}: \mathcal{X} \mapsto \mathcal{Y}$ satisfies $\epsilon$-LDP if the following inequality holds for any pairs of inputs $x, x' \in \mathcal{X}$, and for any output (response) $y \in \mathcal{Y}$: 
\[
e^{-\epsilon} \leq \frac{\mathbb{P}(\mathcal{M}(x) = y)}{\mathbb{P}(\mathcal{M}(x') = y)} \leq e^{\epsilon} .
\]
\end{defn}
The definition of LDP is almost the same as that of global DP. The main difference is that, in the global DP, inputs $x, x'$ are two datasets that differ in only one individual's record, whereas in LDP, $x, x'$ are two different data points from $\mathcal{X}$. 

In Definition \ref{defn: LDP}, $\epsilon \geq 0$ is the privacy parameter. A smaller $\epsilon$ value provides stronger privacy. One main challenge in most differential privacy settings is to decide on the randomized mechanism. In the case of LDP, this is how an individual data point $X$ should be randomized. For a given randomized algorithm, too little randomization may not guarantee the privacy of individuals, whereas too severe randomization deteriorates the utility of the output of the randomized algorithm. Balancing these conflicting objectives (privacy vs utility) is the main goal of the research on estimation under privacy constraints.

In many cases, individuals' data points are collected sequentially. A basic example is opinion polling, where data is collected typically in time intervals of lengths in the order of hours or days. Personal data entered during registration is another example. For example, a hospital can collect patients' categorical data as they visit the hospital for the first time.

While sequential collection of individual data may make the estimation task under the LDP constraint harder, it may also offer an opportunity to adapt the randomized mechanism in time to improve the estimation quality. Motivated by that, in this paper, we address the problem of online Bayesian estimation of a categorical distribution ($\theta$) under $\epsilon$-LDP, while at the same time choosing the randomization mechanism adaptively so that the utility is improved continually in time.

\paragraph{Contribution:} 

\begin{figure}
    \centerline{\includegraphics[width=0.9\textwidth]{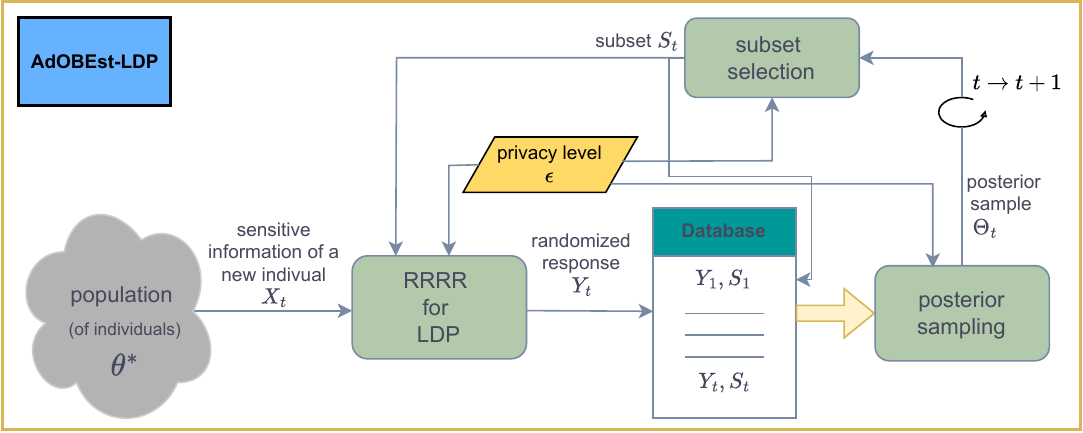}}
    \caption{AdOBEst-LDP: A framework for Adaptive and Online Bayesian Estimation of categorical distributions with Local Differential Privacy.}
        \label{fig: AdOBEst-LDP diagram}
\end{figure}

This paper presents AdOBEst-LDP: a new methodological framework for Adaptive Online Bayesian Frequency Estimation with LDP. A flowchart diagram of AdOBEst-LDP is given in Figure \ref{fig: AdOBEst-LDP diagram} to expose the reader to the main idea of the framework. The main idea of AdOBEst-LDP is to collect future privatized categorical data with high estimation utility based on the knowledge extracted from the previously collected privatized categorical data. To achieve this goal, AdOBEst-LDP continually adapts its randomized response mechanism to the estimation of $\theta$.

The development of AdOBEst-LDP offers three main contributions to the LDP literature.

\begin{itemize}
\item \textbf{A new randomized response mechanism:} AdOBEst-LDP uses a new adaptive \emph{Randomly Restricted Randomized Response} mechanism (RRRR) to produce randomized responses under $\epsilon$-LDP. RRRR is a generalization of the standard randomized response mechanism in that it restricts the response to a \emph{subset} of categories. This subset is selected such that the sensitive information $X$ of the next individual is likely contained in that subset. To ensure this, the subset selection step uses two inputs: (i) a sample for $\theta$ drawn from the posterior distribution of $\theta$ conditional on the past data, (ii) a utility function that scores the informativeness of the randomized response obtained from RRRR when it is run with a given subset. To that end, we propose several utility functions to score the informativeness of the randomized response. The utility functions are based on well-known tools and metrics from probability and statistics, such as Fisher information \citep{alparslan2022statistic, lopuhaa2022fisher, steinberger2024efficiency, yildirim2024differentially}, entropy, total variation distance, expected squared error, and probability of honest response, i.e., $Y = X$. We provide some insight into those utility functions both theoretically and numerically. Moreover, we also provide a computational complexity analysis for the proposed utility functions.

\item \textbf{Posterior sampling:} We equip AdOBEst-LDP with a scalable posterior sampling method for parameter estimation.  Bayesian estimation is a natural choice for inference when the data is corrupted or censored \citep{Kim_et_al_2011,  Liu_et_al_2022, Lone_et_al_2024} and such modification can be statistically modeled. In differential privacy settings, too, Bayesian inference is widely employed \citep{Williams_and_Mcsherry_2010, Karwa_et_al_2014, Foulds_et_al_2016, alparslan2022statistic} when the input data is shared with privacy-preserving noise. Standard MCMC methods, such as Gibbs sampling, have a computation complexity quadratic in the number of individuals whose data have been collected. As a remedy to this, similar to \citet{Mazumdar_et_al_2020}, we propose a \textit{stochastic gradient Langevin dynamics} (SGLD)-based algorithm to obtain approximate posterior samples \citep{Welling_and_Teh_2011}. By working on subsets of data, SGLD scales in time. 

The numerical experiments show that AdOBEst-LDP outperforms its non-adaptive counterpart when run with SGLD for posterior sampling. The results also suggest that the utility functions considered in this paper are robust and perform well. The MATLAB code at \url{https://github.com/soneraydin/AdOBEst_LDP} can be used to reproduce the results obtained in this paper.

\item \textbf{Convergence results:} Finally, we provide a theoretical analysis of AdOBEst-LDP. We prove two main results:
\begin{enumerate}[label=(\roman*)]
\item The targeted posterior distribution conditional on the generated observations by the adaptive scheme converges to the true parameter in probability in the number of observations, $n$. This convergence result owes mainly to the smoothness and a special form of concavity of the marginal log-likelihood function of the randomized responses. Another key factor is that the second moment of the sum up to time $n$ of the gradient of this log-marginal likelihood is $\mathcal{O}(n)$. 
\item If posterior sampling is performed exactly, the expected frequency of the algorithm choosing the best subset (according to the utility function) tends to $1$ as $n$ goes to $\infty$. 
\end{enumerate}
The theoretical results require fairly weak, realistic, and verifiable assumptions.
\end{itemize}

\paragraph{Outline:} In Section \ref{sec: Related Literature}, we discuss the earlier work related to ours. Section \ref{sec: Problem definition and general framework} presents LDP and the frequency estimation problem and introduces AdOBEst-LDP as a general framework. In Section \ref{sec: Constructing informative randomized response mechanisms}, we delve deeper into the details of AdOBEst-LDP by first presenting RRRR, the proposed randomized response mechanism, then explaining how it chooses an `optimal' subset of categories adaptively at each iteration. Section \ref{sec: Constructing informative randomized response mechanisms} also presents the utility metrics considered for choosing these subsets in this paper. In Section \ref{sec: Posterior sampling}, we provide the details of the posterior sampling methods considered in this paper, particularly SGLD. The theoretical analysis of AdOBEst-LDP is provided in Section \ref{sec: Theoretical analysis}. Section \ref{sec: Numerical Results} contains the numerical experiments. Finally, Section \ref{sec: Conclusion} provides some concluding remarks. All the proofs of the theoretical results are given in the appendices.

\section{Related Literature} \label{sec: Related Literature}
Frequency estimation under the LDP setting has been an increasingly popular research area in recent years. Along with its basic application (estimation of discrete probabilities from locally privatized data), it is also used for a wide range of other estimation and learning purposes such as estimation of confidence intervals and confidence sets for a population mean \citep{waudby2023nonparametric}, estimation or identification of heavy hitters \citep{zhu2023heavy} \citep{wang2023locally} \citep{jia2019calibrate}, estimation of quantiles \citep{cormode2022sample}, frequent itemset mining \citep{zhao2023hadamard}, estimation of degree distribution in social networks \citep{wang2023accurately}, distributed training of graph neural networks with categorical features and labels \citep{bhaila2024local}. The methods that are proposed for $\epsilon$-LDP frequency estimation also form the basis of more complex inferential tasks (with some modifications on these methods), such as the release of `marginals' (contingency tables) between multiple categorical features and their correlations, as in the work of \citet{cormode2018marginal}.

AdOBEst-LDP employs RRRR as its randomized mechanism to produce randomized responses. RRRR is a modified version of the \textit{Standard Randomized Response} mechanism (SRR) (also known as \textit{generalized randomized response}, \textit{$k$-randomized response}, and \textit{direct encoding} in the literature.) Given $X$ as its input, SRR outputs $X$ with probability $\frac{e^{\epsilon}}{e^{\epsilon} + K - 1}$, otherwise outputs one of the other categories at random. This is a well-studied mechanism in the DP literature, and the statistical properties of its basic version (such as its estimation variance) can be found in the works by \citep{wang2017locally} and \citep{wang2020locally}. When $K$ is large, the utility of SRR can be too low. RRRR in AdOBEst-LDP is designed to circumvent this problem by constraining its output to a subset of categories. Unlike SRR, the perturbation probability of responses in our algorithm changes adaptively, depending on the cardinality of the selected subset of categories (which we explain in detail in Section \ref{sec: Constructing informative randomized response mechanisms}) for the privatization of $X$, and the cardinality of its complementary set. 

The use of information metrics as utility functions in LDP protocols has been an active line of research in recent years. In the work of \citet{kairouz2016extremal}, information metrics like $f$-divergence and mutual information are used for selecting optimal LDP protocols. In the same vein, \citet{steinberger2024efficiency} uses Fisher Information as the utility metric for finding a nearly optimal LDP protocol for the frequency estimation problem, and \citet{lopuhaa2022fisher} uses it for comparing the utility of various LDP protocols for frequency estimation and finding the optimal one. In these works, the mentioned information metrics are used statically, i.e., to choose a protocol once and for all, for a given estimation task. The approaches in these works suffer from computational complexity for large values of $K$ because the search space for optimal protocols there grows in the order of $2^{K}$. In some other works, such as \citet{wang2016mutual}, a randomly sampled subset of size $k \leq K$ is used to improve the efficiency of this task, where the optimal $k$ is determined by maximizing the \textit{mutual information} between real data and the privatized data. However, this approach is also static as the optimal subset size $k$ is selected only once, and the optimization procedure only determines $k$ and not the subset itself. Unlike those static approaches, AdOBEst-LDP dynamically uses the information metric (such as the Fisher Information matrix and the other alternatives in Section \ref{sec: Subset selection for RRRR}) to select the optimal subset at each time step. In addition, in the subset selection step of AdOBEst-LDP, only $K$ candidate subsets are compared in terms of their utilities at each iteration, enabling computational tractability. This way of tackling the problem requires computing the given information metric for only $K$ times at each iteration. We will provide further details of this approach in Section \ref{sec: Subset selection for RRRR} and provide a computational complexity analysis in Section \ref{sec: Computational complexity of utility functions}. 

Another use of the Fisher Information in the LDP literature is for bounding the estimation error for a given LDP protocol. For example, \citet{barnes2020fisher} uses Fisher Information inside van Trees inequality, the Bayesian version of the Cramér-Rao bound \citep{gill1995applications}, for bounding the estimation error of various LDP protocols for Gaussian mean estimation and frequency estimation. Again, their work provides rules for choosing optimal protocols for a given $\epsilon$ in a static way. As a similar example, \citet{acharya2023unified} derives a general \textit{information contraction bound} for parameter estimation problems under LDP and shows its relation to van Trees inequality as its special case. To our knowledge, our approach is the first one that adaptively uses a utility metric to dynamically update the inner workings of an LDP protocol for estimating categorical distributions.

The idea of building adaptive mechanisms for improved estimation under the LDP has been studied in the literature, although the focus and methodology of those works differ from ours. For example, \citet{joseph2019locally} proposed a two-step adaptive method to estimate the unknown mean parameter of data from Gaussian distribution. In this method, the users are split into two groups, an initial mean estimate is obtained from the perturbed data of the first group, and the data from the second group is transformed adaptively according to that initial estimate. Similarly, \citet{wei2024aaa} proposed another two-step adaptive method for the mean estimation problem, in which the aggregator first computes a rough distribution estimate from the noisy data of a small sample of users, which is then used for adjusting the amount of perturbation for the data of remaining users. While \citet{joseph2019locally, wei2024aaa} consider a two-stage method, AdOBEst-LDP seeks to adapt continually by updating its LDP mechanism each time an individual's information is collected. Similar to our work, \citet{yildirim2024differentially} has recently proposed an adaptive LDP mechanism for online parameter estimation for continuous distributions. The LDP mechanism of \citet{yildirim2024differentially} contains a truncation step with boundaries adapted to the estimate from the past data according to a utility function based on the Fisher information. Unfortunately, the parameter estimation step of \citet{yildirim2024differentially} does not scale in time. Differently from \citet{yildirim2024differentially}, AdOBEst-LDP focuses on categorical distributions, considers several other utility functions to update its LDP mechanism, employs a scalable parameter estimation step, and its performance is backed up with theoretical results.

\section{Problem definition and general framework} \label{sec: Problem definition and general framework}
Suppose we are interested in a discrete probability distribution $\mathcal{P}$ of a certain form of sensitive categorical information $X \in [K] := \{1, \ldots, K\}$ of individuals in a population. Hence, $\mathcal{P} $ is a categorical distribution $\text{Cat}(\theta^{\ast})$ with a probability vector 
\[
\theta^{\ast} := (\theta^{\ast}_{1}, \ldots, \theta^{\ast}_{K}) \in \Delta,
\]
where $\Delta$ is the $(K-1)$-dimensional probability simplex,
\[
\Delta := \left\{ \theta \in \mathbb{R}^{K}: \sum_{k = 1}^{K} \theta_{k} = 1 \text{ and } \theta_{k} \geq 0 \text{ for } k \in [K] \right\}.
\]
We assume a setting where individuals' sensitive data are collected \emph{privately} and \emph{sequentially in time}. The privatization is performed via a randomized algorithm that, upon taking a category index in $[K]$ as an input, returns a random category index in $[K]$ such that the whole data collection process is $\epsilon$-LDP. (See Definition \ref{defn: LDP}.) Let $X_{t}$ and $Y_{t}$ be the private information and randomized responses of individual $t$, respectively. According to Definition \ref{defn: LDP} for LDP, the following inequality must be satisfied for all triples $(x, x', y) \in [K]^{3}$ for the randomized mechanism to be $\epsilon$-LDP.
\begin{equation} \label{eq: LDP of the randomized mechanism}
\mathbb{P}(Y_{t} = y | X_{t} = x) \leq e^{\epsilon} \mathbb{P}(Y_{t} = y | X_{t} = x').
\end{equation}
The inferential goal is to estimate $\theta^{\ast}$ sequentially based on the responses $Y_{1}, Y_{2}, \ldots$, and the mechanisms $\mathcal{M}_{1}, \mathcal{M}_{2}, \ldots$ that are used to generate those responses. Specifically, Bayesian estimation is considered, whereby the target is the posterior distribution, denoted by $\Pi(\mathrm{d} \theta | Y_{1:n}, \mathcal{M}_{1:n})$, given a prior probability distribution with pdf $\eta(\theta)$ on $\Delta$.

This paper concerns the Bayesian estimation of $\theta$ while adapting the randomized mechanism to improve the estimation utility continually. We propose a general framework called AdOBEst-LDP, in which the randomized mechanism at time $t$ is adapted to the data collected until time $t-1$. AdOBEst-LDP is outlined in Algorithm \ref{alg: Differentially private online learning}.

\begin{algorithm}[t]
\caption{AdOBEst-LDP: Adaptive Online Bayesian Estimation with LDP}
\label{alg: Differentially private online learning}
Initialization: Start with an initial estimator $\Theta_{0} = \theta_{\text{init}}$.

\For{$t = 1, 2, \ldots$}{
\textbf{Step 1: Adapting the LDP mechanism:} Based on $\Theta_{t-1}$, determine the $\epsilon$-LDP mechanism $\mathcal{M}_{t}$ for the next individual according to a utility metric.

\textbf{Step 2: LDP response generation} The sensitive information $X_{t}$ of individual $t$ is shared as $Y_{t}$ using the $\epsilon$-LDP mechanism $\mathcal{M}_{t}$.

\textbf{Step 3:} Draw a sample (approximately) from the posterior distribution
\begin{equation*} 
\Theta_{t} \sim \Pi(\cdot | Y_{1:t}, \mathcal{M}_{1:t}).
\end{equation*}
}
\end{algorithm}

Algorithm \ref{alg: Differentially private online learning} is fairly general, and it does not describe how to choose the $\epsilon$-LDP mechanism $\mathcal{M}_{t}$ at time $t$, nor does it provide the details of the posterior sampling. However, it is still worth making some critical observations about the nature of the algorithm. Firstly, at time $t$ the selection of the $\epsilon$-LDP mechanism in Step 1 relies on the posterior sample $\Theta_{t-1}$, which serves as an \emph{estimator} of the true parameter $\theta^{\ast}$ based on the past observations. As we shall see in Section \ref{sec: Constructing informative randomized response mechanisms}, at Step 1 the `best' $\epsilon$-LDP mechanism is chosen from a set of candidate LDP mechanisms according to a utility function. This step is relevant only when $\Theta_{t-1}$ is a reliable estimator of $\theta^{\ast}$. In other words, Step 1 `exploits' the estimator $\Theta_{t-1}$. Moreover, the random nature of posterior sampling prevents having too much confidence in the current estimator $\Theta_{t-1}$ and enables a certain degree of `exploration.' In conclusion, Algorithm \ref{alg: Differentially private online learning} utilizes an `exploration-exploitation' approach reminiscent of reinforcement learning. In particular, posterior sampling in Step 3 suggests a strong parallelism between AdOBEst-LDP and the well-known exploration-exploitation approach called Thompson sampling \citep{Russo_et_al_2018}.

The details of Steps 1-2 and Step 3 of Algorithm \ref{alg: Differentially private online learning} are given in Sections \ref{sec: Constructing informative randomized response mechanisms} and \ref{sec: Posterior sampling}, respectively. 

\section{Constructing informative randomized response mechanisms} \label{sec: Constructing informative randomized response mechanisms}
In this section, we describe Steps 1-2 of AdOBEst-LDP in Algorithm \ref{alg: Differentially private online learning} where the $\epsilon$-LDP mechanism $\mathcal{M}_{t}$ is selected at time $t$ based on the posterior sample $\Theta_{t}$ and a randomized response is generated using $\mathcal{M}_{t}$. For ease of exposition, we will drop the time index $t$ throughout the section and let $\Theta_{t-1} = \theta$.

Recall from Definition \ref{defn: LDP} that an $\epsilon$-LDP randomized mechanism is associated with a conditional probability distribution that satisfies \eqref{eq: LDP of the randomized mechanism}. An $\epsilon$-LDP mechanism is not unique. One such mechanism is the standard randomized response mechanism (SRR). For subsequent use, it is convenient to define SRR generally: We let $\texttt{SRR}(X; \Omega, \epsilon)$ the output of SRR which operates on the set $\Omega$ with LDP parameter $\epsilon$ when the input is $X \in \Omega$. Then, we have
\begin{equation} \label{eq: SRR output}
Y = \text{SRR}(X; \Omega, \epsilon) = \begin{cases} X & \text{w.p. } e^{\epsilon}/(e^{\epsilon}+|\Omega|-1) \\
\sim \text{Uniform}(\Omega/\{X\}) & \text{ else }\end{cases}.
\end{equation}

We aim to develop an alternative randomized mechanism whose response $Y$ is more informative about $\theta^{\ast}$ than the one generated as $Y = \text{SRR}(X;  [K], \epsilon)$. The main idea is as follows. Supposing that the posterior sample $\Theta_{t-1} = \theta$ is an accurate estimate of $\theta^{\ast}$, it is reasonable to aim for the `best' $\epsilon$-LDP mechanism (among a range of candidates) which would maximize the (estimation) utility of $Y$ if the true parameter were $\theta^{\ast} = \theta$. We follow this main idea to develop the proposed $\epsilon$-LDP mechanism.

\subsection{The randomly restricted randomized response (RRRR) mechanism}
Given $\Theta_{t-1} = \theta \in \Delta$, an informative randomized response mechanism can be constructed by considering a \emph{high-probability set} $S \subset [K]$ and a \emph{low-probability set} $S^{c} = [K]/S$ for $X$ (according to $\theta$). Then, a sensible alternative to $\text{SRR}(X; [K], \epsilon)$ would be to confine the randomized response to the set $S$ (unioned by a random element from $S^{c}$ to remain LDP). The expected benefit of this approach is due to (i) using less amount of randomization since $|S| < K$, and thus (ii) having an informative response when $X \in S$, which happens with a high probability. Based on this approach, we propose RRRR, whose precise steps are given in Algorithm \ref{alg: RRRR}.

\begin{algorithm}
\caption{RRRR}
\label{alg: RRRR}
\KwIn{Sample space size $K$, a subset $S \subset [K]$, privacy parameters $\epsilon_{1}, \epsilon_{2} > 0$, input $X \in [K]$}
\KwOut{Randomized response $Y \in [K]$}
\uIf{$X \in S$}{
Draw $R \sim \textup{Uniform}(S^{c})$.

Set $Y = \texttt{SRR}(X; S \cup \{R\}, \epsilon_{1})$ as in \eqref{eq: SRR output}.
}
\Else{
Set $R = \texttt{SRR}(X; S^{c}, \epsilon_{2})$ as in \eqref{eq: SRR output}.

Set $Y = \texttt{SRR}(R; S \cup \{R\}, \epsilon_{1})$ as in \eqref{eq: SRR output}.
}
\Return{$Y$}

\end{algorithm}

RRRR has three algorithmic parameters: a subset $S$ of $[K]$ and two privacy parameters $\epsilon_{1}$ and $\epsilon_{2}$ which operates on $S$ and $S^{c}$, respectively. Theorem \ref{thm: RRRR LDP} states the necessary conditions for $\epsilon_{1}$  and $\epsilon_{2}$  for RRRR to be $\epsilon$-LDP. A proof of Theorem \ref{thm: RRRR LDP} is given in Appendix \ref{appndx: Proofs for LDP of RRRR}.

\begin{thm} \label{thm: RRRR LDP}
\textup{RRRR} is $\epsilon$-DP if $\epsilon_{1} \leq \epsilon$ and 
\begin{equation} \label{eq: epsilon2 after epsilon1}
\epsilon_{2} = \begin{cases} \min\left\{ \epsilon, \ln \frac{|S^{c}|-1}{e^{\epsilon_{1} - \epsilon} |S^{c}| -1 }  \right\}& \text{for } \epsilon - \epsilon_{1} < \ln |S^{c}| \text{ and } |S| > 0\\
\epsilon & \text{else}
\end{cases}.
\end{equation}
\end{thm}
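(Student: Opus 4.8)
The plan is to verify Definition~\ref{defn: LDP} directly by computing, for the RRRR mechanism, the conditional probabilities $\mathbb{P}(Y = y \mid X = x)$ for all $(x,y) \in [K]^2$ and then bounding the ratio $\mathbb{P}(Y=y \mid X=x) / \mathbb{P}(Y=y \mid X=x')$ over all triples $(x,x',y)$. The natural first step is a case analysis on where $x$ and $y$ sit relative to $S$ and $S^c$. There are essentially three regimes for $y$: (a) $y \in S$, (b) $y \in S^c$, and within each, the behavior of the conditional law differs according to whether $x \in S$ or $x \in S^c$. Because the mechanism draws an auxiliary element $R$ (either uniformly from $S^c$ when $x \in S$, or via an inner \texttt{SRR} on $S^c$ when $x \in S^c$) and then runs \texttt{SRR} on the $(|S|+1)$-element set $S \cup \{R\}$, each conditional probability is obtained by summing over the possible values of $R$. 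I would write out these sums explicitly using \eqref{eq: SRR output}: for $x \in S$, $R$ is uniform on the $|S^c|$ elements of $S^c$, and given $R$, $Y$ equals $x$ with probability $e^{\epsilon_1}/(e^{\epsilon_1}+|S|)$ and is uniform on the remaining $|S|$ elements of $S\cup\{R\}$ otherwise; for $x \in S^c$, $R$ equals $x$ with probability $e^{\epsilon_2}/(e^{\epsilon_2}+|S^c|-1)$ and is uniform on $S^c \setminus \{x\}$ otherwise, and then the outer \texttt{SRR} acts on $S \cup \{R\}$.

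Having assembled these expressions, the second step is to identify, for each fixed output $y$, the maximum and minimum of $\mathbb{P}(Y = y \mid X = x)$ over $x$. I expect the extremal ratios to be achieved at a small number of canonical configurations: the largest probability of returning a given $y \in S$ occurs when $x = y$ (honest response through the outer \texttt{SRR}), and the smallest when $x \in S^c$ and the internal draws conspire against $y$; symmetrically for $y \in S^c$, where now whether $y$ happens to be the sampled $R$ matters. The $\epsilon$-LDP requirement then reduces to two scalar inequalities: one controlling ratios among inputs in $S$ versus inputs in $S^c$ when the output lies in $S$ (this should give the condition $\epsilon_1 \le \epsilon$ more or less immediately, since on $S$ the mechanism behaves like \texttt{SRR} with parameter $\epsilon_1$ modulo the averaging over $R$), and a more delicate one governing outputs in $S^c$, where the two-stage perturbation with parameters $\epsilon_1$ and $\epsilon_2$ compounds. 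Working that second inequality out and solving for $\epsilon_2$ in terms of $\epsilon_1$, $\epsilon$, and $|S^c|$ should produce exactly the formula \eqref{eq: epsilon2 after epsilon1}, with the logarithmic expression $\ln\frac{|S^c|-1}{e^{\epsilon_1-\epsilon}|S^c|-1}$ emerging from the ratio of the best-case to worst-case probability of landing on a fixed element of $S^c$; the side condition $\epsilon - \epsilon_1 < \ln|S^c|$ is precisely what makes the denominator $e^{\epsilon_1-\epsilon}|S^c| - 1$ positive, and when it fails (or $S$ is empty) the mechanism reduces to a single \texttt{SRR} on $S^c$ and $\epsilon_2 = \epsilon$ suffices.

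The main obstacle, I anticipate, is the bookkeeping in the case $x \in S^c$, $y \in S^c$: here $Y = y$ can arise either because the inner \texttt{SRR} set $R = y$ and the outer \texttt{SRR} then happened to ``honestly'' output $R$, or because $R$ took some other value $r$ and the outer \texttt{SRR} on $S \cup \{r\}$ landed on $y$ — but the latter is only possible when $r = y$ as well, so actually $Y = y \in S^c$ forces $R = y$, which simplifies things, whereas $Y \in S$ can come from any choice of $R$. Getting these support considerations exactly right, and then correctly identifying which input $x$ minimizes $\mathbb{P}(Y = y \mid X = x)$ for $y \in S^c$ (it should be any $x' \in S^c$ with $x' \neq y$, for which reaching $y$ requires the inner \texttt{SRR} to flip to $y$, incurring the factor $1/(e^{\epsilon_2}+|S^c|-1)$ rather than $e^{\epsilon_2}/(e^{\epsilon_2}+|S^c|-1)$), is where the argument has to be careful. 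Once the right pair of configurations is pinned down, the remaining algebra — imposing that the resulting ratio be at most $e^\epsilon$ and inverting for $\epsilon_2$ — is routine. I would also double-check the boundary case $|S^c| = 1$ separately, since then $R$ is deterministic and several of the \texttt{SRR} calls on $S^c$ are degenerate.
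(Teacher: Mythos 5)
Your overall strategy is the same as the paper's: write out the transition kernel $g_{S,\epsilon}(y\mid x)$ explicitly by summing over the auxiliary draw $R$ (the paper records the six resulting values in its equation for $g_{S,\epsilon}$), and then verify the ratio bound over all configurations of $(x,x',y)$ relative to $S$. Your support observations are correct — in particular that $Y=y\in S^{c}$ forces $R=y$ — and your reading of where the condition $\epsilon_{1}\le\epsilon$ comes from (outputs in $S$, honest vs.\ dishonest input) matches the paper's cases C1 and C5.

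There is, however, one concrete error that would derail the derivation of \eqref{eq: epsilon2 after epsilon1} if you executed the plan as written. For a fixed $y\in S^{c}$ there are three distinct values of $g_{S,\epsilon}(y\mid\cdot)$: from $x=y$ you get $\frac{e^{\epsilon_{2}}}{e^{\epsilon_{2}}+|S^{c}|-1}\cdot\frac{e^{\epsilon_{1}}}{e^{\epsilon_{1}}+|S|}$; from $x'\in S^{c}\setminus\{y\}$ you get $\frac{1}{e^{\epsilon_{2}}+|S^{c}|-1}\cdot\frac{e^{\epsilon_{1}}}{e^{\epsilon_{1}}+|S|}$; and from $x\in S$ you get $\frac{1}{|S^{c}|}\cdot\frac{1}{e^{\epsilon_{1}}+|S|}$ (here the outer \texttt{SRR} must \emph{lie}, so you pick up a factor $1$ rather than $e^{\epsilon_{1}}$). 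You assert that the minimizer is any $x'\in S^{c}\setminus\{y\}$; comparing the second and third expressions shows this holds only when $e^{\epsilon_{2}}+|S^{c}|-1>|S^{c}|e^{\epsilon_{1}}$, and in the regime where the theorem's constraint actually binds (small $\epsilon_{2}$, $\epsilon_{1}$ close to $\epsilon$) the minimum is attained at $x\in S$ instead. The pair that produces the formula for $\epsilon_{2}$ is precisely $(x\in S,\ x'=y\in S^{c})$ — the paper's case C3 — whose ratio $\frac{e^{\epsilon_{2}}+|S^{c}|-1}{|S^{c}|\,e^{\epsilon_{1}+\epsilon_{2}}}\ge e^{-\epsilon}$ rearranges to $e^{\epsilon_{2}}\le\frac{|S^{c}|-1}{e^{\epsilon_{1}-\epsilon}|S^{c}|-1}$. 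The pair you single out yields only the ratio $e^{\epsilon_{2}}$, i.e.\ the weaker requirement $\epsilon_{2}\le\epsilon$, so your parenthetical identification of the worst case is inconsistent with the formula you (correctly) say should come out. The fix is simply not to guess the extremal input: either check all three pairwise ratios for $y\in S^{c}$ (as the paper does in its cases C3, C4, C8, C9), or carry the max/min comparison as a genuine case split on the parameters. The rest of your outline, including the degenerate case $|S^{c}|=1$ and the reduction to \texttt{SRR} when $S=\emptyset$, is sound.
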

Note that when $S = \emptyset$ and $\epsilon_{2} = \epsilon$, RRRR reduces to SRR.

\subsection{Choosing the privacy parameters $\epsilon_{1}$, $\epsilon_{2}$}
We elaborate on the choice of $\epsilon_{1}$ and $\epsilon_{2}$ in the light of Theorem \ref{thm: RRRR LDP}. In RRRR, the probability of an honest response, i.e., $X = Y$, given $X \in S$, is
\[
\mathbb{P}(Y = X | X \in S) = \frac{e^{\epsilon_{1}}}{e^{\epsilon_{1}} + |S|},
\]
which should be contrasted to $e^{\epsilon}/(e^{\epsilon}+K-1)$, which would be the probability if $Y = \text{SRR}(X; [K], \epsilon)$. Anticipating that $\{ X \in S\}$ is likely, one should at least aim for $\epsilon_{1}$ that satisfies $\mathbb{P}(X = Y | X \in S) \geq e^{\epsilon}/(e^{\epsilon}+K-1)$ for RRRR to be relevant. This is equivalent to
\begin{equation} \label{eq: condition for accuracy}
\epsilon_{1} \geq \epsilon + \ln |S| -  \ln (K-1).
\end{equation}
Taking into account also the constraint that $\epsilon_{1} \leq \epsilon$ (by Theorem \ref{thm: RRRR LDP}), we suggest $\epsilon_{1} = \kappa \epsilon$, where $\kappa \in (0,1)$ is a number close to $1$, such as $0.9$, to ensure \eqref{eq: condition for accuracy} with a significant margin. (It is possible to choose $\kappa = 1$; however, again by Theorem \ref{thm: RRRR LDP}, this requires that $\epsilon_{2} = 0$, which renders $Y$ completely uninformative when $X \notin S$.) In Section \ref{sec: Numerical Results}, we discuss the choice of $\kappa$ in more detail.

For the next section, we assume a fixed $\kappa \in (0, 1)$, and set $\epsilon_{1} = \kappa \epsilon$; and we focus on the selection of $S$. 

\subsection{Subset selection for RRRR} \label{sec: Subset selection for RRRR}

Let $\texttt{RRRR}(X; S, \epsilon)$ be the random output of RRRR that achieves $\epsilon$-LDP by using the subset $S$ and the privacy parameters $\epsilon_{1} = \kappa \epsilon$ and $\epsilon_{2}$ as in \eqref{eq: epsilon2 after epsilon1} when the input is $X$. Furthermore, let $U(\theta, S, \epsilon)$ be the (inferential) `utility' of $Y = \texttt{RRRR}(X; S, \epsilon)$ when $X \sim \text{Cat}(\theta)$. One would like to choose $S$ that maximizes $U(\theta, S, \epsilon)$. (One could also seek to optimize $\kappa$ in $\epsilon_{1} = \kappa \epsilon$, too, however with the expense of additional computation.)

However, since there are $2^{K} - 1$ feasible choices for $S$, one must confine the search space for $S$ in practice. As discussed above, RRRR becomes most relevant when the set $S$ is a high-probability set. 
Therefore, for a given $\theta$, we confine the choices for $S$ to
\begin{equation} \label{eq: Ss considered for RRRR}
S_{\theta, k} := \{\sigma_{\theta}(1), \sigma_{\theta}(2), \ldots, \sigma_{\theta}(k)\}, \quad k = 1, \ldots, K.
\end{equation}
where $\sigma_{\theta} := (\sigma_{\theta}(1), \ldots, \sigma_{\theta}(K))$ be the permutation vector for $\theta$ so that $\theta_{\sigma_{\theta}(1)} \geq \ldots \geq \theta_{\sigma_{\theta}(K)}$. 

Then the subset selection problem can be formulated as finding
\begin{equation}\label{eq: maximization of utility to choose S}
k^{\ast} = \arg \max_{k \in \{0, \ldots, K-1\}} U(\theta, S_{k, \theta}, \epsilon).
\end{equation}
The alternatives in \eqref{eq: Ss considered for RRRR} can be justified. Since $S_{\theta, k}$ contains the indices of the $k$ highest-valued components of $\theta^{\ast}$, it is expected to cover a large portion of the total probability for $X$. This can be the case even for a small value of $k$ relative to $K$ when the components of $\theta^{\ast}$ are not evenly distributed. Also, the alternatives cover the basic SRR, which is obtained with $k = 0$ (leading to $S = \emptyset$ and $\epsilon_{2} = \epsilon$).

In the subsequent sections, we present six different utility functions $U(\theta, S, \epsilon)$ and justify their relevance to estimation; the usefulness of the proposed functions is also demonstrated in the numerical experiments.  

\subsubsection{Fisher information matrix} \label{sec: Fisher Information Matrix}
The first utility function under consideration is based on the Fisher information matrix at $\theta$ according to the distribution of $Y$ given $\theta$. It is well-known that the inverse of the Fisher information matrix sets the Cramer-Rao lower bound for the variance of an unbiased estimator. Hence, the Fisher information can be regarded as a reasonable metric to quantify the information contained in $Y$ about $\theta$. This approach is adopted in \citet{lopuhaa2022fisher, steinberger2024efficiency} for LDP applications for estimating discrete distributions, and \citet{alparslan2022statistic, yildirim2024differentially} in similar problems involving parametric continuous distributions. 

For a given $\theta \in \Delta$, let $F(\theta; S, \epsilon)$ be the Fisher information matrix evaluated at $\theta$ when $X \sim \text{Cat}(\theta)$ and $Y = \texttt{RRRR}(X; S, \epsilon)$. Let 
\[
g_{S, \epsilon}(y | x) := \mathbb{P}(Y = y | X = x)
\]
when $Y = \texttt{RRRR}(X; S, \epsilon)$. The following result states $F(\theta; S, \epsilon)$ in terms of $g_{S, \epsilon}$ and $\theta$. The result is derived in \citet{lopuhaa2022fisher}; we also give a simple proof in Appendix \ref{appndx: Proofs about utility functions}. Note that $F(\theta; S, \epsilon)$ is $(K-1)\times(K-1)$ since $\theta$ has $K-1$ free components and $\theta_{K} = 1 - \sum_{i = 1}^{K-1} \theta_{i}$. 
\begin{prop} \label{prop: FIM of RRRR}
The Fisher information matrix for RRRR is given by
\begin{equation} \label{eq: FIM of RRRR}
F(\theta; S, \epsilon) = A_{S, \epsilon}^{\top} D_{\theta}^{-1} A_{S, \epsilon}, 
\end{equation}
where $A_{S, \epsilon}$ is a $K \times (K -1)$ matrix whose entries are $A_{S, \epsilon}(i, j) := g_{S, \epsilon}(i | j) - g_{S, \epsilon}(i | K)$ and $D_{\theta}$ is a $K \times K$ diagonal matrix with elements $D_{\theta}(i, i) := \sum_{j = 1}^{K} g_{S, \epsilon}(i | j) \theta_{j}$.
\end{prop}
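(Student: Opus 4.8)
The plan is to compute $F(\theta; S, \epsilon)$ directly from the marginal law of the privatized response $Y$ using the textbook formula for the Fisher information of a finitely supported parametric family. First I would write the marginal pmf of $Y$ at $y \in [K]$ as
\[
p_{\theta}(y) := \mathbb{P}(Y = y) = \sum_{j = 1}^{K} g_{S, \epsilon}(y \mid j)\, \theta_{j},
\]
and observe that this is exactly the $y$-th diagonal entry $D_{\theta}(y, y)$. Since RRRR is $\epsilon$-LDP, $g_{S, \epsilon}(y \mid j) > 0$ for every $j$ (the defining ratio bounds forbid a zero probability), and since $\theta \in \Delta$ has at least one strictly positive component, $p_{\theta}(y) > 0$ for all $y$; hence $D_{\theta}$ is invertible and the usual regularity conditions (differentiability in $\theta$, interchange of sum and derivative over the finite support $[K]$) needed to define the Fisher information are met.

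Next I would fix the free parametrization $(\theta_{1}, \ldots, \theta_{K-1})$ with $\theta_{K} = 1 - \sum_{i = 1}^{K-1} \theta_{i}$, as already adopted in the statement. Differentiating $p_{\theta}(y)$ with respect to $\theta_{j}$ for $j \in [K-1]$ and applying the chain rule through the dependent coordinate $\theta_{K}$ gives
\[
\frac{\partial p_{\theta}(y)}{\partial \theta_{j}} = g_{S, \epsilon}(y \mid j) - g_{S, \epsilon}(y \mid K) = A_{S, \epsilon}(y, j),
\]
so the $j$-th component of the score at observation $y$ is $\partial_{\theta_{j}} \log p_{\theta}(y) = A_{S, \epsilon}(y, j) / p_{\theta}(y)$. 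Plugging this into the definition
\[
F(\theta; S, \epsilon)_{j, \ell} = \sum_{y = 1}^{K} p_{\theta}(y)\, \partial_{\theta_{j}} \log p_{\theta}(y)\, \partial_{\theta_{\ell}} \log p_{\theta}(y) = \sum_{y = 1}^{K} \frac{A_{S, \epsilon}(y, j)\, A_{S, \epsilon}(y, \ell)}{p_{\theta}(y)},
\]
and substituting $p_{\theta}(y) = D_{\theta}(y, y)$, the double index sum is precisely the $(j, \ell)$ entry of $A_{S, \epsilon}^{\top} D_{\theta}^{-1} A_{S, \epsilon}$, which is the claimed identity \eqref{eq: FIM of RRRR}.

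Every step here is an elementary computation, so I do not anticipate a genuine obstacle; the only two points that require care are (a) carrying the chain rule through $\theta_{K}$ so that the subtracted term $g_{S, \epsilon}(y \mid K)$ appears in the entries of $A_{S, \epsilon}$, and (b) recognizing that the relevant likelihood is the marginal distribution of $Y$ alone — the latent $X$ is integrated out and enters only through the mixture weights $\theta_{j}$, so the transition kernel $g_{S, \epsilon}(\cdot \mid \cdot)$ is a fixed (parameter-free) object. As an alternative bookkeeping device, one could write $F(\theta; S, \epsilon) = J_{\theta}^{\top} \widetilde{F}_{\theta} J_{\theta}$, where $\widetilde{F}_{\theta} = D_{\theta}^{-1}$ is the Fisher information of the full probability vector $(p_{\theta}(1), \ldots, p_{\theta}(K))$ under the categorical model for $Y$ and $J_{\theta}$ is the $K \times (K-1)$ Jacobian with $J_{\theta}(y, j) = A_{S, \epsilon}(y, j)$; this merely repackages the same calculation.
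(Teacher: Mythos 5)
Your proposal is correct and follows essentially the same route as the paper: both derive the score $\partial_{\theta_j}\ln h_{S,\epsilon}(y\mid\theta) = \bigl(g_{S,\epsilon}(y\mid j)-g_{S,\epsilon}(y\mid K)\bigr)/h_{S,\epsilon}(y\mid\theta)$ under the reduced $(K-1)$-dimensional parametrization and then form the outer-product expectation over $y$. If anything, your bookkeeping is cleaner, since you keep $D_{\theta}(y,y)=h_{S,\epsilon}(y\mid\theta)$ consistent with the statement (the paper's proof momentarily redefines $D_{\theta}$ with $1/h_{S,\epsilon}(y\mid\theta)$ on the diagonal), and you explicitly justify positivity of the marginal probabilities, which the paper defers to a later lemma.
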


We define the following utility function based on the Fisher information
\begin{equation} \label{eq: utility FIM}
U_{1}(\theta, S, \epsilon) := -\textup{Tr} \left[ F^{-1}(\theta; S, \epsilon)\right].
\end{equation}
This utility function depends on the Fisher information differently from \citet{lopuhaa2022fisher, steinberger2024efficiency}, who considered the determinant of the FIM as the utility function. The rationale behind \eqref{eq: utility FIM} is that the for an unbiased estimator $\hat{\theta}(Y)$ of $\theta^{\ast}$ based on $Y = \texttt{RRRR}(X; S, \epsilon)$, the expected mean squared error is bounded by $E_{\theta^{\ast}} [ \| \hat{\theta}(Y) - \theta^{\ast} \|^{2} ] \leq \textup{Tr} \left[ F^{-1}(\theta^{\ast}; S, \epsilon)\right]$. For the utility function in \eqref{eq: utility FIM} to be well-defined, the FIM needs to be invertible. Proposition \ref{prop: FIM is invertible}, proven in Appendix \ref{appndx: Proofs about utility functions}, states that this is indeed the case. 

\begin{prop} \label{prop: FIM is invertible}
$F(\theta; S, \epsilon)$ in \eqref{eq: FIM of RRRR} is invertible for all $\theta \in \Delta$, $S \subset [K]$, and $\epsilon_{1}, \epsilon_{2} > 0$.
\end{prop}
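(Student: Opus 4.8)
The plan is to show that $F(\theta; S, \epsilon) = A_{S,\epsilon}^\top D_\theta^{-1} A_{S,\epsilon}$ is positive definite, which by the factorization in \eqref{eq: FIM of RRRR} is equivalent to showing that the $K \times (K-1)$ matrix $A_{S,\epsilon}$ has full column rank $K-1$. Indeed, since $D_\theta$ is a diagonal matrix with strictly positive entries (each $D_\theta(i,i) = \sum_j g_{S,\epsilon}(i\mid j)\theta_j$ is a convex combination of transition probabilities, and these are bounded away from $0$ because RRRR is $\epsilon$-LDP, so every $g_{S,\epsilon}(i\mid j) > 0$), the matrix $D_\theta^{-1}$ is symmetric positive definite. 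Hence for any nonzero $v \in \mathbb{R}^{K-1}$ we have $v^\top F(\theta;S,\epsilon) v = (A_{S,\epsilon} v)^\top D_\theta^{-1} (A_{S,\epsilon} v) > 0$ provided $A_{S,\epsilon} v \neq 0$, so it suffices to establish $\ker A_{S,\epsilon} = \{0\}$.

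To analyze the kernel, recall $A_{S,\epsilon}(i,j) = g_{S,\epsilon}(i\mid j) - g_{S,\epsilon}(i\mid K)$ for $j \in [K-1]$. First I would extend the picture to the full $K \times K$ matrix $G$ with $G(i,j) = g_{S,\epsilon}(i\mid j)$, i.e. the (column-)stochastic transition matrix of RRRR. Then $A_{S,\epsilon}$ is obtained from $G$ by subtracting its last column from each of the first $K-1$ columns (and discarding the last column). Consequently $A_{S,\epsilon} v = 0$ for some $v \in \mathbb{R}^{K-1}$ is equivalent to $G w = 0$ for $w = (v_1,\dots,v_{K-1}, -\sum_{i=1}^{K-1} v_i)^\top$, a vector whose entries sum to zero; so $\ker A_{S,\epsilon}$ is trivial if and only if the only mean-zero vector in $\ker G$ is the zero vector, i.e. if and only if $\ker G$ contains no nonzero vector orthogonal to $\mathbf{1}$. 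It therefore remains to show that $G$ is invertible, or at least that $\ker G \cap \mathbf{1}^\perp = \{0\}$; invertibility of $G$ is the clean sufficient condition and the one I would aim for.

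The main work — and the step I expect to be the real obstacle — is verifying that the RRRR transition matrix $G = G_{S,\epsilon}$ is nonsingular. Here I would exploit the explicit block structure of RRRR coming from Algorithm \ref{alg: RRRR}: the output $Y$ always lands in $S \cup \{R\}$, and the law of $Y$ given $X$ differs in a structured way according to whether $X \in S$ or $X \in S^c$. Concretely, $Y = \texttt{SRR}(\,\cdot\,; S\cup\{R\}, \epsilon_1)$ has the form "keep the input with some probability, otherwise spread uniformly over the rest of $S\cup\{R\}$", which yields an operator of the shape $\alpha I + (\text{rank-one / low-rank correction})$ on the relevant coordinates; a matrix of the form (positive multiple of identity) plus (a matrix with a controlled, small rank or a strictly diagonally dominant structure) is invertible. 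I would make this precise by writing $G$ in block form with respect to the partition $[K] = S \sqcup S^c$: the $S$-to-$S$ block is $c_1 I + (\text{const})\mathbf{1}\mathbf{1}^\top$-type, and then argue invertibility either by showing $G$ is strictly diagonally dominant by columns (each diagonal entry $g_{S,\epsilon}(j\mid j)$ exceeds the sum of the off-diagonal entries in its column, which should follow from $\epsilon_1, \epsilon_2 > 0$ making the "honest-response" probability strictly larger than any single "flip" probability), or, failing strict dominance, by computing $\det G$ directly via the matrix-determinant lemma on the rank-one corrections. Strict column diagonal dominance, if it holds, immediately gives invertibility of $G$ by the Lévy–Desplanques theorem and finishes the proof; I would check the inequality $g_{S,\epsilon}(j\mid j) > \sum_{i \neq j} g_{S,\epsilon}(i\mid j)$ separately for $j \in S$ and for $j \in S^c$ using the closed forms of the two SRR steps, and if the plain inequality is not uniformly strict I would fall back on the determinant-lemma computation as the robust alternative.
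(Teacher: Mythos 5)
Your overall architecture is exactly the paper's: reduce invertibility of $F(\theta;S,\epsilon)$ to the positivity of the diagonal weights plus full column rank of $A_{S,\epsilon}$, reduce that in turn to invertibility of the full $K\times K$ transition matrix $G_{S,\epsilon}$ (your kernel computation $A_{S,\epsilon}v = G_{S,\epsilon}w$ with $w=(v_1,\dots,v_{K-1},-\sum_i v_i)$ is a correct restatement of the paper's column-difference argument), and then attack $G_{S,\epsilon}$ through its block structure with respect to $S\sqcup S^c$. Those reductions are all sound.

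The gap is in the step you yourself flag as the real obstacle. Your primary route --- strict column diagonal dominance of $G_{S,\epsilon}$ --- provably fails. Each column of $G_{S,\epsilon}$ sums to $1$ (it is a conditional probability mass function in $y$), so column dominance is equivalent to every diagonal entry exceeding $1/2$. But for $j\in S$ with $|S|=k$ the diagonal entry is $g_{S,\epsilon}(j\mid j)=e^{\epsilon_1}/(e^{\epsilon_1}+k)$, which is at most $1/2$ as soon as $k\geq e^{\epsilon_1}$ --- e.g.\ already for $k=2$ when $\epsilon_1\leq \ln 2$ --- and for $j\in S^c$ the diagonal entry is the product of two such factors and is smaller still. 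So the Lévy--Desplanques argument cannot close the proof in any nontrivial privacy regime. Your fallback (treat each block as a positive multiple of the identity plus a rank-one correction and compute) is the right idea and is precisely what the paper executes: it writes $G_{S,\epsilon}$ in $2\times 2$ block form, applies the Weinstein--Aronszajn identity to reduce to the Schur complement, expands that complement with the Woodbury identity, and shows it equals $a_1 I_k + (\text{scalar})\,1_k 1_k^{\top}$ with $a_1>0$. But in your proposal this is left as a contingency rather than carried out, and it is not a mere formality: one must check that the coefficients $a_1 = \frac{e^{\epsilon_1}-1}{e^{\epsilon_1}+k}$ and $d_1$ multiplying the identity parts are strictly positive, which is exactly where the hypothesis $\epsilon_1,\epsilon_2>0$ is used. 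As written, the proposal identifies the correct target and the correct reductions but does not contain a working proof of the one nontrivial claim, namely that $G_{S,\epsilon}$ is nonsingular.
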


\subsubsection{Entropy of randomized response} \label{sec: Entropy of randomized response}
For discrete distributions, entropy measures \textit{uniformity}. Hence, in the LDP framework, a lower entropy for the randomized response $Y$ implies a more informative $Y$. Based on that observation, a utility function can be defined as the negative entropy of the marginal distribution of $Y$,
\begin{equation*}
U_{2}(\theta, S, \epsilon) := \sum_{y= 1}^{K} \ln h_{S, \epsilon}(y | \theta) h_{S, \epsilon}(y | \theta),
\end{equation*}
where $h_{S, \epsilon}(y | \theta)$ is the marginal probability of $Y = y$ given $\theta$,
\[
h_{S, \epsilon}(y | \theta) := \sum_{x = 1}^{K}  g_{S, \epsilon}(y | x) \theta_{x}.
\]

\subsubsection{Total variation distance} \label{sec: Total variation distance}
The TV distance between two discrete probability distributions $\mu, \nu$ on $[K]$ is given by
\[
\textup{TV}(\mu, \nu) := \frac{1}{2} \sum_{k =1}^{K} | \mu(x) - \nu(x) |.
\]
We consider two utility functions based on TV distance. The first function arises from the observation that a more informative response $Y$ generally leads to a larger change in the posterior distribution of $X$ given $Y, \theta$, 
\begin{equation}\label{eq: post of x given y and theta}
p_{S, \epsilon}(x | y, \theta) := \frac{\theta_{x} \cdot g_{S, \epsilon}(y | x)}{h_{S, \epsilon}(y | \theta)}, \quad x = 1, \ldots, K,
\end{equation}
relative to its prior $\text{Cat}(\theta)$. The expected amount of change can be formulated as the expectation of the TV distance between the prior and posterior distributions with respect to the marginal distribution of $Y$ given $\theta$. Then, a utility function can be defined as 
\begin{align*}
U_{3}(\theta, S, \epsilon) &:= \mathbb{E}_{\theta} \left[ \text{TV}(p_{S, \epsilon}(\cdot | Y, \theta), \text{Cat}(\theta)) \right] \\
&= \frac{1}{2} \sum_{x = 1}^{K}  \sum_{y = 1}^{K}\left\vert g_{S, \epsilon}(y | x) \theta_{x} - h_{S, \epsilon}(y | \theta) \theta_{x}  \right\vert.
\end{align*}

Another utility function is related to the TV distance between the marginal probability distributions of $X$ given $\theta$ and $Y$ given $\theta$. Since $X$ is more informative about $\theta$ than the randomized response $Y$, the mentioned TV distance is desired to be as small as possible. Hence, a utility function may be formulated as
\begin{align*}
U_{4}(\theta, S, \epsilon) &:= -\text{TV}(h_{S, \epsilon}(\cdot | \theta), \text{Cat}(\theta)) \label{eq: utility TV-2}\\
&= -\frac{1}{2}\sum_{i = 1}^{K} | h_{S, \epsilon}(i | \theta) - \theta_{i}|. \nonumber
\end{align*}

\subsubsection{Expected mean squared error} \label{sec: Expected mean squared error for X}
One can also wish to choose $S$ such that the Bayesian estimator of $X$ given $Y$ has the lowest expected squared error. Specifically, given $k \in [K]$ let $e_{k}$ be a $K \times 1$ vector of $0$s except that $e_{k} = 1$. A utility function can be defined based on that as
\begin{equation} \label{eq: utility MSE}
U_{5}(\theta, S, \epsilon) := - \arg \min_{\widehat{e_{X}}} \mathbb{E}_{\theta} \left[ \| e_{X} - \widehat{e_{X}}(Y) \|^{2} \right],
\end{equation}
where $\mathbb{E}_{\theta} \left[ \| e_{X} - \widehat{e_{X}}(Y) \|^{2} \right]$ is the mean squared error for the estimator $\widehat{e_{X}}$ of $e_{X}$ given $Y$ when $X \sim \text{Cat}(\theta)$ and $Y = \texttt{RRRR}(X; S, \epsilon)$, which is known to be minimized when $\widehat{e_{X}}$ is the Bayesian estimator of $e_{X}$. Proposition \ref{prop: utility MSE} provides an explicit formula for this utility function. A proof is given in Appendix \ref{appndx: Proofs about utility functions}.
\begin{prop} \label{prop: utility MSE}
For the utility function in \eqref{eq: utility MSE}, we have
\[
U_{5}(\theta, S, \epsilon) = \sum_{y = 1}^{K} \sum_{x = 1}^{K} \frac{g_{S, \epsilon}(y| x)^{2} \theta_{x}^{2}}{h_{S, \epsilon}(y | \theta)} - 1.
\]
\end{prop}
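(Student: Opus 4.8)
The plan is to compute the minimum mean squared error for estimating $e_X$ from $Y$ directly, using the well-known fact that this minimum is achieved by the posterior mean $\widehat{e_X}(Y) = \mathbb{E}_\theta[e_X \mid Y]$. First I would write $\mathbb{E}_\theta[\|e_X - \widehat{e_X}(Y)\|^2]$ using the orthogonality (bias-variance) decomposition for the Bayes estimator, namely $\mathbb{E}_\theta[\|e_X - \widehat{e_X}(Y)\|^2] = \mathbb{E}_\theta[\|e_X\|^2] - \mathbb{E}_\theta[\|\widehat{e_X}(Y)\|^2]$. Since $e_X$ is a coordinate vector, $\|e_X\|^2 = 1$ almost surely, so the first term is simply $1$. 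This accounts for the ``$-1$'' in the claimed formula (after the sign flip in the definition of $U_5$), so the crux is to show $\mathbb{E}_\theta[\|\widehat{e_X}(Y)\|^2] = \sum_{y=1}^K \sum_{x=1}^K g_{S,\epsilon}(y\mid x)^2 \theta_x^2 / h_{S,\epsilon}(y\mid\theta)$.

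Next I would identify the Bayes estimator explicitly. The posterior mean of $e_X$ given $Y = y$ has $x$-th component $\mathbb{P}(X = x \mid Y = y, \theta) = p_{S,\epsilon}(x\mid y,\theta) = \theta_x\, g_{S,\epsilon}(y\mid x)/h_{S,\epsilon}(y\mid\theta)$, which is exactly the posterior vector defined in \eqref{eq: post of x given y and theta}. Therefore $\|\widehat{e_X}(y)\|^2 = \sum_{x=1}^K p_{S,\epsilon}(x\mid y,\theta)^2 = \sum_{x=1}^K \theta_x^2\, g_{S,\epsilon}(y\mid x)^2 / h_{S,\epsilon}(y\mid\theta)^2$. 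Then I would take the expectation over $Y$: since $\mathbb{P}(Y = y \mid \theta) = h_{S,\epsilon}(y\mid\theta)$, multiplying $\|\widehat{e_X}(y)\|^2$ by $h_{S,\epsilon}(y\mid\theta)$ and summing over $y$ cancels one power of $h_{S,\epsilon}(y\mid\theta)$ in the denominator, yielding $\sum_{y} \sum_{x} \theta_x^2\, g_{S,\epsilon}(y\mid x)^2 / h_{S,\epsilon}(y\mid\theta)$. Combining with the $\|e_X\|^2 = 1$ term and negating (per the definition of $U_5$) gives the stated expression.

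There is really no hard analytic obstacle here; the proof is a short exact computation. The one point that needs a small argument for rigor is the bias-variance decomposition step: one should verify that $\mathbb{E}_\theta[\langle e_X - \widehat{e_X}(Y), \widehat{e_X}(Y)\rangle] = 0$, which follows by conditioning on $Y$ and using $\mathbb{E}_\theta[e_X \mid Y] = \widehat{e_X}(Y)$ together with the measurability of $\widehat{e_X}(Y)$ with respect to $\sigma(Y)$. A secondary minor point is well-definedness: one needs $h_{S,\epsilon}(y\mid\theta) > 0$ for every $y$ in the support, which holds because RRRR is $\epsilon$-LDP and hence assigns strictly positive probability to every output $y \in [K]$ (all conditional probabilities $g_{S,\epsilon}(y\mid x)$ are bounded below by a positive constant times $e^{-\epsilon}$), so the denominators never vanish. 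I would state these two observations briefly and then present the chain of equalities.
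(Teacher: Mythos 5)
Your proposal is correct and reaches the stated formula by a sound computation; the only organizational difference from the paper's proof is the decomposition you use at the outset. The paper expands $\| e_{X} - \nu(Y) \|^{2} = 1 - 2\nu(Y)_{X} + \sum_{k}\nu(Y)_{k}^{2}$ directly and then evaluates $\mathbb{E}_{\theta}[\nu(Y)_{X}]$ and $\mathbb{E}_{\theta}[\sum_{k}\nu(Y)_{k}^{2}]$ separately, observing a posteriori that the two expectations are equal (both equal $\sum_{x,y} g_{S,\epsilon}(y|x)^{2}\theta_{x}^{2}/h_{S,\epsilon}(y|\theta)$), so that $1 - 2A + A = 1 - A$. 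You instead invoke the Pythagorean identity $\mathbb{E}_{\theta}[\| e_{X} - \widehat{e_{X}}(Y)\|^{2}] = \mathbb{E}_{\theta}[\|e_{X}\|^{2}] - \mathbb{E}_{\theta}[\|\widehat{e_{X}}(Y)\|^{2}]$ up front; the orthogonality check you flag, namely $\mathbb{E}_{\theta}[\langle e_{X}, \widehat{e_{X}}(Y)\rangle] = \mathbb{E}_{\theta}[\|\widehat{e_{X}}(Y)\|^{2}]$ via the tower property, is precisely the reason the paper's two intermediate expectations coincide, so your route explains rather than merely verifies that coincidence and saves one of the two sums. The remaining computation of $\mathbb{E}_{\theta}[\|\widehat{e_{X}}(Y)\|^{2}]$, with the posterior vector $p_{S,\epsilon}(\cdot|y,\theta)$ from \eqref{eq: post of x given y and theta} and one power of $h_{S,\epsilon}(y|\theta)$ cancelling, is identical in both arguments. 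Your side remark on well-definedness of the denominators is a nice touch the paper leaves implicit (it is covered later by Lemma \ref{lem: bounds for Gtheta}). In short: same substance, slightly cleaner bookkeeping on your end; nothing is missing.
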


\subsubsection{Probability of honest response} \label{sec: Probability of honest response}
Our last alternative for the utility function is a simple yet intuitive one, which is the probability of an honest response, i.e., 
\begin{equation} \label{eq: utility x y same}
U_{6}(\theta, S, \epsilon) := \mathbb{P}_{\theta}(Y = X | S).
\end{equation}
This probability is explicitly given by
\begin{align*}
\mathbb{P}_{\theta}(Y = X | S) &= \mathbb{P}(Y = X | X\in S) \mathbb{P}_{\theta}(X \in S )+  \mathbb{P}(Y = X | X \notin S) \mathbb{P}_{\theta}(X \notin S ) \\
&= \frac{e^{\epsilon_{1}}}{e^{\epsilon_{1}} + |S|} \left( \sum_{i\in S} \theta_{i} + \frac{e^{\epsilon_{2}}}{e^{\epsilon_{2}} + K-|S|-1}  \sum_{i \notin S} \theta_{i} \right).
\end{align*}
Recall that, for computational tractability, we confined the possible sets for $S$ to the subsets $\{\sigma_{\theta}(1), \ldots, \sigma_{\theta}(k)\}$, $k= 0, \ldots, K-1$ and select $S$ by solving the maximization problem in \eqref{eq: maximization of utility to choose S}. Remarkably, if $U_{6}(\theta, S, \epsilon)$ is used for the utility function, the restricted maximization \eqref{eq: maximization of utility to choose S} is equivalent to \emph{global} maximization, i.e., finding the best $S$ among all the $2^{K}$ possible subsets $S$. We state this as a theorem and prove it in Appendix \ref{appndx: Proofs about utility functions}.
\begin{thm} \label{thm: PXY global opt}
For the utility function $U_{6}(\theta, S, \epsilon)$ in \eqref{eq: utility x y same} and $S_{k, \theta}$s in \eqref{eq: Ss considered for RRRR}, we have
\[
\max_{k = 0, \ldots, K-1} U_{6}(\theta, S_{k, \theta}, \epsilon) = \max_{S \subset [K]} U_{6}(\theta, S, \epsilon).
\]
\end{thm}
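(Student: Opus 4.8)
The plan is to exploit the observation that the explicit expression for $U_{6}(\theta, S, \epsilon) = \mathbb{P}_{\theta}(Y = X \mid S)$ displayed just above the theorem depends on the subset $S$ only through two scalar quantities: its cardinality $|S|$ and the mass $p_{S} := \sum_{i \in S} \theta_{i}$ it places under $\theta$. Indeed, $\epsilon_{1} = \kappa\epsilon$ is fixed, and by \eqref{eq: epsilon2 after epsilon1} the value $\epsilon_{2}$ is a function of $|S^{c}| = K - |S|$ alone; moreover $\sum_{i \notin S}\theta_{i} = 1 - p_{S}$. Writing $k = |S|$ and $c_{k} := e^{\epsilon_{2}}/(e^{\epsilon_{2}} + K - k - 1)$, a quantity depending on $k$ only, we therefore have
\[
U_{6}(\theta, S, \epsilon) = \frac{e^{\epsilon_{1}}}{e^{\epsilon_{1}} + k}\Bigl( c_{k} + (1 - c_{k})\, p_{S} \Bigr).
\]

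First I would fix the cardinality $k \in \{0, \ldots, K-1\}$ and read the right-hand side as an affine function of $p_{S}$. Its slope in $p_{S}$ equals $\frac{e^{\epsilon_{1}}}{e^{\epsilon_{1}}+k}(1 - c_{k}) = \frac{e^{\epsilon_{1}}}{e^{\epsilon_{1}}+k}\cdot\frac{K - k - 1}{e^{\epsilon_{2}} + K - k - 1}$, which is nonnegative because $\epsilon_{2} \geq 0$ (guaranteed by the construction in Theorem~\ref{thm: RRRR LDP}) and $K - k - 1 \geq 0$ for $k \leq K - 1$. Hence, among all subsets $S$ of a fixed size $k$, $U_{6}(\theta, S, \epsilon)$ is maximized by the $S$ maximizing $p_{S} = \sum_{i\in S}\theta_{i}$, and that maximizer is precisely $S_{k,\theta}$ of \eqref{eq: Ss considered for RRRR}, the index set of the $k$ largest entries of $\theta$. (When $k = 0$ or $k = K-1$ the slope vanishes, so the claim is trivial there.)

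Finally I would partition the collection of feasible subsets — those with nonempty complement, i.e.\ $|S| \leq K - 1$ — by cardinality, obtaining
\[
\max_{S \subset [K]} U_{6}(\theta, S, \epsilon)
= \max_{k = 0,\ldots,K-1}\; \max_{|S| = k} U_{6}(\theta, S, \epsilon)
= \max_{k = 0,\ldots,K-1} U_{6}(\theta, S_{k,\theta}, \epsilon),
\]
which is the asserted identity. I would also note that $|S| = K$ is excluded from the feasible family exactly because RRRR requires a nonempty $S^{c}$, so it does not affect either maximum. The only step that needs genuine care is the monotonicity claim — verifying that the coefficient of $p_{S}$ never becomes negative — and this reduces entirely to the sign facts $\epsilon_{2} \geq 0$ and $k \leq K - 1$; the rest is bookkeeping.
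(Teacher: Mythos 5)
Your proposal is correct and follows essentially the same route as the paper: decompose the global maximum by cardinality, observe that for fixed $|S|=k$ the utility is affine in $p_{S}=\sum_{i\in S}\theta_{i}$ with nonnegative slope (the paper phrases this as maximizing $a x + b(1-x)$ with $a>b>0$), and conclude that $S_{k,\theta}$ is optimal within each cardinality class. The only cosmetic difference is that you allow the slope to vanish at $k=K-1$ while the paper asserts strict dominance of the coefficients; both suffice.
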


\subsubsection{Semi-adaptive approach} \label{sec: Semi-adaptive approach}
We also consider a semi-adaptive approach which uses a fixed parameter $\alpha \in (0, 1)$ to select the smallest $S_{k, \theta}$ in \eqref{eq: Ss considered for RRRR} such that $\mathbb{P}_{\theta}(X \in S_{k, \theta}) \geq \alpha$, that is, $S = \{\sigma_{\theta}(1), \ldots, \sigma_{\theta}(k^{\ast})\}$ is taken such that
\[
\mathbb{P}_{\theta}(X \in \{\sigma_{\theta}(1), \ldots, \sigma_{\theta}(k^{\ast}-1)\}) < \alpha \text{ and } \mathbb{P}_{\theta}(X \in \{\sigma_{\theta}(1), \ldots, \sigma_{\theta}(k^{\ast})\}) \geq \alpha.
\] 
Again, the idea is to randomize the most likely values of $X$ with a high accuracy. The approach forms the subset $S$ by including values for $X$ in descending order of their probabilities (given by $\theta$) until the cumulative probability exceeds $\alpha$. In that way, it is expected to have set $S$ that is small-sized (especially when $\theta$ is unbalanced) and captures the most likely values of $X$. The resulting $S$ has varying cardinality depending on the sampled $\theta$ at the current time step.

We call this approach ``semi-adaptive'' because, while it still adapts to $\theta$, it uses the fixed parameter $\alpha$. As we will see in Section \ref{sec: Numerical Results}, the best $\alpha$ depends on various parameters such as $\epsilon$, $K$, and the degree of evenness in $\theta$.

\subsection{Computational complexity of utility functions} \label{sec: Computational complexity of utility functions}
We now provide the computational complexity analysis of the utility metrics presented in Section \ref{sec: Fisher Information Matrix}-\ref{sec: Probability of honest response}, and that of the semi-adaptive approach in Section \ref{sec: Semi-adaptive approach}, as a function of $K$. The first row of Table \ref{tab: computational complexity} shows the computational complexities of calculating the utility function for a fixed $S$, and the second row shows the complexities of choosing the best $S$ according to \eqref{eq: maximization of utility to choose S}. To find \eqref{eq: maximization of utility to choose S}, the utility function generally needs to be calculated $K$ times, which explains the additional $K$ factor in the computational complexities in the second row.

The least demanding utility function is $U_{6}$, that is based on $\mathbb{P}_{\theta}(Y = X)$, whose complexity is $\mathcal{O}(K)$. Moreover, finding the best $S$ can also be done in $\mathcal{O}(K)$ time because one can compute this utility metric for all $k = 0, \ldots, K-1$ by starting with $S = \emptyset$ and expanding it incrementally. Also note that the semi-adaptive approach does not use a utility metric and finding $k^{\ast}$ can be done in $\mathcal{O}(K)$ time by summing the components of $\theta$ from largest to smallest until the cumulative sum exceeds the given $\alpha$ parameter. So, its complexity is  $\mathcal{O}(K)$.

For all these approaches, it is additionally required to sort $\theta$ beforehand, which is an $\mathcal{O}(K \ln K)$ operation with an efficient sorting algorithm like \textit{merge sort}.

In practice, one can choose among these utility functions depending on the nature of the application.  When the number of categories $K$ or the arrival rate of sensitive data is large, we suggest using $U_{6}$ or a semi-adaptive approach. When $K$ and the arrival rate of the personal data are both small, the more computationally demanding utility functions can also be used.

\begin{table}
    \centering
    \caption{Computational complexity of utility functions and choosing $S$}
    \begin{tabular}{c c c c c c c c}
       \toprule
          &  Fisher & Entropy & $\text{TV}_{1}$ & $\text{TV}_{2}$ & $\text{MSE}$ & $\mathbb{P}_{\theta}(Y = X)$  & Semi-adaptive \\ \midrule
       Computing utility & $\mathcal{O}(K^{3})$ & $\mathcal{O}(K^{2})$ & $\mathcal{O}(K^{2})$ & $\mathcal{O}(K^{2})$ & $\mathcal{O}(K^{2})$ & $\mathcal{O}(K)$ &  NA  \\ \hline
        Choosing $S$ & $\mathcal{O}(K^{4})$ & $\mathcal{O}(K^{3})$ & $\mathcal{O}(K^{3})$ & $\mathcal{O}(K^{3})$ & $\mathcal{O}(K^{3})$ & $\mathcal{O}(K)$ &  $\mathcal{O}(K)$  \\ \bottomrule
    \end{tabular}
    \label{tab: computational complexity}
\end{table}

\begin{ex}[\textbf{Numerical illustration}] \label{ex: numerical illustration}
We close this section with an example that shows the benefit of \texttt{RRRR} and the role of $S$. We consider $\theta$ values such that $\theta_{i}/\theta_{i+1}$ is constant for $i = 1, \ldots, K-1$. The ratio $\theta_{i}/\theta_{i+1}$ controls the degree of `evenness' in $\theta$: The smaller ratio indicates a more evenly distributed $\theta$. Note that $\theta$ is already ordered in this example; hence, we consider using $S = \{1, \ldots, k\}$ which has the $k$ most likely values for $X$ according to $\theta$. Also, for a given $\epsilon$, we fix $\epsilon_{1} = 0.9 \epsilon$ and set $\epsilon_{2}$ according to \eqref{eq: epsilon2 after epsilon1}. 

Figure \ref{fig: Prob X is Y} shows, for a fixed $\epsilon$ and $K = 20$, and various values of $k$, the probability of the randomized response being equal to the sensitive information, i.e., $\mathbb{P}_{\theta}(Y = X)$ vs $\theta_{i}/\theta_{i+1}$ when $S = \{1, \ldots, k\}$ in RRRR. (Recall that this probability corresponds to $U_{6}(\theta, S, \epsilon)$.) Comparing this probability with $e^{\epsilon}/(e^{\epsilon}+K-1)$, the probability obtained with $Y = \text{SRR}(X; [K], \epsilon)$, it can be observed that RRRR can do significantly better than SRR if $k$ can be chosen suitably. The plots demonstrate that the ``suitable'' $k$ depends on $\theta$: While the best $k$ tends to be larger for more even $\theta$, small $k$ becomes the better choice for non-even $\theta$ (large $\theta_{i}/\theta_{i+1}$). This is because, when $\theta_{i}/\theta_{i+1}$ is large, the probability is concentrated on just a few components, and $S$ with a small $k$ captures most of the probability. Moreover, the plots for $\epsilon = 1$ and $\epsilon = 5$ also show the effect of the level of privacy. In more challenging scenarios where $\epsilon$ is smaller, the gain obtained by RRRR compared to SRR is bigger.

\begin{figure}[h]
\centerline{\includegraphics[width=0.5\textwidth]{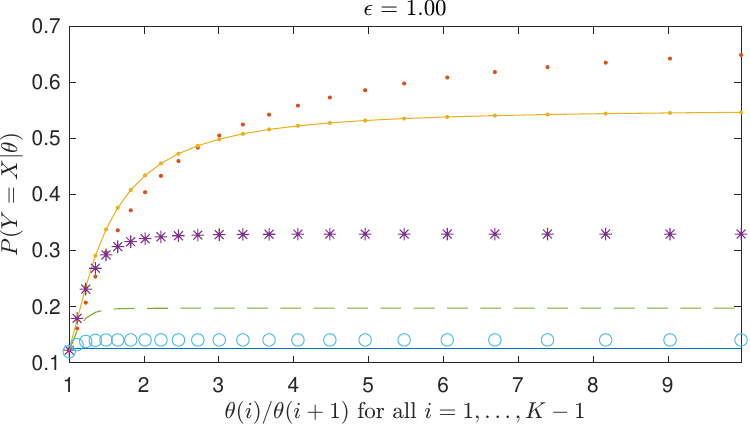} \includegraphics[width=0.5\textwidth]{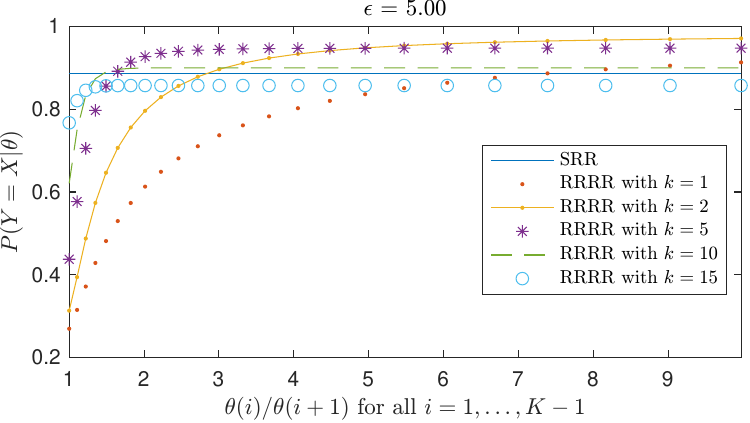}}
\caption{$\mathbb{P}_{\theta}(Y = X)$ vs $\theta_{i}/\theta_{i+1}$ for all $i = 1, \ldots, K-1$ with $K = 20$. Left: $\epsilon = 1$, Right: $\epsilon = 5$.}
\label{fig: Prob X is Y}
\end{figure}
\end{ex}

\section{Posterior sampling} \label{sec: Posterior sampling}
Steps 1-2 of AdOBEst-LDP in Algorithm \ref{alg: Differentially private online learning} were detailed in the previous section. In this section, we provide the details of Step 3.

Step 3 of AdOBEst-LDP requires sampling from the posterior distribution $\Pi(\cdot | Y_{1:n}, S_{1:n})$ of $\theta$ given $Y_{1:n}$ and $S_{1:n}$ for $n \geq 1$, where $S_{t}$ is the subset selected at time $t$ to generate $Y_{t}$ from $X_{t}$. Let $\pi(\theta | Y_{1:n}, S_{1:n})$ denote the pdf of $\Pi(\cdot | Y_{1:n}, S_{1:n})$. Given $Y_{1:n} = y_{1:n}$ and $S_{1:n} = s_{1:n}$, the posterior density can be written as 
\begin{align}
\pi(\theta | y_{1:n}, s_{1:n}) & \propto \eta(\theta) \prod_{t = 1}^{n} h_{s_{t}, \epsilon}(y_{t}| \theta). \label{eq: posterior of theta given Y and S}
\end{align}
Note that the right-hand side does not include a transition probability for $S_{t}$'s because the sampling procedure of $S_{t}$ given $Y_{1:t-1}$ and $S_{1:t-1}$ does \emph{not} depend on $\theta^{\ast}$. Furthermore, we assume that the prior distribution $\eta(\theta)$ is a Dirichlet distribution $\theta \sim \textup{Dir}(\rho_{1}, \ldots, \rho_{K})$ with prior hyperparameters $\rho_{k} > 0$, for $k = 1, \ldots, K$.

Unfortunately, the posterior distribution in \eqref{eq: posterior of theta given Y and S} is intractable. Therefore, we resort to approximate sampling approaches using MCMC. Below, we present two MCMC methods, namely SGLD and Gibbs sampling.

\subsection{Stochastic gradient Langevin dynamics} \label{sec: Stochastic gradient Langevin dynamics}

SGLD is an asymptotically exact gradient-based MCMC sampling approach that enables the use of subsamples of size $m \ll t$. A direct application of SGLD to generate samples for $\theta$ from the posterior distribution in \eqref{eq: posterior of theta given Y and S} is difficult. This is because $\theta$ lives in the probability simplex $\Delta$, which makes the task of keeping the iterates for $\theta$ inside $\Delta$ challenging. We overcome this problem by defining the surrogate variables $\phi_{1}, \ldots, \phi_{K}$ with
\[
\phi_{k} \overset{\text{ind.}}{\sim} \text{Gamma}(\rho_{k}, 1), \quad k = 1, \ldots, K,
\]
and the mapping from $\phi$ to $\theta$ as
\begin{equation} \label{eq: mapping from phi to theta}
\theta(\phi)_{k} := \frac{\phi_{k}}{\sum_{j = 1}^{K} \phi_{j}}, \quad k = 1, \ldots, K.
\end{equation}
It is well-known that the resulting $(\theta_{1}, \ldots, \theta_{K})$ has a Dirichlet distribution $\text{Dir}(\rho_{1}, \ldots, \rho_{K})$, which is exactly the prior distribution $\eta(\theta)$. Therefore, this change of variables preserves the originally constructed probabilistic model. Moreover, since $\phi = (\phi_{1}, \ldots, \phi_{K})$ takes values in $[0, \infty)^{K}$, we run SGLD for $\phi$, where the $j$'th update is 
\begin{equation} \label{eq: SGLD with phi}
\phi^{(j)} = \left\vert \phi^{(j-1)} + \frac{\gamma_{n}}{2} \left( \nabla_{\phi} \ln p(\phi^{(j-1)}) +\frac{n}{m} \sum_{i = 1}^{m} \nabla_{\phi} \ln p_{s_{u_{i}}, \epsilon}(y_{u_{i}} | \phi^{(j-1)}) \right)+ \gamma_{n} W_{j} \right\vert, \quad W_{j} \sim \mathcal{N}(0, I_{K}).
\end{equation}
where $u = (u_{1}, \ldots, u_{m})$ is a random subsample of $\{1, \ldots, n\}$. In \eqref{eq: SGLD with phi}, the `new' prior and likelihood functions are
\begin{equation} \label{eq: prior and likelihood wrt phi}
p(\phi) := \prod_{k = 1}^{K} \text{Gamma}(\phi_{k}; \alpha_{i}, 1), \quad  p_{s, \epsilon}(y | \phi) := h_{s, \epsilon}(y | \theta(\phi)).
\end{equation}
The reflection in \eqref{eq: SGLD with phi} via taking the component-wise absolute value is necessary because each $\phi_{k}^{(j)}$ must be positive. Step 3 of Algorithm \ref{alg: Differentially private online learning} can be approximated by running SGLD for some $M > 0$ iterations. To exploit the SGLD updates from the previous time, one should start the updates at time $n$ by setting the initial value for $\phi$ to the last SGLD iterate at time $n-1$.

The next proposition provides the explicit formulae for the gradients of the log-prior and the log-likelihood of $\phi$ in \eqref{eq: SGLD with phi}. A proof is given in Appendix \ref{sec: Proof for SGLD update}.
\begin{prop} \label{prop: gradient of the prior and likelihood of phi}
For $p(\phi)$ and in $p(y | \phi)$ in \eqref{eq: prior and likelihood wrt phi}, we have
\[
[\nabla_{\phi} \ln p(\phi)]_{i} = \frac{\alpha_{i}-1}{\phi_{i}} - 1, \quad [\nabla_{\phi} \ln p(y | \phi)]_{i} = \sum_{k = 1}^{K-1} J(i, k) \frac{g_{S, \epsilon}(y | k) - g_{S, \epsilon}(y | K)}{h_{S, \epsilon}(y | \theta(\phi))},
\]
where $J$ is a $K \times (K-1)$ Jacobian matrix whose $(i, j)$th element is
\[
J(i, j) = \mathbb{I}(i=j)\frac{1}{\sum_{k = 1}^{K} \phi_{k}} - \frac{\phi_{j}}{\left(\sum_{k = 1}^{K} \phi_{k} \right)^{2}}.
\]
\end{prop}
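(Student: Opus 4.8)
The plan is to compute the two gradients separately, each being a direct application of elementary calculus and the chain rule.

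For the log-prior, I would start from the Gamma density in \eqref{eq: prior and likelihood wrt phi} and write $\ln p(\phi) = \sum_{k=1}^{K} \left[ (\alpha_k - 1)\ln \phi_k - \phi_k - \ln \Gamma(\alpha_k) \right]$. Only the $k = i$ summand depends on $\phi_i$, and differentiating it gives $[\nabla_\phi \ln p(\phi)]_i = (\alpha_i - 1)/\phi_i - 1$ immediately.

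For the log-likelihood, the key preliminary step is to make explicit that $\theta(\phi)$ has only $K-1$ free coordinates, since $\theta_K(\phi) = 1 - \sum_{j=1}^{K-1}\theta_j(\phi)$. Substituting this into $h_{S,\epsilon}(y \mid \theta) = \sum_{x=1}^{K} g_{S,\epsilon}(y \mid x)\theta_x$ yields
\[
h_{S,\epsilon}(y \mid \theta(\phi)) = g_{S,\epsilon}(y \mid K) + \sum_{k=1}^{K-1} \bigl( g_{S,\epsilon}(y \mid k) - g_{S,\epsilon}(y \mid K) \bigr)\, \theta_k(\phi),
\]
so that $\partial h_{S,\epsilon}(y \mid \theta)/\partial \theta_k = g_{S,\epsilon}(y \mid k) - g_{S,\epsilon}(y \mid K)$ for $k = 1, \ldots, K-1$. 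This substitution is precisely what produces the differences $g_{S,\epsilon}(y \mid k) - g_{S,\epsilon}(y \mid K)$ in the statement rather than bare terms $g_{S,\epsilon}(y \mid k)$, and it is the one place where care is needed. Next I would compute the Jacobian of $\phi \mapsto (\theta_1(\phi), \ldots, \theta_{K-1}(\phi))$: writing $\theta_j(\phi) = \phi_j / \sum_{k=1}^{K}\phi_k$ and applying the quotient rule gives $\partial \theta_j / \partial \phi_i = \mathbb{I}(i = j)/(\sum_k \phi_k) - \phi_j/(\sum_k \phi_k)^2$, which is exactly the entry $J(i,j)$ from the statement (a $K \times (K-1)$ matrix, rows indexed by the $\phi$-coordinates $i = 1, \ldots, K$ and columns by the free $\theta$-coordinates $j = 1, \ldots, K-1$). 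Combining the two pieces via the chain rule, $\partial \ln p(y \mid \phi)/\partial \phi_i = h_{S,\epsilon}(y \mid \theta(\phi))^{-1} \sum_{k=1}^{K-1} (\partial h/\partial \theta_k)(\partial \theta_k/\partial \phi_i)$, and inserting the two expressions above gives the claimed formula.

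Since every step is an elementary derivative, there is no genuine obstacle; the only pitfall to watch is the dimensional bookkeeping — keeping $\theta_K$ as a dependent coordinate so that the intermediate sum runs over $k = 1, \ldots, K-1$, and matching the index conventions of the $K \times (K-1)$ Jacobian $J$ with those of the matrix $A_{S,\epsilon}$ from Proposition \ref{prop: FIM of RRRR}, whose $(i,j)$ entry is the same difference $g_{S,\epsilon}(i \mid j) - g_{S,\epsilon}(i \mid K)$. (As a sanity check, one could alternatively keep all $K$ coordinates and use $\sum_{k=1}^{K} \partial \theta_k/\partial \phi_i = \partial(\sum_k \theta_k)/\partial \phi_i = 0$ to recover the same reduction.)
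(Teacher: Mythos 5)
Your proposal is correct and follows essentially the same route as the paper: differentiate the Gamma log-density termwise for the prior, and for the likelihood apply the chain rule through the map $\phi \mapsto \vartheta$, combining the quotient-rule Jacobian with the score $[\nabla_{\vartheta}\ln h_{S,\epsilon}(y|\theta)]_{k} = (g_{S,\epsilon}(y|k)-g_{S,\epsilon}(y|K))/h_{S,\epsilon}(y|\theta)$, which the paper imports from its equation \eqref{eq: score vector} and you rederive inline. The only difference is cosmetic (you make the elimination of $\theta_{K}$ explicit rather than citing the earlier score-vector formula), so nothing further is needed.
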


\subsection{Gibbs sampling} \label{sec: Gibbs sampling  }

An alternative to SGLD is the Gibbs sampler, which operates on the joint posterior distribution of $\theta$ and $X_{1:n}$ given $Y_{1:n} = y_{1:n}$ and $S_{1:n} = s_{1:n}$,
\begin{equation*}
p(\theta, x_{1:n} | y_{1:n}, s_{1:n}) \propto \eta(\theta) \left[ \prod_{t = 1}^{n} \theta_{x_{t}} g_{s_{t},\epsilon}(y_{t} | x_{t}) \right].
\end{equation*}
The full conditional distributions of $X_{1:n}$ and $\theta$ are tractable. Specifically, for $X_{1:n}$, we have
\begin{equation} \label{eq: full conditional of X given Y and theta}
p(x_{1:n} | y_{1:n}, s_{1:n}, \theta) = \prod_{t= 1}^{n} p_{s_{t}, \epsilon}(x_{t} | y_{t}, \theta),
\end{equation}
where $p_{s_{t}, \epsilon}(x_{t} | y_{t}, \theta)$ is defined in \eqref{eq: post of x given y and theta}. Therefore, \eqref{eq: full conditional of X given Y and theta} is a product of $n$ categorical distributions, each with support $[K]$. Furthermore, the full conditional distribution of $\theta$ is a Dirichlet distribution due to the conjugacy between the categorical and the Dirichlet distributions. Specifically, 
\[
p(\theta | x_{1:n}, y_{1:n}, s_{1:n}) = \text{Dir}(\theta  | \rho^{\text{post}}_{1}, \ldots, \rho^{\text{post}}_{K}),
\]
where the hyperparameters of the posterior distribution are given by $\rho^{\text{post}}_{k} := \rho_{k} +  \sum_{t = 1}^{n} \mathbb{I}(x_{t} = k)$ for $k = 1, \ldots, K$. 

Computational load at time $t$ of sampling from $t$ distributions in \eqref{eq: full conditional of X given Y and theta}, is proportional to $t K$, which renders the computational complexity of Gibbs sampling $\mathcal{O}(n^{2} K)$ after $n$ time steps. This can be computationally prohibitive when $n$ gets large. 

\section{Theoretical analysis} \label{sec: Theoretical analysis}

We address two questions concerning AdOBEst-LDP in Algorithm \ref{alg: Differentially private online learning} when it is run with RRRR whose subset is selected as described in Section \ref{sec: Subset selection for RRRR}. (i) Does the targeted posterior distribution based on the observations generated by Algorithm \ref{alg: Differentially private online learning} converge to the true value $\theta^{\ast}$?
(ii) How frequently does Algorithm \ref{alg: Differentially private online learning} with RRRR select the optimum subset $S$ according to the chosen utility function?

\subsection{Convergence of the posterior distribution} \label{sec: Convergence of the posterior distribution}
We begin by developing the joint probability distribution of the random variables involved in AdOBEst-LDP. 
\begin{itemize}
\item Given $Y_{1:n}$ and $S_{1:n}$, the posterior distribution $\Pi(\cdot | Y_{1:n}, S_{1:n})$ is defined such that for any measurable set $A \subseteq \Delta$, the posterior probability of $\{ \theta \in A\}$ is given by
\begin{equation} \label{eq: posterior pdf}
\Pi(A | Y_{1:n}, S_{1:n}) := \frac{\int_{A}  \eta(\theta) \prod_{t = 1}^{n} h_{S_{t}, \epsilon}(Y_{t}| \theta)  \mathrm{d}\theta}{\int_{\Delta} \eta(\theta) \prod_{t = 1}^{n} h_{S_{t}, \epsilon}( Y_{t} | \theta) \mathrm{d}\theta}.
\end{equation}
\item Let $Q(\cdot  | Y_{1:n}, S_{1:n}, \Theta_{n-1})$ be the probability distribution corresponding to the posterior sampling process for $\Theta_{n}$. Note that if exact posterior sampling were used, we would have $Q(A | Y_{1:n}, S_{1:n}, \Theta_{n-1}) = \Pi(A | Y_{1:n}, S_{1:n})$; however, when approximate sampling techniques are used to target $\Pi$, such as SGLD or Gibbs sampling, the equality does not hold in general.
\item For $\theta \in \Delta$, let
\[
S^{\ast}_{\theta} := \{ \sigma_{\theta}(1), \ldots, \sigma_{\theta}(k_{\theta}^{\ast}) \}, \quad \text{with} \quad k^{\ast}_{\theta} := \arg \max_{k \in \{0, \ldots, K-1 \}} U(\theta, S_{k, \theta}, \epsilon),
\]
be the best subset according to $\theta$, where $S_{k, \theta} = \{\sigma_{\theta}(1), \ldots, \sigma_{\theta}(k) \}$ is defined in \eqref{eq: Ss considered for RRRR}. Given $\Theta_{1:t-1}$ and $Y_{1:t}$, $S_{t}$ depends only on $\Theta_{t-1}$ and it is given by $S_{t} = S^{\ast}_{\Theta_{t-1}}$.
\end{itemize}
Combining all, the joint law of $S_{1:n}, Y_{1:n}$ can be expressed as 
\begin{equation} \label{eq: joint law of Y S and theta}
P_{\theta^{\ast}}(S_{1:n}, Y_{1:n}) :=  \prod_{t = 1}^{n}  h_{S_{t}, \epsilon}(Y_{t} | \theta^{\ast}) \left[ \int_{\Delta}\mathbb{I}(S_{t} = S_{k^{\ast},\theta_{t-1}})  Q( \mathrm{d}\theta_{t-1} | Y_{1:t-1}, S_{1:t-1}, \theta_{t-2}) \right],
\end{equation}
where we use the convention that $Q( \mathrm{d}\theta_{0} | Y_{1:0}, S_{1:0}, \theta_{-1})  = \delta_{\theta_{\text{init}}}(\mathrm{d}\theta_{0})$ for an initial value $\theta_{\text{init}} \in \Delta$.

The posterior probability in \eqref{eq: posterior pdf} is a random variable with respect to $P_{\theta^{\ast}}$ defined in \eqref{eq: joint law of Y S and theta}. Theorem \ref{thm: convergence to posterior} establishes that under the fairly mild Assumption \ref{asmp: prior has mass around theta} on the prior, the $\Pi(\cdot | Y_{1:n}, S_{1:n})$ converges to $\theta^{\ast}$ regardless of the choice of $Q$ for posterior sampling. 

\begin{asmp} \label{asmp: prior has mass around theta}
There exist finite positive constants $d > 0$ and $B > 0$ such that $\eta(\theta)/\eta(\theta') < B$ for all $\theta, \theta' \in \Delta$ whenever $\| \theta' - \theta^{\ast} \| < d$.
\end{asmp}
\begin{thm} \label{thm: convergence to posterior}
Under Assumption \ref{asmp: prior has mass around theta}, there exists a constant $c > 0$ such that, for any $0 < a < 1$ and the sequence of sets
\[
\Omega_{n} = \{ \theta \in \Delta: \|\theta - \theta^{\ast}\|^{2} \leq c n^{-a} \},
\]
the sequence of probabilities
\[
\lim_{n \rightarrow \infty} \Pi(\Omega_{n} | Y_{1:n}, S_{1:n}) \overset{P_{\theta^{\ast}}}{\rightarrow} 1,
\]
regardless of the choice of $Q$.
\end{thm}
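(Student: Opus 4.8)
The plan is to reduce the statement to a concentration claim about the log-likelihood-ratio process $\ell_n(\theta):=\sum_{t=1}^n Z_t(\theta)$ with $Z_t(\theta):=\ln\{h_{S_t,\epsilon}(Y_t\mid\theta)/h_{S_t,\epsilon}(Y_t\mid\theta^\ast)\}$, and then to control its predictable and martingale parts separately. Dividing numerator and denominator in \eqref{eq: posterior pdf} by $\prod_t h_{S_t,\epsilon}(Y_t\mid\theta^\ast)$ — strictly positive by $\epsilon$-LDP — gives $\Pi(\Omega_n^c\mid Y_{1:n},S_{1:n})=\int_{\Omega_n^c}\eta(\theta)e^{\ell_n(\theta)}\mathrm{d}\theta\big/\int_{\Delta}\eta(\theta)e^{\ell_n(\theta)}\mathrm{d}\theta$, so it suffices to show, on events of $P_{\theta^\ast}$-probability tending to $1$, that the numerator is exponentially small and the denominator is not too small. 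Let $\mathcal{F}_{t}$ be the natural filtration; by the construction in \eqref{eq: joint law of Y S and theta}, $S_t$ is $\mathcal{F}_{t-1}$-measurable and $Y_t\mid\mathcal{F}_{t-1}\sim h_{S_t,\epsilon}(\cdot\mid\theta^\ast)$, so $\mathbb{E}[Z_t(\theta)\mid\mathcal{F}_{t-1}]=-D_t(\theta)$ with $D_t(\theta):=D_{\mathrm{KL}}(h_{S_t,\epsilon}(\cdot\mid\theta^\ast)\,\|\,h_{S_t,\epsilon}(\cdot\mid\theta))\ge 0$. Decompose $\ell_n(\theta)=-D_n(\theta)+M_n(\theta)$, where $D_n(\theta):=\sum_t D_t(\theta)$ is predictable and $M_n(\theta):=\sum_t\{Z_t(\theta)+D_t(\theta)\}$ is a martingale in $n$ for each fixed $\theta$.

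The first ingredient is a set of $S$-uniform deterministic bounds. Since the mechanism is $\epsilon$-LDP, each $g_{S,\epsilon}(y\mid x)$, hence each marginal $h_{S,\epsilon}(y\mid\theta)$, lies in a fixed interval $[c_1,c_2]\subset(0,1)$ over all admissible subsets, all $y,x$, and all $\theta\in\Delta$; as only finitely many subsets arise in \eqref{eq: Ss considered for RRRR}, Proposition~\ref{prop: FIM is invertible} together with continuity of the Fisher information on the compact set $\Delta$ yields a uniform lower bound on its smallest eigenvalue. A second-order Taylor expansion of $\theta\mapsto D_{\mathrm{KL}}(h_{S,\epsilon}(\cdot\mid\theta^\ast)\,\|\,h_{S,\epsilon}(\cdot\mid\theta))$ (whose Hessian is sandwiched between positive multiples of the identity, using the eigenvalue bound and $h_{S,\epsilon}\ge c_1$) then gives constants $0<\mu_0\le C_0$ with $\mu_0\|\theta-\theta^\ast\|^2\le D_t(\theta)\le C_0\|\theta-\theta^\ast\|^2$ for every $t$, hence $D_n(\theta)\ge\mu_0 n\|\theta-\theta^\ast\|^2$. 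The same boundedness gives $\mathrm{Var}(Z_t(\theta)\mid\mathcal{F}_{t-1})\le C_V\|\theta-\theta^\ast\|^2$, the score $\nabla_\theta\ln h_{S_t,\epsilon}(Y_t\mid\theta^\ast)$ has zero $\mathcal{F}_{t-1}$-conditional mean and conditional second moment $\le C_G$ (so $\mathbb{E}_{\theta^\ast}\|\sum_t\nabla_\theta\ln h_{S_t,\epsilon}(Y_t\mid\theta^\ast)\|^2\le C_G n$, the advertised $\mathcal{O}(n)$ fact), and both $M_n$ and $D_n$ are $\mathcal{O}(n)$-Lipschitz in $\theta$ with explicit constants.

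For the numerator, fix $\theta\in\Omega_n^c$, so $D_n(\theta)>\mu_0 c\,n^{1-a}$. A Bernstein-type inequality for martingales (Freedman's inequality) applied to $M_n(\theta)$ with conditional-variance bound $\sum_t\mathrm{Var}(Z_t(\theta)\mid\mathcal{F}_{t-1})\le C_V n\|\theta-\theta^\ast\|^2$ and bounded increments shows $M_n(\theta)\le\tfrac12 D_n(\theta)$ except on an event of probability $\le e^{-c''n\|\theta-\theta^\ast\|^2}\le e^{-c''c\,n^{1-a}}$. A union bound over a $1/n$-net of $\Omega_n^c$ (polynomially many points), combined with the $\mathcal{O}(n)$-Lipschitz control that makes discretization error $\mathcal{O}(1)$, upgrades this to: on an event $A_n'$ with $P_{\theta^\ast}(A_n')\to1$, $\ell_n(\theta)\le-\tfrac12 D_n(\theta)+\mathcal{O}(1)\le-\tfrac{\mu_0}{2}n\|\theta-\theta^\ast\|^2+\mathcal{O}(1)$ for \emph{all} $\theta\in\Omega_n^c$; integrating the resulting Gaussian-type tail gives $\int_{\Omega_n^c}\eta(\theta)e^{\ell_n(\theta)}\mathrm{d}\theta\le e^{-\mu_0 c\,n^{1-a}/4}$ for $n$ large. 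For the denominator, restrict the integral to $B_n:=\{\|\theta-\theta^\ast\|\le n^{-1/2}\}\cap\Delta$, where $D_n(\theta)\le C_0$, $\eta(\theta)\ge\eta(\theta^\ast)/B$ by Assumption~\ref{asmp: prior has mass around theta} once $n^{-1/2}<d$, and $\mathrm{Vol}(B_n)\gtrsim n^{-(K-1)/2}$; a martingale maximal inequality over a $1/n$-net of $B_n$ gives $\inf_{B_n}M_n=-\mathcal{O}(\sqrt{\log n})$ on an event $A_n$ with $P_{\theta^\ast}(A_n)\to1$, whence $\int_{\Delta}\eta(\theta)e^{\ell_n(\theta)}\mathrm{d}\theta\ge e^{-\mathcal{O}(\log n)}$ on $A_n$. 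On $A_n\cap A_n'$ the ratio is at most $e^{-\mu_0 c\,n^{1-a}/4+\mathcal{O}(\log n)}\to0$ since $a<1$, which is the claim after passing to complements.

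I expect the principal obstacle to be the $S$-uniform quadratic \emph{lower} bound $D_t(\theta)\ge\mu_0\|\theta-\theta^\ast\|^2$: it is exactly what lets the predictable drift $D_n$ outrun the $\mathcal{O}(\sqrt{n}\,\|\theta-\theta^\ast\|)$ martingale fluctuation of $M_n$ for \emph{every} $a<1$, and it must be extracted carefully from Proposition~\ref{prop: FIM is invertible} — being mindful that the candidate subsets $S_{k,\theta}$ in \eqref{eq: Ss considered for RRRR} vary with $\theta$ (only finitely many occur, so a uniform constant exists) and that the bound is needed globally on $\Delta$, not merely near $\theta^\ast$. The remaining steps — Freedman's inequality, the $1/n$-net union bounds exploiting $\mathcal{O}(n)$-Lipschitzness, and the Gaussian-tail integral — are routine once these constants are in hand.
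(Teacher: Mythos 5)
Your proposal is correct in its essentials, but it takes a genuinely different route from the paper's proof. You perform a Doob decomposition of the log-likelihood ratio $\ell_n(\theta)$ into a predictable KL drift $D_n(\theta)$ plus a martingale $M_n(\theta)$, lower-bound the per-step KL by $\mu_0\|\theta-\theta^\ast\|^2$ uniformly over the finitely many admissible subsets, and obtain uniformity in $\theta$ via Freedman's inequality combined with a $1/n$-net union bound, finishing with a separate small-ball lower bound on the denominator. The paper instead proves a pointwise ``strong concavity'' inequality for $\ln h_{S,\epsilon}(y\mid\cdot)$ with quadratic remainder $m_{0}(h_{S,\epsilon}(y\mid\theta)-h_{S,\epsilon}(y\mid\theta^{\ast}))^{2}$ (Lemma \ref{lem: concavity}) together with $L$-smoothness (Lemma \ref{lem: smoothness of h}); it controls the random sum $\sum_{t}V_{t}(\theta,\theta^{\ast})$ from below via a concentration inequality for sums of \emph{dependent} bounded variables (Theorem \ref{thm: Hoeffding dependent}) rather than a martingale Bernstein bound; it handles the fluctuation term through the second moment of $\nabla_{\vartheta}\Phi_{n}(\theta^{\ast})$ (Lemma \ref{lem: expectation of F}) plus Markov's inequality; and, most distinctively, it avoids chaining and nets altogether by a uniform-integrability (Vitali) argument on $\int_{\Delta}e^{nm_{0}\mathcal{E}^{\mu}_{n}(\theta,\theta^{\ast})}\mathrm{d}\theta$ (Lemma \ref{lem: convergence of integral of expE}), comparing the posterior masses of an outer set $A_{n}$ and an inner set $B_{n}$ directly. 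Both proofs rest on the same two quantitative pillars: invertibility of $G_{S,\epsilon}$ uniformly over the $2^{K}$ subsets (which produces your $\mu_{0}$ and the paper's $c_{u}$ in Lemma \ref{lem: Lower bound for EU}) and an $\mathcal{O}(n)$ bound on the accumulated fluctuations. Your template is the more standard posterior-contraction argument and delivers an explicit bound on $\Pi(\Omega_{n}^{c}\mid Y_{1:n},S_{1:n})$ that is exponentially small in $n^{1-a}$; its price is the routine but nontrivial bookkeeping of net cardinality $n^{\mathcal{O}(K)}$ against tails $e^{-c''c\,n^{1-a}}$, the $\mathcal{O}(1)$ discretization error from $\mathcal{O}(n)$-Lipschitzness, and the corner-of-the-simplex volume bound for $B_{n}$, all of which do go through.

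One point to make fully explicit when writing it up: the quadratic lower bound $D_{t}(\theta)\geq\mu_{0}\|\theta-\theta^{\ast}\|^{2}$ must hold \emph{globally} on $\Delta$, not merely near $\theta^{\ast}$, since $\Omega_{n}^{c}$ contains points far from $\theta^{\ast}$. This is where the linearity of $\theta\mapsto h_{S,\epsilon}(\cdot\mid\theta)$ matters: the Hessian of the KL has no second-derivative term in $h$, so it equals $\sum_{y}h_{S,\epsilon}(y\mid\theta^{\ast})\,\nabla_{\vartheta}h_{S,\epsilon}(y\mid\theta)\nabla_{\vartheta}h_{S,\epsilon}(y\mid\theta)^{\top}/h_{S,\epsilon}(y\mid\theta)^{2}$ and is uniformly positive definite on all of $\Delta$ by Lemma \ref{lem: bounds for Gtheta} and the invertibility of $G_{S,\epsilon}$ (Lemma \ref{lem: G is invertible}); alternatively, Pinsker's inequality plus $\|G_{S,\epsilon}(\theta-\theta^{\ast})\|^{2}\geq\lambda_{\min}(G_{S,\epsilon}^{\top}G_{S,\epsilon})\|\theta-\theta^{\ast}\|^{2}$ gives the same conclusion without any Taylor expansion. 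A local expansion around $\theta^{\ast}$, as your wording suggests, would not suffice on its own.
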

A proof is given in Appendix \ref{appndx: Convergence of the posterior distribution}, where the constant $c$ in the sets $\Omega_{n}$ is explicitly given. 

\subsection{Selecting the best subset} \label{sec: Selecting the best subset}

Let $S^{\ast} := S^{\ast}_{\theta^{\ast}}$ be the best subset at $\theta^{\ast}$. In this part, we prove that if posterior sampling is performed exactly, the best subset is chosen with an expected long-run frequency of $1$. Our result relies on some mild assumptions.
\begin{asmp} \label{asmp: ordered theta}
The components of $\theta^{\ast}$ are strictly ordered, that is, $\theta_{\sigma_{\theta^{\ast}}(1)} > \ldots > \theta_{\sigma_{\theta^{\ast}}(K)}$.
\end{asmp}
\begin{asmp} \label{asmp: continuity of U}
Given any $S \subset [K]$ and $\epsilon > 0$, $U(\theta, S, \epsilon)$ is a continuous function of $\theta$ with respect to the $L_{2}$-norm.
\end{asmp}
\begin{asmp} \label{asmp: unique maximizer}
The solution of \eqref{eq: maximization of utility to choose S} is unique at $\theta^{\ast}$.
\end{asmp}
Assumption \ref{asmp: ordered theta} is required to avoid technical issues regarding the uniqueness of $S^{\ast}$. Assumptions \ref{asmp: continuity of U} and \ref{asmp: unique maximizer} impose a certain form of regularity on the utility function.
\begin{thm} \label{thm: choosing the best subset}
Suppose Assumptions \ref{asmp: prior has mass around theta}-\ref{asmp: unique maximizer} hold and $\Theta_{t}$s are generated by exact sampling, that is, $Q(A | Y_{1:t}, S_{1:t}) = \Pi(A | Y_{1:t}, S_{1:t})$ for all measurable $A \subseteq \Delta$. Then,
\begin{equation} \label{eq: convergence of probability}
\lim_{n \rightarrow \infty} P_{\theta^{\ast}}(S_{n} = S^{\ast}) \rightarrow 1.
\end{equation}
As a corollary, $S^{\ast}$ is selected with an expected long-run frequency of $1$, that is,
\begin{equation} \label{eq: sublinear regret}
\lim_{n \rightarrow \infty} \frac{1}{n} \sum_{t = 1}^{n} E_{\theta^{\ast}}\left[ \mathbb{I}(S_{t} = S^{\ast}) \right]= 1.
\end{equation}
\end{thm}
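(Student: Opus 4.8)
The plan is to reduce the statement to the posterior concentration result of Theorem~\ref{thm: convergence to posterior} by first proving a deterministic ``stability'' lemma: the subset-selection map $\theta \mapsto S^{\ast}_{\theta}$ is constant and equal to $S^{\ast}$ on a sufficiently small $L_{2}$-ball around $\theta^{\ast}$.

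I would prove this lemma in two stages. Stage one handles the permutation: since Assumption~\ref{asmp: ordered theta} gives the strict chain $\theta^{\ast}_{\sigma_{\theta^{\ast}}(1)} > \cdots > \theta^{\ast}_{\sigma_{\theta^{\ast}}(K)}$, continuity of the coordinate functions yields a radius $\delta_{1} > 0$ such that the same ordering holds for every $\theta$ with $\|\theta - \theta^{\ast}\| < \delta_{1}$; hence $\sigma_{\theta} = \sigma_{\theta^{\ast}}$ and therefore $S_{k,\theta} = S_{k,\theta^{\ast}}$ for all $k = 0, \ldots, K-1$ on that ball. Stage two handles the argmax: there $U(\theta, S_{k,\theta}, \epsilon) = U(\theta, S_{k,\theta^{\ast}}, \epsilon)$, and by Assumption~\ref{asmp: continuity of U} each of the finitely many functions $\theta \mapsto U(\theta, S_{k,\theta^{\ast}}, \epsilon)$ is continuous at $\theta^{\ast}$. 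Assumption~\ref{asmp: unique maximizer} provides a strictly positive gap $2\eta_{0} := U(\theta^{\ast}, S_{k^{\ast}, \theta^{\ast}}, \epsilon) - \max_{k \ne k^{\ast}} U(\theta^{\ast}, S_{k, \theta^{\ast}}, \epsilon)$; choosing $\delta \le \delta_{1}$ small enough that each of the $K$ utilities stays within $\eta_{0}$ of its value at $\theta^{\ast}$ makes $k^{\ast}$ remain the unique maximizer at every $\theta$ in the ball, so $S^{\ast}_{\theta} = S_{k^{\ast}, \theta} = S_{k^{\ast}, \theta^{\ast}} = S^{\ast}$. This establishes $\{\|\theta - \theta^{\ast}\| < \delta\} \subseteq \{S^{\ast}_{\theta} = S^{\ast}\}$.

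Next I would bring in the probabilistic input. Since $S_{n} = S^{\ast}_{\Theta_{n-1}}$ is a deterministic function of $\Theta_{n-1}$ and posterior sampling is exact, $\Theta_{n-1} \sim \Pi(\cdot \mid Y_{1:n-1}, S_{1:n-1})$, so
\[
P_{\theta^{\ast}}(S_{n} = S^{\ast}) \ge P_{\theta^{\ast}}(\|\Theta_{n-1} - \theta^{\ast}\| < \delta) = E_{\theta^{\ast}}\!\left[ \Pi\big( \{\|\theta - \theta^{\ast}\| < \delta\} \mid Y_{1:n-1}, S_{1:n-1} \big) \right].
\]
For $n$ large enough that $c\,(n-1)^{-a} < \delta^{2}$ we have $\Omega_{n-1} \subseteq \{\|\theta - \theta^{\ast}\| < \delta\}$, so the integrand dominates $\Pi(\Omega_{n-1} \mid Y_{1:n-1}, S_{1:n-1})$, which by Theorem~\ref{thm: convergence to posterior} converges to $1$ in $P_{\theta^{\ast}}$-probability. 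Being bounded in $[0,1]$, it also converges in expectation (bounded convergence), hence $P_{\theta^{\ast}}(S_{n} = S^{\ast}) \to 1$, giving \eqref{eq: convergence of probability}. The corollary \eqref{eq: sublinear regret} then follows from the Cesàro lemma applied to $a_{t} := E_{\theta^{\ast}}[\mathbb{I}(S_{t} = S^{\ast})] = P_{\theta^{\ast}}(S_{t} = S^{\ast})$.

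I expect the only delicate point to be the stability lemma, and specifically that the permutation map and the argmax map must be made stable \emph{on a common radius} $\delta$ — which is precisely why Assumptions~\ref{asmp: ordered theta}--\ref{asmp: unique maximizer} are imposed together; once such a $\delta$ is secured, everything else is routine. A secondary point to be careful about is the passage from convergence in probability of the (bounded) posterior mass $\Pi(\Omega_{n-1} \mid \cdot)$ to convergence of its expectation.
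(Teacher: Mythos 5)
Your proposal is correct and follows essentially the same route as the paper's proof: a stability argument showing $S^{\ast}_{\theta} = S^{\ast}$ on a small ball around $\theta^{\ast}$ (combining the strict ordering, continuity of the utilities, and the uniqueness gap to pick a common radius $\delta$), followed by the reduction $P_{\theta^{\ast}}(S_{n} = S^{\ast}) \geq E_{\theta^{\ast}}[\Pi(\Omega_{n-1} \mid \cdot)]$ under exact sampling, Theorem~\ref{thm: convergence to posterior}, and a Ces\`aro argument. The only cosmetic difference is that you invoke bounded convergence to pass from convergence in probability of the posterior mass to convergence of its expectation, whereas the paper does this by a direct $\sqrt{1-\varepsilon}$ estimate; both are valid.
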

The result in \eqref{eq: sublinear regret} can be likened to sublinear regret from the reinforcement learning theory.

\section{Numerical results} \label{sec: Numerical Results}
We tested\footnote{The MATLAB code at \url{https://github.com/soneraydin/AdOBEst_LDP} can be used to reproduce the results obtained in this paper.} the performance of AdOBEst-LDP when the subset $S$ in RRRR is determined according to a utility function in Section \ref{sec: Subset selection for RRRR}. We compared AdOBEst-LDP when combined with each of the utility functions defined in Sections \ref{sec: Fisher Information Matrix}-\ref{sec: Probability of honest response} with its non-adaptive counterpart when SRR is used to generate $Y_{t}$ at all steps. We also included the semi-adaptive subset selection method in Section \ref{sec: Semi-adaptive approach} into the comparison. For the semi-adaptive approach, we obtained results for five different values of its $\alpha$ parameter, namely $\alpha \in \{0.2, 0.6, 0.8, 0.9, 0.95\}$. 

We ran each method for 50 Monte Carlo runs. Each run contained $T = 500 K$ time steps. For each run, the sensitive information is generated as $X_{t} \overset{\text{i.i.d.}}{\sim} \text{Cat}(\theta^{\ast})$ where $\theta^{\ast}$ itself was randomly drawn from $\text{Dirichlet}(\rho, \ldots, \rho)$. Here, the parameter $\rho$ was used to control the unevenness among the components of $\theta^{\ast}$. (Smaller $\rho$ leads to more uneven components in general). At each time step, Step 3 of Algorithm \ref{alg: Differentially private online learning} was performed by running $M = 20$ updates of an SGLD-based MCMC kernel as described Section \ref{sec: Stochastic gradient Langevin dynamics}. In SGLD, we took the subsample size $m = 50$ and the step-size parameter $a = \frac{0.5}{t}$ at time step $t$. Prior hyperparameters for the gamma distribution were taken $\rho_{0} = 1_{K}$. The posterior sample $\Theta_{t}$ was taken as the last iterate of those SGLD updates. Only for the last time step, $t = T$, the number of MCMC iterations was taken $2000$ to reliably calculate the final estimate $\hat{\theta}$ of $\theta$ by averaging the last $1000$ of those $2000$ iterates. (This average is the MCMC approximation of the posterior mean of $\theta$ given $Y_{1:T}$ and $S_{1:T}$.) We compared the mean posterior estimate of $\theta$ and the true value, and the performance measure was taken as the TV distance between $\text{Cat}(\theta^{\ast})$ and $\text{Cat}(\hat{\theta})$, that is, 
\begin{equation} \label{eq: error measure}
\frac{1}{2} \sum_{i = 1}^{K}  |\hat{\theta}_{i} - \theta_{i} |.
\end{equation}
Finally, the comparison among the methods was repeated for all the combinations ($K, \epsilon, \kappa, \rho$) of $K \in \{10, 20\}$, $\epsilon \in \{0.5, 1, 5\}$, $\kappa \in \{0.8, 0.9\}$, and $\rho \in \{0.01, 0.1, 1\}$. 

\begin{figure}[t!]
    \centerline{
    \begin{subfigure}[b]{12.5cm}
        \centerline{
        \includegraphics[width=0.50\textwidth]{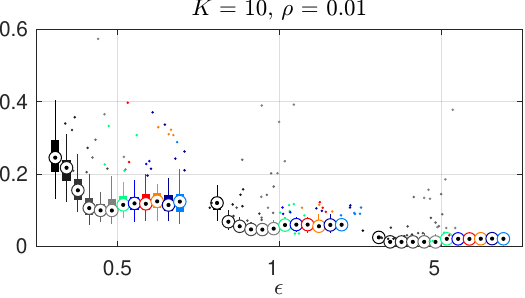}
        \includegraphics[width=0.50\textwidth]{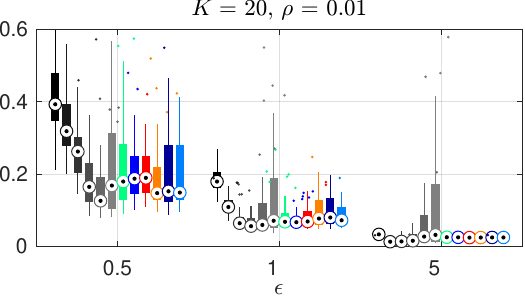}}
        \centerline{
        \includegraphics[width=0.50\textwidth]{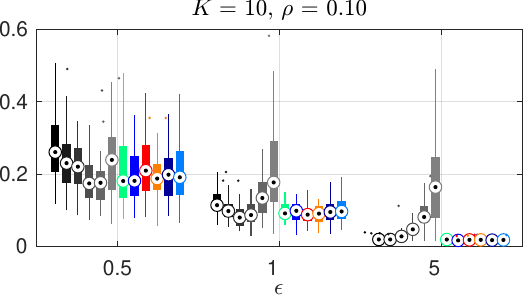}
        \includegraphics[width=0.50\textwidth]{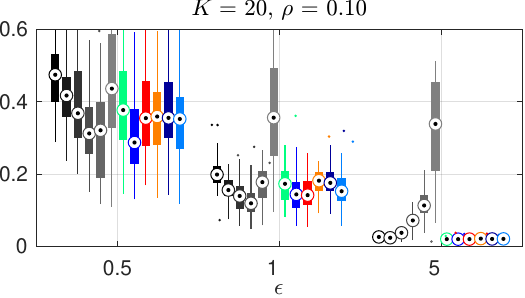}}
        \centerline{
        \includegraphics[width=0.50\textwidth]{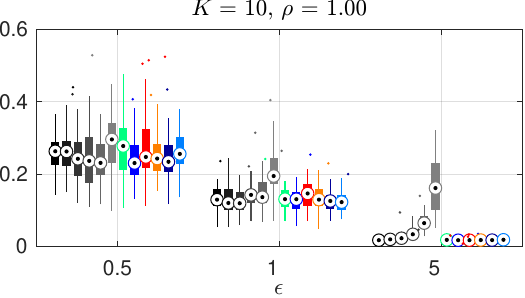}
        \includegraphics[width=0.50\textwidth]{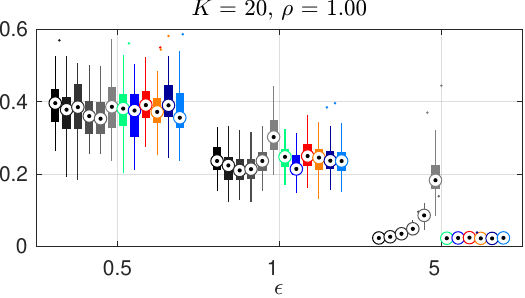}}
    \end{subfigure}%
    ~
    \begin{subfigure}[b]{3.5cm}
        \centering
        \includegraphics[width=\textwidth]{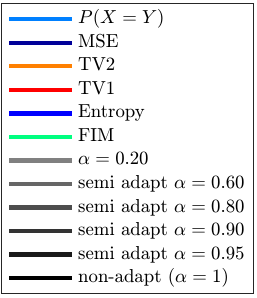}
    \end{subfigure}
    }
      \caption{TV distance in \eqref{eq: error measure} for $K \in \{10, 20\}$, $\epsilon_{1} = 0.8 \epsilon$}
\label{fig: tv results epscoeff_0.8}
\end{figure}

The accuracy results for the methods in comparison are summarized in Figures \ref{fig: tv results epscoeff_0.8} and \ref{fig: tv results epscoeff_0.9} in terms of the error given in \eqref{eq: error measure}. The box plots are centered at the error median, and the whiskers stretch from the minimum to the maximum over the 50 MC runs, excluding the outliers.  When the medians are compared, the fully adaptive algorithms, which use a utility function to select $S_{t}$, yield comparable results to the best semi-adaptive approach in both figures. As one may expect, the non-adaptive approach yielded the worst results in general, especially in the high-privacy regimes (smaller $\epsilon$) and uneven $\theta^{\ast}$ (smaller $\rho$). We also observe that, while most utility metrics are generally robust, the one based on FIM seems sensitive to the choice of $\epsilon_{1}$ parameter. This can be attributed to the fact that the FIM approaches singularity when $\epsilon_{2}$ is too small, which is the case if $\epsilon_{1}$ is chosen too close to $\epsilon$. Supporting this, we see that when $\epsilon_{1} = 0.8 \epsilon$, the utility metric based on FIM becomes more robust. Another remarkable observation is that the utility function based on the probability of honest response, $U_{6}$, has competitive performance despite being the lightest utility metric in computational complexity. Finally, while the semi-adaptive approach is computationally less demanding than most fully adaptive versions, the results show it can dramatically fail if its $\alpha$ hyperparameter is not tuned properly. In contrast, the fully adaptive approaches adapt well to $\epsilon$ or $\rho$ and do not need additional tuning.

\begin{figure}[t!]
    \centerline{
    \begin{subfigure}[b]{12.5cm}
        \centerline{
        \includegraphics[width=0.50\textwidth]{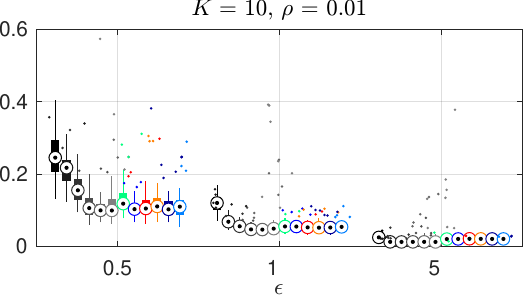}
        \includegraphics[width=0.50\textwidth]{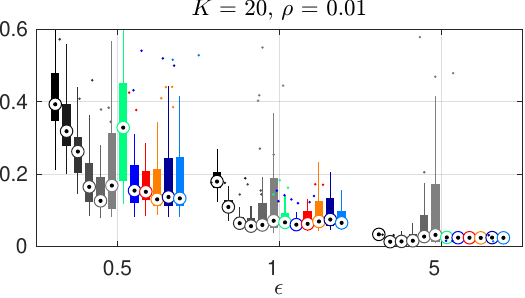}}
        \centerline{
        \includegraphics[width=0.50\textwidth]{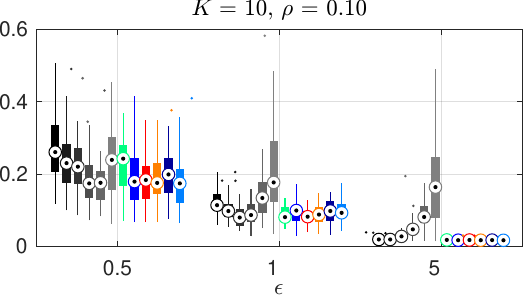}
        \includegraphics[width=0.50\textwidth]{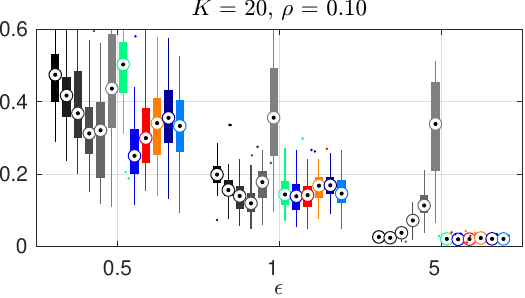}}
        \centerline{
        \includegraphics[width=0.50\textwidth]{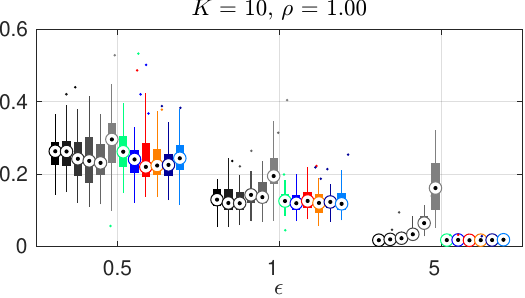}
        \includegraphics[width=0.50\textwidth]{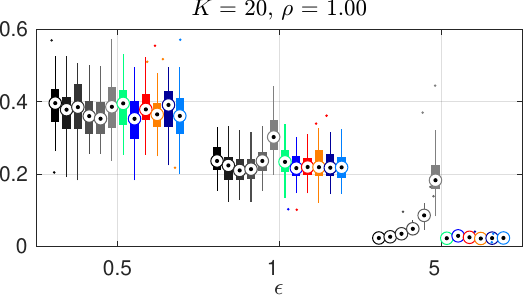}}
    \end{subfigure}
    ~
    \begin{subfigure}[t]{3.5cm}
        \centering
        \includegraphics[width=\textwidth]{legends_for_methods.pdf}
    \end{subfigure}
    }
      \caption{TV distance in \eqref{eq: error measure} for $K \in \{10, 20\}$, $\epsilon_{1} = 0.9 \epsilon$}
\label{fig: tv results epscoeff_0.9}
\end{figure}

In addition to the error graphs, the heat maps in Figures \ref{fig: heatmaps epscoeff_0.8} and \ref{fig: heatmaps epscoeff_0.9} show the effect of parameters $\rho$ and $\epsilon$ on the average cardinality of the subsets $S$ chosen by each algorithm (again, averaged over 50 Monte Carlo runs). According to these figures, increasing the value of $\rho$ causes an increase in the cardinalities of subsets chosen by each algorithm (except the nonadaptive one since it uses all $K$ categories rather than a smaller subset). This is expected since higher $\rho$ values cause $\text{Cat}(\theta^{\ast})$ to be closer to the uniform distribution, thus causing $X$ to be more evenly distributed among the categories. Moreover, for small $\rho$, increasing the value of $\epsilon$ causes a decrease in the cardinalities of these subsets, which can be attributed to a higher $\epsilon$, leading to a more accurate estimation. When we compare the utility functions for the adaptive approach among themselves, we observe that for $\epsilon_{1} = 0.8\epsilon$, the third utility function (TV1) uses the subsets with the largest cardinality (on average). However, when we increase the $\epsilon_{1}$ value to $\epsilon_{1} = 0.9\epsilon$, the second utility function (FIM) uses the subsets with the largest cardinality. This might be due to the sensitivity of the FIM-based utility function to the choice of $\epsilon_{1}$ parameter that we mentioned before, which affects the invertibility of the Fisher information matrix when $\epsilon_{1}$ is too close to $\epsilon$.

\begin{figure}[t]
  \centering
  \includegraphics[scale = 0.6]{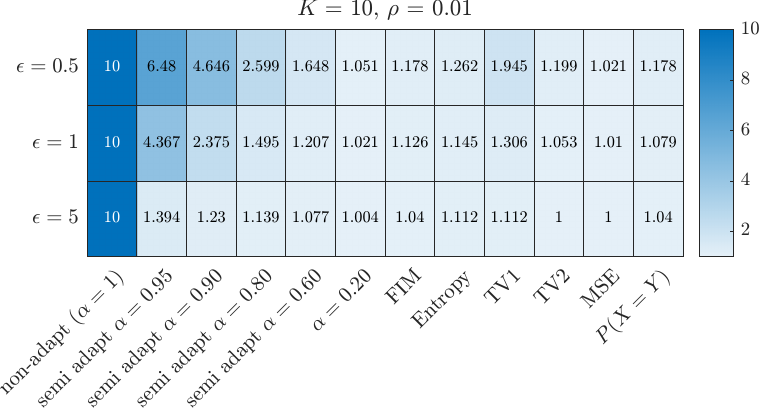} 
  \includegraphics[scale = 0.6]{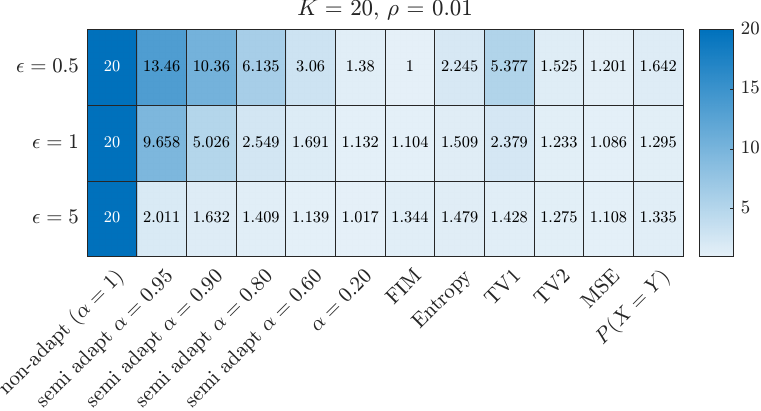} \\
  \includegraphics[scale = 0.6]{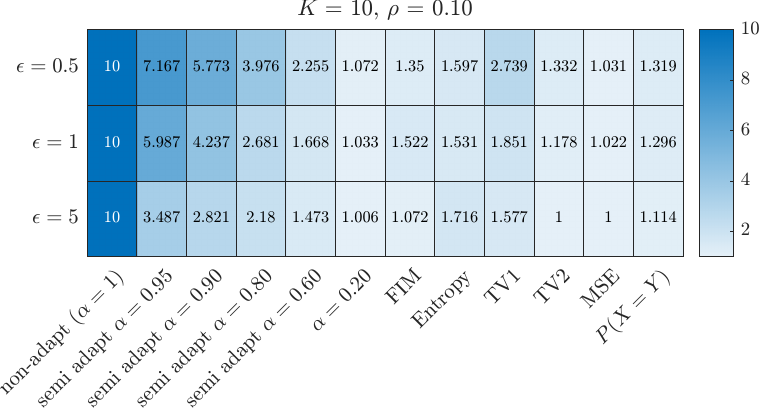} 
  \includegraphics[scale = 0.6]{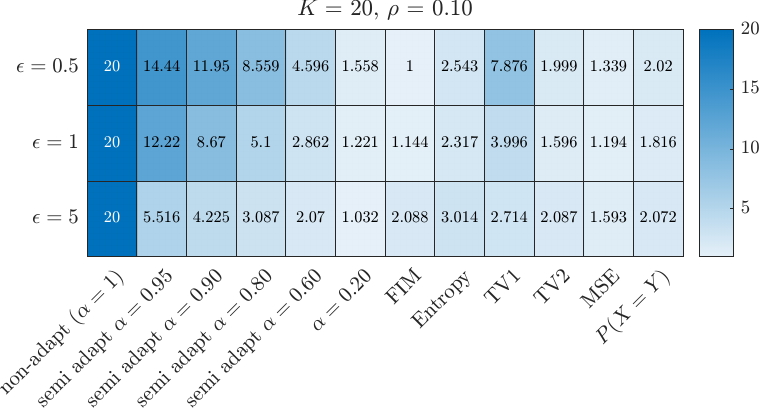} \\
  \includegraphics[scale = 0.6]{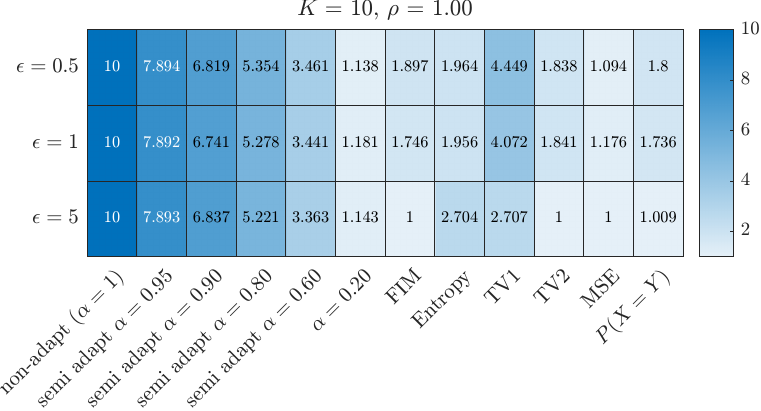}  
  \includegraphics[scale = 0.6]{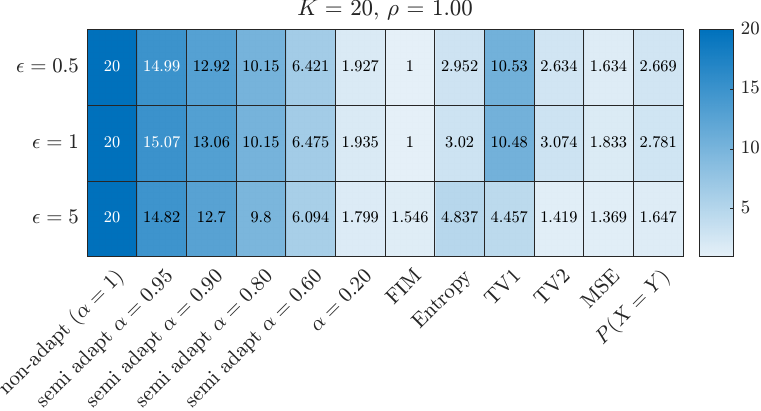}  
  \caption{Average cardinalities of the subsets selected by each method, for $K \in \{10, 20\}$, $\epsilon_{1} = 0.8 \epsilon$}
\label{fig: heatmaps epscoeff_0.8}
\end{figure}

\begin{figure}[t]
  \centering
  \includegraphics[scale = 0.6]{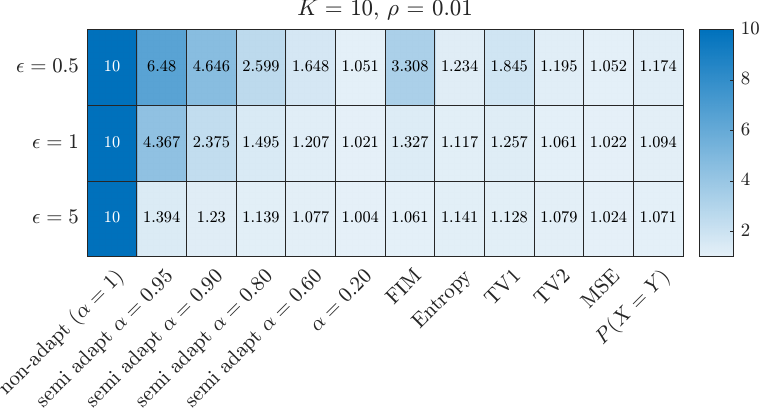} 
  \includegraphics[scale = 0.6]{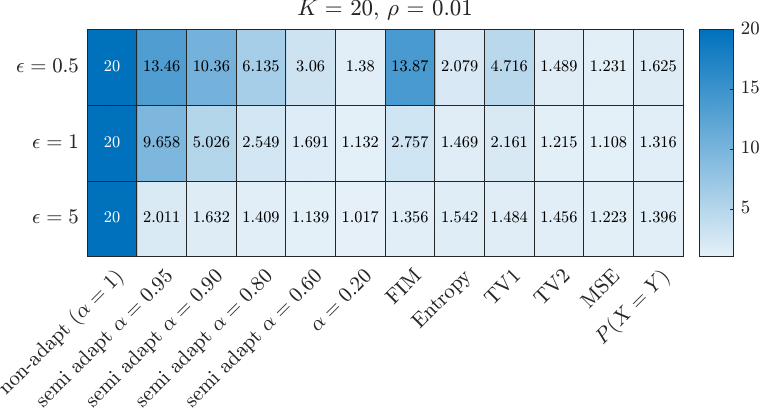} \\
  \includegraphics[scale = 0.6]{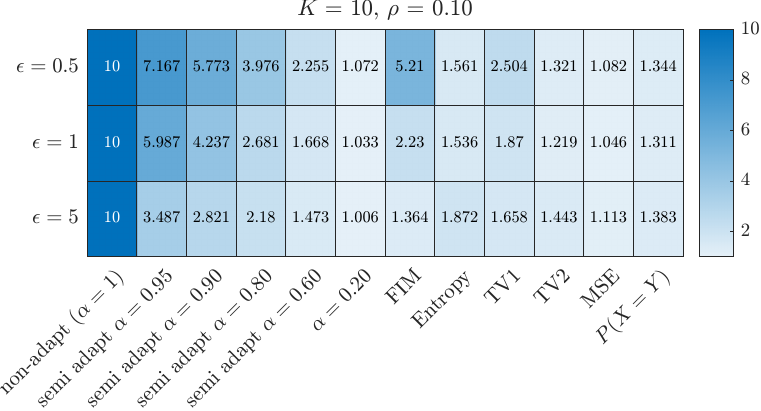} 
  \includegraphics[scale = 0.6]{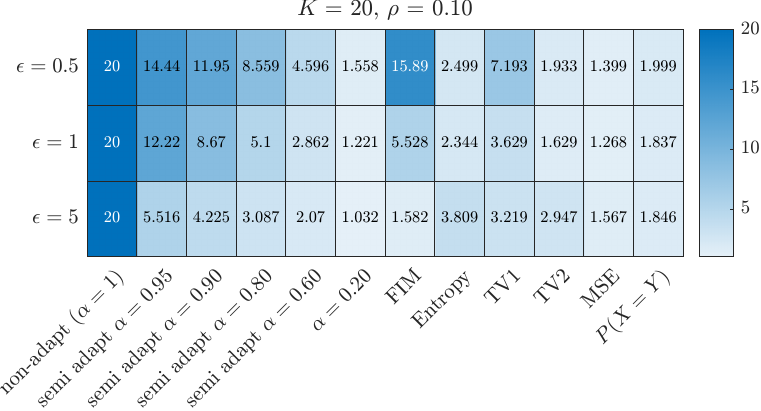} \\
  \includegraphics[scale = 0.6]{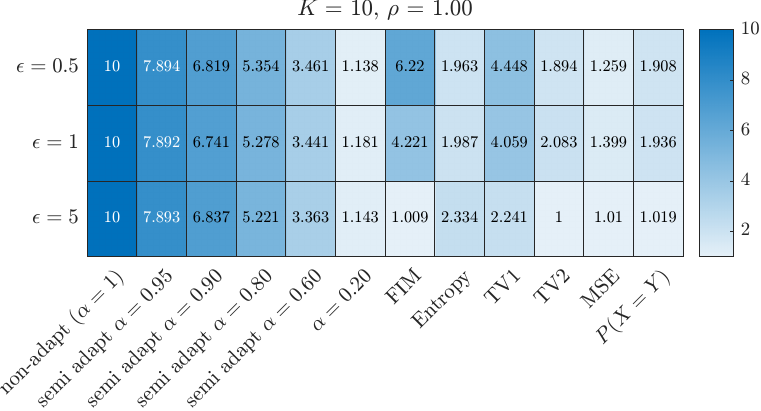}  
  \includegraphics[scale = 0.6]{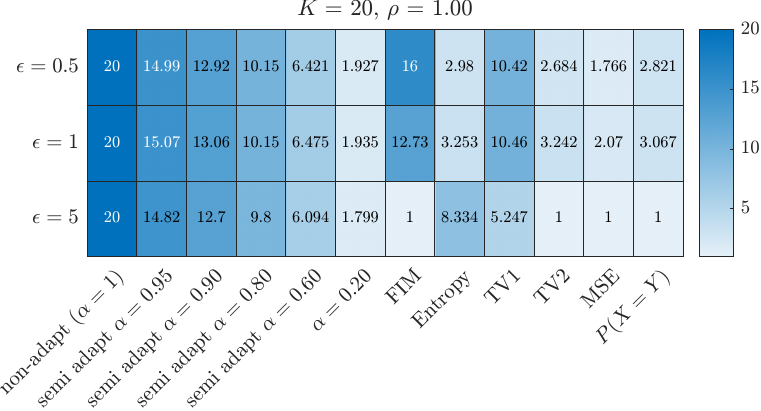}  
  \caption{Average cardinalities of the subsets selected by each method, for $K \in \{10, 20\}$, $\epsilon_{1} = 0.9 \epsilon$}
\label{fig: heatmaps epscoeff_0.9}
\end{figure}

\section{Conclusion} \label{sec: Conclusion}

In this paper, we proposed a new adaptive framework, AdOBEst-LDP, for online estimation of the distribution of categorical data under the $\epsilon$-LDP constraint. AdOBEst-LDP, run with RRRR for randomization, encompasses both privatization of the sensitive data and accurate Bayesian estimation of population parameters from privatized data in a dynamic way. Our privatization mechanism (RRRR) is distinguished from the baseline approach (SRR) in a way that it operates on a smaller subset of the sample space rather than the entire sample space. We employed an adaptive approach to dynamically adjust the subset at each iteration, based on the knowledge about $\theta^{\ast}$ obtained from the past data. The selection of these subsets was guided by various alternative utility functions that we used throughout the paper. For the posterior sampling of $\theta$ at each iteration, we employed an efficient SGLD-based sampling scheme on a constrained region, namely the $K$-dimensional probability simplex. We distinguished this scheme from Gibbs sampling, which uses all of the historical data and is not scalable to large datasets.

In the numerical experiments, we demonstrated that AdOBEst-LDP can estimate the population distribution more accurately than the non-adaptive approach under experimental settings with various privacy levels $\epsilon$ and degrees of evenness among the components of $\theta^{\ast}$. While the performance of AdOBEst-LDP is generally robust for all the utility functions considered in the paper, the utility function based on the probability of honest response can be preferred due to its much lower computational complexity than the other utility functions. Our experiments also showed that the accuracy of the adaptive approach is comparable to that of the semi-adaptive approach. However, the semi-adaptive approach requires adjusting its parameter $\alpha$ carefully, which makes it challenging to use.

In a theoretical analysis, we showed that, regardless of whether the posterior sampling is conducted exactly or approximately, the posterior distribution targeted in AdOBEst-LDP converges to the true population parameter $\theta^{\ast}$. We also showed that, under exact posterior sampling, the best subset given utility function is selected with probability $1$ in the long run.

It is important to note that the observations $\{Y_{t} \}_{t \geq 1}$ generated by AdOBEst-LDP are dependent. Therefore, the theoretical analysis presented in Section \ref{sec: Theoretical analysis} can also be seen as a contribution to the literature on the convergence of posterior distributions with dependent data. Additionally, we have already highlighted an analogy between AdOBEst-LDP and Thompson sampling \citep{Russo_et_al_2018}. Both methods involve posterior sampling, and the subset selection step in AdOBEst-LDP can be viewed as analogous to the action selection step in reinforcement learning schemes. In this regard, we believe that the theoretical results may also inspire future research on the convergence of dynamic reinforcement learning algorithms, especially those based on Thompson sampling.

Categorical distributions serve as useful non-parametric discrete approximations of continuous distributions. As a potential future direction, AdOBEst-LDP could be adapted for non-parametric density estimation. A key challenge in this context would be determining how to partition the support domain of the data.

RRRR is a practical LDP mechanism with a subset parameter that adapts based on past data. It has been shown to outperform SRR when leveraging the knowledge of $\theta^{\ast}$. However, in this work, it is \emph{not} proven that RRRR is the \emph{optimal} $\epsilon$-LDP mechanism with respect to the utility functions considered. While the optimal $\epsilon$-LDP mechanism could be identified numerically by solving a constrained optimization problem---where the utility function is maximized under the LDP constraint---it may not have a closed-form solution for complex utility functions. A promising direction for future research would be to compare the optimal $\epsilon$-LDP mechanism with the $\epsilon$-LDP RRRR mechanism by analyzing their transition probability matrices and assessing the suboptimality of RRRR. Additionally, insights from the optimal $\epsilon$-LDP mechanism could inspire the development of new, tractable, and approximately optimal $\epsilon$-LDP mechanisms.

\subsection*{Acknowledgements}
We thank our colleague Prof.\ Berrin Yanıkoğlu for reviewing the paper and providing insightful comments.

\bibliographystyle{ACM-Reference-Format}
\bibliography{references}

\begin{thebibliography}{}

\bibitem[Acharya et~al., 2023]{acharya2023unified}
Acharya, J., Canonne, C.~L., Sun, Z., and Tyagi, H. (2023).
\newblock Unified lower bounds for interactive high-dimensional estimation
  under information constraints.
\newblock In Oh, A., Naumann, T., Globerson, A., Saenko, K., Hardt, M., and
  Levine, S., editors, {\em Advances in Neural Information Processing Systems},
  volume~36, pages 51133--51165, New Orleans, US. Curran Associates, Inc.

\bibitem[Alparslan and Y{\i}ld{\i}r{\i}m, 2022]{alparslan2022statistic}
Alparslan, B. and Y{\i}ld{\i}r{\i}m, S. (2022).
\newblock {Statistic selection and MCMC for differentially private Bayesian
  estimation}.
\newblock {\em Statistics and Computing}, 32(5):66.

\bibitem[Barnes et~al., 2020]{barnes2020fisher}
Barnes, L.~P., Chen, W.-N., and {\"O}zg{\"u}r, A. (2020).
\newblock Fisher information under local differential privacy.
\newblock {\em IEEE Journal on Selected Areas in Information Theory},
  1(3):645--659.

\bibitem[Bhaila et~al., 2024]{bhaila2024local}
Bhaila, K., Huang, W., Wu, Y., and Wu, X. (2024).
\newblock Local differential privacy in graph neural networks: a reconstruction
  approach.
\newblock In {\em Proceedings of the 2024 SIAM International Conference on Data
  Mining (SDM)}, pages 1--9, Texas, US. SIAM, SIAM.

\bibitem[Cormode and Bharadwaj, 2022]{cormode2022sample}
Cormode, G. and Bharadwaj, A. (2022).
\newblock Sample-and-threshold differential privacy: Histograms and
  applications.
\newblock In {\em International Conference on Artificial Intelligence and
  Statistics}, pages 1420--1431, Valencia, Spain. PMLR.

\bibitem[Cormode et~al., 2018]{cormode2018marginal}
Cormode, G., Kulkarni, T., and Srivastava, D. (2018).
\newblock Marginal release under local differential privacy.
\newblock In {\em Proceedings of the 2018 International Conference on
  Management of Data}, pages 131--146.

\bibitem[Dwork, 2006]{dwork2006differential}
Dwork, C. (2006).
\newblock Differential privacy.
\newblock In {\em International colloquium on automata, languages, and
  programming}, pages 1--12. Springer.

\bibitem[Foulds et~al., 2016]{Foulds_et_al_2016}
Foulds, J., Geumlek, J., and an~Kamalika~Chaudhuri, M.~W. (2016).
\newblock On the theory and practice of privacy-preserving {B}ayesian data
  analysis.
\newblock Technical report, arxiv:1603.07294.

\bibitem[Gill and Levit, 1995]{gill1995applications}
Gill, R.~D. and Levit, B.~Y. (1995).
\newblock {Applications of the van Trees Inequality: A Bayesian Cramér-Rao
  Bound}.
\newblock {\em Bernoulli}, 1(1/2):59--79.

\bibitem[Jia and Gong, 2019]{jia2019calibrate}
Jia, J. and Gong, N.~Z. (2019).
\newblock Calibrate: Frequency estimation and heavy hitter identification with
  local differential privacy via incorporating prior knowledge.
\newblock In {\em IEEE INFOCOM 2019-IEEE Conference on Computer
  Communications}, pages 2008--2016. IEEE.

\bibitem[Joseph et~al., 2019]{joseph2019locally}
Joseph, M., Kulkarni, J., Mao, J., and Wu, S.~Z. (2019).
\newblock Locally private {G}aussian estimation.
\newblock In Wallach, H., Larochelle, H., Beygelzimer, A., d\textquotesingle
  Alch\'{e}-Buc, F., Fox, E., and Garnett, R., editors, {\em Advances in Neural
  Information Processing Systems}, volume~32. Curran Associates, Inc.

\bibitem[Kairouz et~al., 2016]{kairouz2016extremal}
Kairouz, P., Oh, S., and Viswanath, P. (2016).
\newblock Extremal mechanisms for local differential privacy.
\newblock {\em The Journal of Machine Learning Research}, 17(1):492--542.

\bibitem[Karwa et~al., 2014]{Karwa_et_al_2014}
Karwa, V., Slavkovi{\'{c}}, A.~B., and Krivitsky, P. (2014).
\newblock Differentially private exponential random graphs.
\newblock In Domingo-Ferrer, J., editor, {\em Privacy in Statistical
  Databases}, pages 143--155, Cham. Springer International Publishing.

\bibitem[Kasiviswanathan et~al., 2011]{Kasiviswanathan_et_al_2011}
Kasiviswanathan, S.~P., Lee, H.~K., Nissim, K., Raskhodnikova, S., and Smith,
  A. (2011).
\newblock What can we learn privately?
\newblock {\em SIAM Journal on Computing}, 40(3):793--826.

\bibitem[Kim et~al., 2011]{Kim_et_al_2011}
Kim, C., Jung, J., and Chung, Y. (2011).
\newblock Bayesian estimation for the exponentiated weibull model under type ii
  progressive censoring.
\newblock {\em Statistical Papers}, 52(1):53--70.

\bibitem[Liu et~al., 2022]{Liu_et_al_2022}
Liu, T., Zhang, L., Jin, G., and Pan, Z. (2022).
\newblock Reliability assessment of heavily censored data based on e-bayesian
  estimation.
\newblock {\em Mathematics}, 10(22).

\bibitem[Lone et~al., 2024]{Lone_et_al_2024}
Lone, S.~A., Panahi, H., Anwar, S., and Shahab, S. (2024).
\newblock Inference of reliability model with burr type xii distribution under
  two sample balanced progressive censored samples.
\newblock {\em Physica Scripta}, 99(2):025019.

\bibitem[Lopuha{\"a}-Zwakenberg et~al., 2022]{lopuhaa2022fisher}
Lopuha{\"a}-Zwakenberg, M., {\v{S}}kori{\'c}, B., and Li, N. (2022).
\newblock Fisher information as a utility metric for frequency estimation under
  local differential privacy.
\newblock In {\em Proceedings of the 21st Workshop on Privacy in the Electronic
  Society}, pages 41--53.

\bibitem[Mazumdar et~al., 2020]{Mazumdar_et_al_2020}
Mazumdar, E., Pacchiano, A., Ma, Y.-A., Bartlett, P.~L., and Jordan, M.~I.
  (2020).
\newblock {On approximate Thompson sampling with Langevin algorithms}.
\newblock In {\em Proceedings of the 37th International Conference on Machine
  Learning}, ICML'20. JMLR.org.

\bibitem[Pelekis and Ramon, 2017]{Pelekis_and_Ramon_2017}
Pelekis, C. and Ramon, J. (2017).
\newblock Hoeffding's inequality for sums of dependent random variables.
\newblock {\em Mediterranean Journal of Mathematics}, 14(6):243.

\bibitem[Russo et~al., 2018]{Russo_et_al_2018}
Russo, D., Roy, B.~V., Kazerouni, A., Osband, I., and Wen, Z. (2018).
\newblock A tutorial on thompson sampling.
\newblock {\em Foundations and Trends in Machine Learning}, 11(1):1--96.

\bibitem[Steinberger, 2024]{steinberger2024efficiency}
Steinberger, L. (2024).
\newblock Efficiency in local differential privacy.

\bibitem[Wang et~al., 2024a]{wang2023accurately}
Wang, M., Jiang, H., Peng, P., and Li, Y. (2024a).
\newblock Accurately estimating frequencies of relations with relation privacy
  preserving in decentralized networks.
\newblock {\em IEEE Transactions on Mobile Computing}, 23(05):6408--6422.

\bibitem[Wang et~al., 2016]{wang2016mutual}
Wang, S., Huang, L., Wang, P., Nie, Y., Xu, H., Yang, W., Li, X.-Y., and Qiao,
  C. (2016).
\newblock Mutual information optimally local private discrete distribution
  estimation.

\bibitem[Wang et~al., 2024b]{wang2023locally}
Wang, S., Li, Y., Zhong, Y., Chen, K., Wang, X., Zhou, Z., Peng, F., Qian, Y.,
  Du, J., and Yang, W. (2024b).
\newblock Locally private set-valued data analyses: Distribution and heavy
  hitters estimation.
\newblock {\em IEEE Transactions on Mobile Computing [preprint]}, pages 1--14.

\bibitem[Wang et~al., 2017]{wang2017locally}
Wang, T., Blocki, J., Li, N., and Jha, S. (2017).
\newblock Locally differentially private protocols for frequency estimation.
\newblock In {\em 26th USENIX Security Symposium (USENIX Security 17)}, pages
  729--745.

\bibitem[Wang et~al., 2020]{wang2020locally}
Wang, T., Lopuhaä-Zwakenberg, M., Li, Z., Skoric, B., and Li, N. (2020).
\newblock Locally differentially private frequency estimation with consistency.
\newblock In {\em 27th Annual Network and Distributed System Security
  Symposium, NDSS 2020}.
\newblock Cited by: 33; All Open Access, Bronze Open Access, Green Open Access.

\bibitem[Waudby-Smith et~al., 2023]{waudby2023nonparametric}
Waudby-Smith, I., Wu, S., and Ramdas, A. (2023).
\newblock Nonparametric extensions of randomized response for private
  confidence sets.
\newblock In {\em International Conference on Machine Learning}, pages
  36748--36789. PMLR.

\bibitem[Wei et~al., 2024]{wei2024aaa}
Wei, F., Bao, E., Xiao, X., Yang, Y., and Ding, B. (2024).
\newblock Aaa: an adaptive mechanism for locally differential private mean
  estimation.

\bibitem[Welling and Teh, 2011]{Welling_and_Teh_2011}
Welling, M. and Teh, Y.~W. (2011).
\newblock Bayesian learning via stochastic gradient langevin dynamics.
\newblock In {\em Proceedings of the 28th International Conference on
  International Conference on Machine Learning}, ICML'11, page 681–688,
  Madison, WI, USA. Omnipress.

\bibitem[Williams and Mcsherry, 2010]{Williams_and_Mcsherry_2010}
Williams, O. and Mcsherry, F. (2010).
\newblock Probabilistic inference and differential privacy.
\newblock In Lafferty, J., Williams, C., Shawe-Taylor, J., Zemel, R., and
  Culotta, A., editors, {\em Advances in Neural Information Processing
  Systems}, volume~23. Curran Associates, Inc.

\bibitem[Yıldırım, 2024]{yildirim2024differentially}
Yıldırım, S. (2024).
\newblock Differentially private online bayesian estimation with adaptive
  truncation.
\newblock {\em Turkish Journal of Electrical Engineering and Computer
  Sciences}, 32(2):34--50.

\bibitem[Zhao et~al., 2023]{zhao2023hadamard}
Zhao, D., Zhao, S.-Y., Chen, H., Liu, R.-X., Li, C.-P., and Zhang, X.-Y.
  (2023).
\newblock Hadamard encoding based frequent itemset mining under local
  differential privacy.
\newblock {\em Journal of Computer Science and Technology}, 38(6):1403--1422.

\bibitem[Zhu et~al., 2024]{zhu2023heavy}
Zhu, Y., Cao, Y., Xue, Q., Wu, Q., and Zhang, Y. (2024).
\newblock Heavy hitter identification over large-domain set-valued data with
  local differential privacy.
\newblock {\em IEEE Transactions on Information Forensics and Security},
  19:414--426.

\end{thebibliography}

\appendix

\section{Proofs}

\subsection{Proofs for LDP of RRRR} \label{appndx: Proofs for LDP of RRRR}

\begin{proof}[Proof of Theorem \ref{thm: RRRR LDP}]
Let $k = |S|$. We can write as
\begin{equation} \label{eq: g explicit}
g_{S, \epsilon}(y | x) = 
\begin{cases} 
\frac{e^{\epsilon_{1}}}{e^{\epsilon_{1}} + k}  & x \in S, y \in S, x = y \\
\frac{1}{e^{\epsilon_{1}} + k }  & x \in S, y \in S, x \neq y \\
\frac{1}{K-k} \frac{1}{e^{\epsilon_{1}} + k }  & x \in S, y \notin S \\
\frac{1}{e^{\epsilon_{1}} + k }  & x \notin S, y \in S \\
\frac{e^{\epsilon_{2}}}{e^{\epsilon_{2}} + K-k-1 } \frac{e^{\epsilon_{1}}}{e^{\epsilon_{1}} + k} & x \notin S, y \notin S, x = y \\
\frac{1}{e^{\epsilon_{2}} + K-k-1 } \frac{e^{\epsilon_{1}}}{e^{\epsilon_{1}} + k} & x \notin S, y \notin S, x \neq y \\
\end{cases}.
\end{equation}
We will show that when $\epsilon_{1}, \epsilon_{2}$ are chosen according to the theorem, 
\begin{equation} \label{eq: condition for e-DP}
e^{-\epsilon} \leq \frac{g_{S, \epsilon}(y | x)}{g_{S, \epsilon}(y | x')} \leq e^{\epsilon}
\end{equation}
for all possible $x, x', y \in [K]$.  When $S = \emptyset$, the proof is trivial; we focus on the non-trivial case $S \neq \emptyset$. For the non-trivial case, the transition probability $g_{S, \epsilon}(y | x)$ requires checking the ratio in \eqref{eq: condition for e-DP} in $10$ different cases for $x, x', y$ concerning their interrelation. 
\begin{enumerate}[start=1, label={(\bfseries C\arabic*)}]
\item $x \in S$, $x' \notin S$, $y \in S$, $y = x$. We have
\[
\frac{g_{S, \epsilon}(y | x)}{g_{S, \epsilon}(y | x')} = \frac{\frac{e^{\epsilon_{1}}}{e^{\epsilon_{1}} + k}}{\frac{1}{e^{\epsilon_{1}} + k}} = e^{\epsilon_{1}}.
\]
Since $\epsilon_{1} \leq \epsilon$, \eqref{eq: condition for e-DP} holds.
\item $x \in S$, $x' \notin S$, $y \in S$, $y \neq x$. We have
\[
\frac{g_{S, \epsilon}(y | x)}{g_{S, \epsilon}(y | x')} = \frac{\frac{1}{e^{\epsilon_{1}} + k}}{\frac{1}{e^{\epsilon_{1}} + k}} = 1,
\]
which trivially implies \eqref{eq: condition for e-DP}.
\item $x \in S$, $x' \notin S$, $y \notin S$, $y = x'$. We need
\[
\frac{g_{S, \epsilon}(y | x)}{g_{S, \epsilon}(y | x')} = \frac{\frac{1}{K-k} \frac{1}{e^{\epsilon_{1}} + k}}{\frac{e^{\epsilon_{2}}}{e^{\epsilon_{2}} + K-k-1} \frac{e^{\epsilon_{1}}}{e^{\epsilon_{1}} + k}} = \frac{e^{\epsilon_{2} }+ K-k-1}{(K-k) e^{\epsilon_{1} + \epsilon_{2}} }.
\]
We can show that $\frac{g_{S, \epsilon}(y | x)}{g_{S, \epsilon}(y | x')} \leq 1 \leq e^{\epsilon}$ already holds since
\[
\frac{e^{\epsilon_{2} }+ K-k-1}{(K-k) e^{\epsilon_{1} + \epsilon_{2}} } = \frac{(K-k-1) + e^{\epsilon_{2}} }{(K-k-1) e^{\epsilon_{1}+ \epsilon_{2}} + e^{\epsilon_{1} + \epsilon_{2}}},
\]
and the first and the second terms in the numerator are smaller than those in the denominator, respectively. For the other side of the inequality, 
\[
\frac{e^{\epsilon_{2} }+ K-k-1}{(K-k) e^{\epsilon_{1} + \epsilon_{2}} } \geq e^{-\epsilon}
\]
requires
\[
e^{\epsilon_{2}} \leq  \frac{K-k-1}{e^{\epsilon_{1} - \epsilon} (K-k)-1}
\]
whenever $e^{\epsilon_{1} - \epsilon} (K-k)-1 > 0$, which is the condition given in the theorem.
\item $x \in S$, $x' \notin S$, $y \notin S$, $y \neq x'$. We need
\[
\frac{g_{S, \epsilon}(y | x)}{g_{S, \epsilon}(y | x')} = \frac{\frac{1}{K-k} \frac{1}{e^{\epsilon_{1}} +k}}{\frac{1}{e^{\epsilon_{2}} + K-k-1} \frac{e^{\epsilon_{1}}}{e^{\epsilon_{1}} +k}}=  \frac{e^{\epsilon_{2}} + (K-k) -1}{(K-k) e^{\epsilon_{1}}}.
\]
Since $\epsilon_{2} \leq \epsilon$, we have 
\[
e^{\epsilon_{2}} \leq (K-k) (e^{\epsilon + \epsilon_{1}} - 1 )+ 1.
\]
Hence
\[
\frac{g_{S, \epsilon}(y | x)}{g_{S, \epsilon}(y | x')} \leq \frac{ (K-k) (e^{\epsilon + \epsilon_{1}} - 1 )+ 1  + K-k -1}{(K-k)e^{\epsilon_{1}}} \leq e^{\epsilon},
\]
Hence, we proved the right-hand side inequality. For the left-hand side, we have
\[
\frac{e^{\epsilon_{2}} + K-k -1}{(K-k) e^{\epsilon_{1}}} = \frac{(e^{\epsilon_{2}}-1) + K-k}{(K-k) e^{\epsilon_{1}}} \geq \frac{K-k}{(K-k) e^{\epsilon_{1}}} = e^{-\epsilon_{1}} \geq e^{-\epsilon}
\]
since $\epsilon_{2} \geq 0$ and $\epsilon_{1} \leq \epsilon$.
\item $x, x' \in S$, $y \in S$, $y = x$. We have
\[
\frac{g_{S, \epsilon}(y | x)}{g_{S, \epsilon}(y | x')} = \frac{e^{\epsilon_{1}}/(e^{\epsilon_{1}} + k)}{1/(e^{\epsilon_{1}} + k)} = e^{\epsilon_{1}}.
\]
Since $\epsilon_{1} \leq \epsilon$, \eqref{eq: condition for e-DP} holds.
\item $x, x' \in S$, $y \in S$, $y \neq x$ and $y \neq x'$. We have
\[
\frac{g_{S, \epsilon}(y | x)}{g_{S, \epsilon}(y | x')} = \frac{1/(e^{\epsilon_{1}} + k)}{1/(e^{\epsilon_{1}} + k)} = 1.
\]
So \eqref{eq: condition for e-DP} trivially holds.
\item $x, x' \in S$, $y \notin S$. We have
\[
\frac{g_{S, \epsilon}(y | x)}{g_{S, \epsilon}(y | x')} = \frac{\frac{1}{K-k} \frac{1}{e^{\epsilon_{1}} + k } }{\frac{1}{K-k} \frac{1}{e^{\epsilon_{1}} + k } } = 1.
\]
So, \eqref{eq: condition for e-DP} trivially holds.
\item $x, x' \notin S$, $y \notin S$, $y = x$. We have
\[
\frac{g_{S, \epsilon}(y | x)}{g_{S, \epsilon}(y | x')}  = \frac{e^{\epsilon_{2}}/(e^{\epsilon_{2}} + K-k - 1) e^{\epsilon_{1}}/(e^{\epsilon_{1}} + k)}{1/(e^{\epsilon_{2}} + K-k - 1) e^{\epsilon_{1}}/(e^{\epsilon_{1}} + k)} = e^{\epsilon_{2}}.
\]
Since $\epsilon_{2} \leq \epsilon$, \eqref{eq: condition for e-DP} holds.
\item $x, x' \notin S$, $y \notin S$, $y \neq x$, $y \neq x'$.
We have
\[
\frac{g_{S, \epsilon}(y | x)}{g_{S, \epsilon}(y | x')}  = \frac{1/(e^{\epsilon_{2}} + K-k - 1) e^{\epsilon_{1}}/(e^{\epsilon_{1}} + k)}{1/(e^{\epsilon_{2}} + K-k - 1) e^{\epsilon_{1}}/(e^{\epsilon_{1}} + k)} = 1.
\]
So, \eqref{eq: condition for e-DP} trivially holds.
\item $x, x' \notin S$, $y \in S$. We have
\[
\frac{g_{S, \epsilon}(y | x)}{g_{S, \epsilon}(y | x')} = \frac{1/(e^{\epsilon_{1}} + |S|)}{1/(e^{\epsilon_{1}} + |S|)} = 1.
\]
So \eqref{eq: condition for e-DP} trivially holds.
\end{enumerate}
We conclude the proof by noting that any other case left out is symmetric in $(x, x')$ to one of the covered cases and, therefore, does not need to be checked separately.
\end{proof}

\subsection{Proofs about utility functions} \label{appndx: Proofs about utility functions}

\begin{proof}[Proof of Proposition \ref{prop: FIM of RRRR}]
Given $\theta \in \Delta$, let $\vartheta$ be the $(K-1) \times 1$ column vector such that $\vartheta_{i} = \theta_{i}$ for $i = 1, \ldots, K-1$. We can write the Fisher information matrix in terms of the score vector as follows.
\[
F(\theta; S, \epsilon) = \mathbb{E}_{Y} \left[ \nabla_{\vartheta} \ln h_{S, \epsilon}(Y| \theta) \nabla_{\vartheta} \ln h_{S, \epsilon}(Y | \theta)^{\top} \right] = \sum_{y = 1}^{K} h_{S, \epsilon}(y | \theta) \left[ \nabla_{\vartheta} \ln h_{S, \epsilon}(y|\theta) \nabla_{\vartheta} \ln h_{S, \epsilon}(y| \theta)^{\top} \right].
\]
Noting that 
\[
h_{S, \epsilon}(y | \theta) = \sum_{k =1}^{K-1} g_{S, \epsilon}(y | k) \vartheta_{k} + g_{S, \epsilon}(y | K) \left(1 - \sum_{k = 1}^{K-1}\vartheta_{k} \right),
\]
the score vector can be derived as
\begin{equation} \label{eq: score vector}
[\nabla_{\vartheta} \ln h_{S, \epsilon}(y | \theta)]_{k} = \frac{g_{S, \epsilon}(y | k) - g_{S, \epsilon}(y | K)}{h_{S, \epsilon}(y | \theta)}, \quad k = 1, \ldots, K-1.
\end{equation}
As the $K \times (K-1)$ matrix $A_{S, \epsilon}$ defined as $A(i, j) = g(i | j) - g(i | K)$, we can rewrite \eqref{eq: score vector} as $[\nabla_{\vartheta} \ln h_{S, \epsilon}(y | \theta)]_{k} = A_{S, \epsilon}(y, k)/h_{S, \epsilon}(y|\theta)$. Let $a_{y}$ be the $y$'th row of $A_{S, \epsilon}$, and recall that $D_{\theta}$ is defined as a diagonal matrix with $1/h_{S, \epsilon}(j | \theta)$  being the $j$'th element in the diagonal. Then, the Fisher information matrix is
\[
F(\theta; S, \epsilon) = \sum_{y =1}^{K} \frac{a_{y}^{\top}}{h_{S, \epsilon}(y | \theta)} \frac{a_{y}}{h_{S, \epsilon}(y | \theta)} h_{S, \epsilon}(y | \theta) =  \sum_{y =1}^{K} a_{y}^{\top} \frac{1}{h_{S, \epsilon}(y | \theta)} a_{y} = A_{S, \epsilon}^{\top} D_{\theta} A_{S, \epsilon},
\]
as claimed.
\end{proof}

Next, we prove that $F(\theta; S, \epsilon)$ is invertible. Let $G_{S, \epsilon}$ be the $K \times K$ matrix whose elements are
\begin{equation} \label{eq: G definition}
G_{S, \epsilon}(i, j) = g_{S, \epsilon}(i | j), \quad i, j = 1, \ldots, K.
\end{equation}
To prove that $F(\theta; S, \epsilon)$ is invertible, we first prove the intermediate result that $G_{S, \epsilon}$ is invertible.
\begin{lem} \label{lem: G is invertible}
$G_{S, \epsilon}$ is invertible for all $S \subset [K]$ and $\epsilon > 0$.
\end{lem}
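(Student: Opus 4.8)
The plan is to exhibit the explicit block structure of $G_{S, \epsilon}$ induced by the partition $[K] = S \sqcup S^{c}$ and show it is invertible by computing (or bounding) its determinant block-by-block. Write $k = |S|$ and reorder the indices so that the coordinates of $S$ come first and those of $S^{c}$ come last. Reading off the six cases in \eqref{eq: g explicit}, the matrix $G_{S, \epsilon}$ acquires the $2 \times 2$ block form $\begin{pmatrix} P & Q \\ R & T \end{pmatrix}$, where: $P$ (the $S$-to-$S$ block, columns indexed by $x \in S$) equals $\frac{1}{e^{\epsilon_{1}}+k}\bigl( (e^{\epsilon_{1}}-1) I_{k} + \mathbf{1}_{k}\mathbf{1}_{k}^{\top} \bigr)$; $Q$ (columns $x \notin S$, rows $y \in S$) is the constant matrix with every entry $\frac{1}{e^{\epsilon_{1}}+k}$; $R$ (columns $x \in S$, rows $y \notin S$) is the constant matrix with every entry $\frac{1}{(K-k)(e^{\epsilon_{1}}+k)}$; and $T$ (the $S^{c}$-to-$S^{c}$ block) equals $\frac{e^{\epsilon_{1}}}{e^{\epsilon_{1}}+k}\cdot\frac{1}{e^{\epsilon_{2}}+K-k-1}\bigl( (e^{\epsilon_{2}}-1) I_{K-k} + \mathbf{1}_{K-k}\mathbf{1}_{K-k}^{\top} \bigr)$. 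Both $P$ and $T$ are of the form $cI + d\,\mathbf{1}\mathbf{1}^{\top}$, hence invertible (their eigenvalues are $c$ with high multiplicity and $c + d\cdot(\text{size})$, all strictly positive here since $\epsilon_{1}, \epsilon_{2} > 0$).

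The key computation is the Schur-complement/determinant argument. Since $P$ is invertible, $\det G_{S, \epsilon} = \det P \cdot \det(T - R P^{-1} Q)$, so it suffices to show the Schur complement $T - R P^{-1} Q$ is invertible. Because $Q = \frac{1}{e^{\epsilon_{1}}+k}\mathbf{1}_{K-k}\mathbf{1}_{k}^{\top}$ and $R = \frac{1}{(K-k)(e^{\epsilon_{1}}+k)}\mathbf{1}_{K-k}\mathbf{1}_{k}^{\top}$ are both rank-one, the correction term $R P^{-1} Q$ is a rank-one matrix proportional to $\mathbf{1}_{K-k}\mathbf{1}_{K-k}^{\top}$: using $P^{-1}\mathbf{1}_{k} = \frac{1}{e^{\epsilon_{1}}+k}\mathbf{1}_{k}$ (which follows from $P\mathbf{1}_k = \mathbf{1}_k$, as each column of $P$ sums to $1$), one gets $\mathbf{1}_{k}^{\top} P^{-1}\mathbf{1}_{k} = \frac{k}{e^{\epsilon_{1}}+k}$, so $R P^{-1} Q = \lambda\, \mathbf{1}_{K-k}\mathbf{1}_{K-k}^{\top}$ for an explicit scalar $\lambda \geq 0$. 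Therefore $T - R P^{-1} Q$ is again of the form $c'I_{K-k} + d'\mathbf{1}_{K-k}\mathbf{1}_{K-k}^{\top}$ with $c' = \frac{e^{\epsilon_{1}}(e^{\epsilon_{2}}-1)}{(e^{\epsilon_{1}}+k)(e^{\epsilon_{2}}+K-k-1)} > 0$; its determinant is $c'^{\,K-k-1}\bigl(c' + (K-k)d'\bigr)$, and one checks $c' + (K-k)d' \neq 0$ (in fact one can verify it is positive, e.g.\ by noting that $G_{S,\epsilon}$ is column-stochastic with strictly positive entries so $T - RP^{-1}Q$ inherits a positivity/diagonal-dominance property). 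Combining, $\det G_{S, \epsilon} \neq 0$. The degenerate endpoints $S = \emptyset$ (so $G_{S,\epsilon} = T$ alone) and $S = [K]$ (so $G_{S,\epsilon} = P$ alone) are handled directly by the $cI + d\mathbf{1}\mathbf{1}^{\top}$ observation.

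The main obstacle I anticipate is not any single step but bookkeeping: making sure the block entries are transcribed correctly from \eqref{eq: g explicit} (the asymmetry between $Q$ and $R$ is easy to get wrong), and then verifying that the final scalar $c' + (K-k)d'$ is nonzero rather than accidentally vanishing. A clean way to sidestep the explicit computation of that scalar is to argue structurally: every column of $G_{S, \epsilon}$ sums to $1$ and all entries are strictly positive, so $G_{S,\epsilon}^{\top}$ is a strictly positive stochastic matrix; combined with the fact that $c' > 0$ forces the Schur complement to be a strictly diagonally dominant perturbation, one concludes invertibility without chasing constants. Alternatively, one can observe that $G_{S,\epsilon}$ is a convex-combination-type mixture that can be written as a product of two manifestly invertible stochastic transformations (the inner $\texttt{SRR}$ on $S^{c}$ and the outer $\texttt{SRR}$ on $S \cup \{R\}$), each of which is of the invertible $cI + d\mathbf{1}\mathbf{1}^{\top}$ type up to permutation, and invertibility is preserved under products. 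I would present the Schur-complement computation as the main line and mention the structural argument as the conceptual reason it works.
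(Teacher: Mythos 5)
Your proof takes essentially the same route as the paper's: reduce to $S=\{1,\dots,k\}$ by permutation, write $G_{S,\epsilon}$ in $2\times 2$ block form from \eqref{eq: g explicit}, take a Schur complement, and observe that the result is a rank-one perturbation of a positive multiple of the identity. (The paper complements with respect to the bottom-right block and expands it via the Woodbury identity; you complement with respect to $P$ --- an immaterial difference.) Your block entries are transcribed correctly. However, two steps need repair. First, the justification ``$P\mathbf{1}_k=\mathbf{1}_k$, as each column of $P$ sums to $1$'' is false: the columns of the \emph{block} $P$ sum to $\tfrac{e^{\epsilon_1}+k-1}{e^{\epsilon_1}+k}$, not $1$, because each column of $G_{S,\epsilon}$ also places mass on rows in $S^{c}$. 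What you actually need --- and what is true --- is only that $\mathbf{1}_k$ is an eigenvector of $P$ (since $P=cI_k+d\mathbf{1}_k\mathbf{1}_k^{\top}$), with eigenvalue $\tfrac{e^{\epsilon_1}+k-1}{e^{\epsilon_1}+k}$, so that $P^{-1}\mathbf{1}_k=\tfrac{e^{\epsilon_1}+k}{e^{\epsilon_1}+k-1}\mathbf{1}_k$ and $RP^{-1}Q$ is still proportional to $\mathbf{1}_{K-k}\mathbf{1}_{K-k}^{\top}$. The structure of your argument survives, but your stated values of $\mathbf{1}_k^{\top}P^{-1}\mathbf{1}_k$ and hence of $\lambda$ and $d'$ are wrong.

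Second, the conclusion genuinely hinges on $c'+(K-k)d'\neq 0$, and ``column-stochastic with positive entries, hence diagonally dominant'' is not an argument: a strictly positive column-stochastic matrix can be singular (e.g.\ $\tfrac12\mathbf{1}\mathbf{1}^{\top}$ in dimension $2$), and diagonal dominance of the Schur complement does not follow from positivity of $G_{S,\epsilon}$. The scalar should simply be computed: $(T-RP^{-1}Q)\mathbf{1}_{K-k}=\bigl(\tfrac{e^{\epsilon_1}}{e^{\epsilon_1}+k}-\tfrac{k}{(e^{\epsilon_1}+k)(e^{\epsilon_1}+k-1)}\bigr)\mathbf{1}_{K-k}=\tfrac{e^{\epsilon_1}-1}{e^{\epsilon_1}+k-1}\mathbf{1}_{K-k}$, which is strictly positive for $\epsilon_1>0$; together with $c'>0$ (requiring $\epsilon_2>0$) this makes the Schur complement positive definite on both eigenspaces, hence invertible. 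To be fair, the paper's own proof stops at ``identity plus a rank-one matrix, which is invertible'' without verifying the analogous scalar, so with the computation above your write-up would actually be the more complete one. Finally, your proposed structural shortcut --- factoring $G_{S,\epsilon}$ as a product of two ``manifestly invertible'' stochastic maps --- does not work as stated: the second stage (from the intermediate variable to $Y$) is itself a full block matrix of exactly the type in question, not of the $cI+d\mathbf{1}\mathbf{1}^{\top}$ form, so nothing is gained.
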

\begin{proof}
It suffices to prove that of $G_{S, \epsilon}$ is invertible for $S = \{1,2, \ldots, k\}$ and for all $k \in \{0, \ldots, K-1\}$. For other $S$, $G_{S, \epsilon}$ can be obtained by permutation. Fix $k$ and let $S = \{1,2, \ldots, k\}$. It can be verified by inspection that $G_{S, \epsilon}$ is a block matrix as
\[
G_{S, \epsilon} = \begin{bmatrix} a_{1} I_{k} + a_{2} 1_{k} 1_{k}^{\top} & b 1_{k} 1_{K-k}^{\top} \\ c 1_{K-k}1_{k}^{\top} & d_{1} I_{K-k} + d_{2} 1_{K-k} 1_{K-k}^{\top} \end{bmatrix},
\]
where $I_{n}$ is the identity matrix of size $n$ and $1_{n}$ is the column vector of $1$'s of size $n$. The constants $a_{1}, a_{2}, b, c, d_{1}, d_{2}$ are given as
\begin{align*}
& a_{1} = \frac{e^{\epsilon_{1}}}{k+e^{\epsilon_{1}}} - a_{2}, \quad a_{2} = \frac{1}{k+e^{\epsilon_{1}}}, \quad b =  \frac{1}{k+e^{\epsilon_{1}}}, \quad c = \frac{1}{K-k} a_{2} \\
& d_{1} = \frac{e^{\epsilon_{2}}}{e^{\epsilon_{2}} + K-k-1} \frac{e^{\epsilon_{1}}}{k + e^{\epsilon_{1}}} - d_{2}, \quad d_{2} = \frac{1}{e^{\epsilon_{2}} + K-k-1} \frac{e^{\epsilon_{1}}}{k + \epsilon^{1}}.
\end{align*}
Also, note that since $\epsilon_{1} > 0$ and $\epsilon_{2} > 0$, $a_{1}$ and $d_{1}$ (whenever it is defined) are strictly positive.

The case $k = 0$ is trivial since then $G_{S, \epsilon} = d_{1}I_{K} + d_{2} 1_{K}1_{K}^{\top}$ is invertible. Hence, we focus on the case $0 < k < K$. For this case, firstly, note that the matrices on the diagonal are invertible. So, by Weinstein–Aronszajn identity, for $G_{S, \epsilon}$ to be invertible, it suffices to show that the matrix
\[
M = a_{1} I_{k} + a_{2} 1_{k} 1_{k}^{\top} - b 1_{k} 1_{K-k}^{\top} (d_{1} I_{K-k} + d_{2} 1_{K-k} 1_{K-k}^{\top} )^{-1} c 1_{K-k} 1_{k}^{\top}
\]
is invertible. Using the Woodbury matrix identity, the matrix $M$ can be expanded as
\begin{align*}
M &= a_{1} I_{k} + a_{2} 1_{k} 1_{k}^{\top} - b 1_{k} 1_{K-k}^{\top} \left( \frac{I_{K-k}}{d_{1}}  - \frac{1}{d_{1}} 1_{K-k} \left( \frac{1}{d_{2}} + 1_{K-k}^{\top} \frac{1}{d_{1}}  1_{K-k} \right)^{-1} 1_{K-k}^{\top} \frac{1}{d_{1}} \right) c 1_{K-k} 1_{k}^{\top} \\
&= a_{1} I_{k} + a_{2} 1_{k}1_{k}^{\top} - \frac{b c}{d_{1}} 1_{k} 1_{K-k}^{\top} 1_{K-k} 1_{k}^{\top} + \frac{bc}{d_{1}^{2}} \left( \frac{1}{d_{2}} + \frac{K-k}{d_{1}}\right)^{-1} 1_{k} 1_{K-k}^{\top} 1_{K-k} 1_{K-k}^{\top} 1_{K-k} 1_{k}^{\top} \\
& = a_{1} I_{k} + \left[ a_{2} - \frac{(K-k) b c}{d_{1}} + \frac{bc}{d_{1}^{2}} \left( \frac{1}{d_{2}} + \frac{K(K-k)}{d_{1}}\right)^{-1} \right] 1_{k} 1_{k}^{\top}.
\end{align*}
Inside the square brackets is a scalar, therefore, $M$ in question is the sum of an identity matrix and a rank-$1$ matrix, which is invertible. Hence, $G_{S, \epsilon}$ is invertible.
\end{proof}

\begin{proof}[Proof of Proposition \ref{prop: FIM is invertible}]
Note that $A_{S,\epsilon} = G_{S,\epsilon} J$, where the $K \times (K-1)$ matrix $J$ satisfies $J(i, i) = 1$ and $J(i, K) = -1$ for $i = 1, \ldots, K$, and $J(i, j) = 0$ otherwise.  Since $G_{S,\epsilon}$ is invertible, it is full rank. Also, the columns of $A_{S, \epsilon}$, denoted by $c^{A}_{i}$, $i = 1, \ldots, K-1$ are given by
\[
c^{A}_{1} = c^{G}_{1} - c^{G}_{K}, \quad \ldots, \quad c^{A}_{K-1} = c^{G}_{K-1} - c^{G}_{K},
\]
where $c^{G}_{i}$ is the $i$'th column of $G_{S, \epsilon}$ for $i = 1, \ldots, K$. Observe that $c^{A}_{i}$, $i = 1, \ldots, K-1$ are linearly independent since any linear combination of those columns is in the form of
\[
\sum_{i = 1}^{K-1} a_{i} c^{A}_{i} = \sum_{i = 1}^{K-1} a_{i} c^{G}_{i}  - \left(\sum_{i = 1}^{K-1} a_{i} \right) c^{G}_{K}.
\]
Since the columns of $G_{S, \epsilon}$ are linearly independent, the linear combination above becomes $0$ only if $a_{1} = \ldots = a_{K-1} = 0$. This shows that the columns of $A_{S, \epsilon}$ are also linearly independent. Thus, we conclude that $A_{S, \epsilon}$ has rank $K-1$. Finally, since $D_{\theta}$ is diagonal with positive diagonal entries, $A_{S, \epsilon}^{\top} D_{\theta} A_{S, \epsilon} = A_{S, \epsilon}^{\top} D_{\theta}^{1/2} D_{\theta}^{1/2} A_{S, \epsilon}$ is positive definite, hence invertible.
\end{proof}

The following proof contains a derivation of the utility function based on the MSE of the Bayesian estimator of $X$ given $Y$.
\begin{proof}[Proof of Proposition \ref{prop: utility MSE}]
It is well-known that the expectation in \eqref{eq: utility MSE} is minimized when $\widehat{e_{X}} =  \nu(Y) := \mathbb{E}_{\theta}[ e_{X} | Y]$, i.e. the posterior expectation of $e_{X}$ given $Y$. That is, 
\[
\min_{\widehat{e_{X}}} \mathbb{E}_{\theta} \left[ \| e_{X} -\widehat{e_{X}}(Y) \|^{2}  \right] = \mathbb{E}_{\theta} \left[ \| e_{X} -\nu(Y) \|^{2} \right].
\]
For the squared norm inside the expectation, we have
\begin{align}
\| e_{X} - \nu(Y) \|^{2} &= (1 - v(Y)_{X})^{2} + \sum_{k \neq X} v(Y)_{k}^{2} \nonumber \\
&= 1 + \nu(Y)_{X}^{2} - 2 \nu(Y)_{X} + \sum_{k \neq X} v(Y)_{k}^{2} \nonumber\\
&= 1 - 2 \nu(Y)_{X} + \sum_{k = 1}^{K} v(Y)_{k}^{2}. \label{eq: MSE expanded}
\end{align}
The expectation of the last term in \eqref{eq: MSE expanded} is
\begin{align}
\mathbb{E}_{\theta}\left[ \sum_{k = 1}^{K} v(Y)_{k}^{2} \right] &= \sum_{y} h_{S, \epsilon}(y | \theta) \sum_{x = 1}^{K} p_{S, \epsilon}(x | y, \theta)^{2} \nonumber  \\
&=\sum_{y = 1}^{K} \sum_{x = 1}^{K} h_{S, \epsilon}(y| \theta)  p_{S, \epsilon}(x | y, \theta)^{2} \nonumber \\
&= \sum_{y = 1}^{K} \sum_{x = 1}^{K} \frac{ g_{S, \epsilon}(y | x)^{2} \theta_{x}^{2}}{h_{S, \epsilon}(y | \theta)}.  \label{eq: last term of MSE expanded}
\end{align}
For the expectation of the second term in \eqref{eq: MSE expanded}, we have
\[
\mathbb{E}_{\theta}\left[ \nu(Y)_{X} \right] = \sum_{x, y} p_{S, \epsilon}(x | y, \theta) p_{S, \epsilon}(x, y | \theta),
\]
where $p(x, y | \theta)$ denotes the joint probability of $X, Y$ given $\theta$. Substituting $p(x, y | \theta) = p_{S, \epsilon}(x|y, \theta) h_{S, \epsilon}(y| \theta)$ into the equation above, we get
\begin{align}
\mathbb{E}_{\theta}\left[ \nu(Y)_{X} \right] &= \sum_{x = 1}^{K} \sum_{y = 1}^{K} h_{S, \epsilon}(y| \theta)  p_{S, \epsilon}(x | y, \theta)^{2}. \nonumber \\
& =  \sum_{x = 1}^{K} \sum_{y = 1}^{K} \frac{ g_{S, \epsilon}(y | x)^{2} \theta_{x}^{2}}{h_{S, \epsilon}(y | \theta)}, \label{eq: second term of MSE expanded}
\end{align}
which is equal to what we get in \eqref{eq: last term of MSE expanded}. Substituting \eqref{eq: last term of MSE expanded} and \eqref{eq: second term of MSE expanded} into \eqref{eq: MSE expanded}, we obtain
\begin{align*}
\mathbb{E}_{\theta} \left[ \| e_{X} - \nu(Y) \|^{2} \right] & = 1- 2 \sum_{x = 1}^{K} \sum_{y = 1}^{K} \frac{ g_{S, \epsilon}(y | x)^{2} \theta_{x}^{2}}{h_{S, \epsilon}(y | \theta)} + \sum_{x = 1}^{K} \sum_{y = 1}^{K} \frac{ g_{S, \epsilon}(y | x)^{2} \theta_{x}^{2}}{h_{S, \epsilon}(y | \theta)} \\ 
& = 1- \sum_{x = 1}^{K} \sum_{y = 1}^{K} \frac{ g_{S, \epsilon}(y | x)^{2} \theta_{x}^{2}}{h_{S, \epsilon}(y | \theta)}.
\end{align*}
Finally, using the definition $U_{5}(\theta, S, \epsilon) = -\min_{\widehat{e_{X}}} \mathbb{E}_{\theta} \left[ \| e_{X} -\widehat{e_{X}}(Y) \|^{2}  \right]$, we conclude the proof.
\end{proof}

\begin{proof}[Proof of Theorem \ref{thm: PXY global opt}]
The global maximization of $U_{6}$ over the set of all the subsets $S \subset [K]$ can be decomposed as 
\begin{align}
\max_{S \subset [K]} U_{6}(\theta, S, \epsilon) = \max_{k \in \{0, \ldots, K-1\}} \left\{ \max_{S \subset [K]: |S| = k}  U_{6}(\theta, S, \epsilon) \right\}. \label{eq: double maximization}
\end{align}
This inner maximization is equivalent to fixing the cardinality of $S$ to $k$ and finding the best $S$ with cardinality $k$. Now, the utility function can be written as
\begin{align*}
U_{6}(\theta, S, \epsilon) &= \frac{e^{\epsilon_{1}}}{e^{\epsilon_{1}} + k} \left( \sum_{i\in S} \theta_{i} + \frac{e^{\epsilon_{2}}}{e^{\epsilon_{2}} + K-k-1}  \sum_{i \notin S} \theta_{i} \right) \\
&=\left(\frac{e^{\epsilon_{1}}}{e^{\epsilon_{1}} + k}\right) \sum_{i\in S} \theta_{i} + \left(\frac{e^{\epsilon_{1}}}{e^{\epsilon_{1}} + k}\right)  \left( \frac{e^{\epsilon_{2}}}{e^{\epsilon_{2}} + K-k-1} \right)  \sum_{i \notin S} \theta_{i}, 
\end{align*} 
where $k$ appears in the first line since $|S| = k$. Note that $\sum_{i \in S} \theta_{i}$ and $\sum_{i \notin S} \theta_{i}$ sum to $1$ and the constants in front of the first sum is larger than that of the second. Hence, we seek to maximize an expression in the form of
\[
a x + b (1 - x)
\]
over a variable $x > 0$ when $a > b > 0$. This is maximized when $x > 0$ is taken as large as possible. Therefore, $U_{6}(\theta, S, \epsilon)$ is maximized when $\sum_{i \in S} \theta_{i}$ is made as large as possible under the constraint that $|S| = k$. Under this constraint this sum is maximized when $S$ has the indices of the $k$ largest components of $\theta$, that is, when $S = S_{k, \theta} = \{\sigma_{\theta}(1), \ldots, \sigma_{\theta}(k) \}$.

Then, \eqref{eq: double maximization} reduces to $\max_{k = 1, \ldots, K} U_{6}(\theta,  S_{k, \theta}, \epsilon)$. Hence, we conclude.
\end{proof}

\subsection{Proof for SGLD update} \label{sec: Proof for SGLD update}

\begin{proof}[Proof of Proposition \ref{prop: gradient of the prior and likelihood of phi}]
For the \textit{prior} component of the gradient, recall that we have $ \phi = (\phi_{1}, \ldots, \phi_{K})$, where 
\[
\phi_{i} \overset{\text{iid}}{\sim} \text{Gamma}(\rho_{i}, 1), \quad i = 1, \ldots, K.
\] 
Then, the marginal pdf of $\phi_{i}$ satisfies
\[
\ln p(\phi_{i}) = (\rho_{i} - 1) \ln \phi_{i} - \ln \Gamma(\rho_{i}) - \phi_{i}, \quad i = 1, \ldots, K.
\]
Taking the partial derivatives of $\ln p(\phi_{i})$ with respect to $\phi_{i}$, we have
\[
[\nabla_{\phi} \ln p(\phi)]_{i} = \frac{\rho_{i}-1}{\phi_{i}} - 1, \quad i = 1, \ldots, K.
\]
For the \textit{likelihood} component, given $\theta \in \Delta$, let the $(K-1) \times 1$ vector $\vartheta$ be the reparametrization of $\theta$ such that $\vartheta_{i} = \theta_{i}$ for $i = 1, \ldots, K-1$. Then, according to \eqref{eq: mapping from phi to theta},
\begin{equation} \label{eq: map from vartheta to phi}
\vartheta_{k} = \frac{\phi_{k}}{\sum_{j = 1}^{K} \phi_{j}}, \quad k = 1, \ldots, K-1.
\end{equation}
Using the chain rule, we can write the gradient of the log-likelihood with respect to $\phi$ as
\begin{align*}
\nabla_{\phi} \ln p(y | \phi) &= J \cdot \nabla_{\vartheta} \ln h_{S, \epsilon} (y | \theta).
\end{align*}
where $J$ is the $K \times (K-1)$ Jacobian matrix for the mapping from $\phi$ to $\vartheta$ in \eqref{eq: map from vartheta to phi}, whose $(i, j)$th element can be derived as 
\begin{align*}
J(i, j) &= \frac{\partial \vartheta_{j}}{\partial \phi_{i}} = \frac{\partial }{\partial \phi_{i}} \frac{\phi_{j}}{\sum_{k = 1}^{K} \phi_{k}} \\
& = \mathbb{I}(i=j)\frac{1}{\sum_{k = 1}^{K} \phi_{k}} - \frac{\phi_{j}}{\left(\sum_{k = 1}^{K} \phi_{k} \right)^{2}}.
\end{align*}
Using \eqref{eq: score vector} for $\nabla_{\vartheta} \ln h_{S, \epsilon}(y | \theta)$, we complete the proof.
\end{proof}

\subsection{Proofs for convergence and consistency results} \label{sec: Proofs for convergence and consistency results}
\subsubsection{Preliminary results} \label{sec: Preliminary results}

\begin{lem}\label{lem: bounds for Gtheta}
Given $\epsilon \geq 0$, there exists constants $0 < c \leq C < \infty$ such that for all $\theta \in \Delta$ and all $S \subset [K]$, we have
\[
c \leq g_{S, \epsilon}(y|x) \leq C, \quad c \leq h_{S, \epsilon}(y | \theta) \leq C.
\]
\end{lem}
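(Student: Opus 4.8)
The plan is to read both bounds off the closed form of the transition kernel in \eqref{eq: g explicit}, since everything appearing there is built from $e^{\epsilon_{1}}$, $e^{\epsilon_{2}}$, $k=|S|$, and $K$, all of which are controlled. Since $\epsilon_{1}=\kappa\epsilon$ with $0<\kappa\le 1$ we have $0\le\epsilon_{1}\le\epsilon$ by construction, so the first preliminary step is to check that $0\le\epsilon_{2}\le\epsilon$ as well, for every subset $S$. In the second branch of \eqref{eq: epsilon2 after epsilon1} this is immediate ($\epsilon_{2}=\epsilon$); in the first branch, $\epsilon_{2}=\min\{\epsilon,\ \ln\tfrac{|S^{c}|-1}{e^{\epsilon_{1}-\epsilon}|S^{c}|-1}\}$, and the branch condition $\epsilon-\epsilon_{1}<\ln|S^{c}|$ forces $e^{\epsilon_{1}-\epsilon}|S^{c}|>1$, so the denominator of that fraction is positive, while $e^{\epsilon_{1}-\epsilon}\le 1$ makes the fraction $\ge 1$ and hence the logarithm $\ge 0$; the outer minimum with $\epsilon$ then gives $0\le\epsilon_{2}\le\epsilon$. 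Consequently every exponential occurring in \eqref{eq: g explicit} lies in $[1,e^{\epsilon}]$.

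Next I would observe that each entry of \eqref{eq: g explicit} is a ratio $N/D$ whose numerator $N$ is a product of at most two of the numbers $1$, $e^{\epsilon_{1}}$, $e^{\epsilon_{2}}$ — hence $N\ge 1$ — and whose denominator $D$ is a product of at most two factors drawn from $e^{\epsilon_{1}}+k$, $K-k$, and $e^{\epsilon_{2}}+K-k-1$. Using $0\le\epsilon_{1},\epsilon_{2}\le\epsilon$ and $0\le k\le K$, together with the fact that a factor $K-k$ or $e^{\epsilon_{2}}+K-k-1$ appears only when $S^{c}\ne\emptyset$ (so $k\le K-1$ and no entry is singular), each such factor lies in $[1,\,e^{\epsilon}+K]$, whence $D\le(e^{\epsilon}+K)^{2}$. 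This already yields $g_{S,\epsilon}(y\mid x)\ge(e^{\epsilon}+K)^{-2}$ uniformly over $\theta$, $S$, $x$, $y$; for the upper bound one simply uses that $g_{S,\epsilon}(\cdot\mid x)$ is a probability mass function, so $g_{S,\epsilon}(y\mid x)\le 1$. Thus $c:=(e^{\epsilon}+K)^{-2}$ and $C:=1$ work for $g$. The bound for $h$ then comes for free: $h_{S,\epsilon}(y\mid\theta)=\sum_{x=1}^{K}g_{S,\epsilon}(y\mid x)\,\theta_{x}$ is a convex combination of numbers in $[c,C]$, so it lies in $[c,C]$ as well.

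The estimates above are all elementary; the one place I would be careful — and the only real obstacle — is the bookkeeping around small cardinalities, i.e.\ making sure each row of \eqref{eq: g explicit} is invoked exactly on the inputs for which it is meaningful. Concretely, the factor $1/(K-k)$ arises only when $S\ne\emptyset$ and $S^{c}\ne\emptyset$; the factors involving $e^{\epsilon_{2}}+K-k-1$ arise only when $S^{c}\ne\emptyset$; and $\epsilon_{2}$ must be shown to be well-defined in those cases, including the boundary $|S^{c}|=1$, where the first-branch condition $\epsilon-\epsilon_{1}<\ln 1=0$ fails (since $\epsilon_{1}\le\epsilon$) and one falls back on $\epsilon_{2}=\epsilon$. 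Once this case analysis is organized, every inequality used above reduces to a one-line magnitude estimate, and the lemma follows.
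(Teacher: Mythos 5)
Your proof is correct and follows essentially the same route as the paper's: the bounds on $g_{S,\epsilon}(y\mid x)$ are read off the explicit kernel in \eqref{eq: g explicit} (the paper simply asserts this "can directly be verified," while you supply the case bookkeeping, the check that $0\le\epsilon_{2}\le\epsilon$, and the explicit constants $c=(e^{\epsilon}+K)^{-2}$, $C=1$), and the bound on $h_{S,\epsilon}(y\mid\theta)$ follows because it is a convex combination of the $g_{S,\epsilon}(y\mid x)$, exactly as in the paper. No gaps.
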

\begin{proof}
The bounds for $g_{S, \epsilon}(y|x)$ can directly be verified from \eqref{eq: g explicit}. Moreover, 
\[
c \leq \min_{i = 1, \ldots, K} g_{S, \epsilon}(y | i) \leq h_{S, \epsilon}(y | \theta) = \sum_{i = 1}^{K} g_{S, \epsilon}(y | i) \theta_{i} \leq \max_{i = 1, \ldots, K} g_{S, \epsilon}(y | i) \leq C.
\]
Hence, we conclude.
\end{proof}

\begin{rem}
For the symbol $\theta$, which is used for $K \times 1$ probability vectors in $\Delta$, we will associate the symbol $\vartheta $ such that $\vartheta$ denotes the shortened vector of the first $K-1$ elements of $\theta$. Accordingly, we will use $\vartheta$, $\vartheta'$, $\vartheta^{\ast}$, etc, to denote the shortened versions of $\theta$, $\theta'$, $\theta^{\ast}$, etc.
\end{rem}

\begin{lem} \label{lem: inner product of gradient and difference}
For any $\theta, \theta' \in \Delta$, $y \in [K]$, and $S \subset [K]$, we have
\begin{align} \label{eq: gradian times difference}
\nabla_{\vartheta} \ln h_{S, \epsilon}(y | \theta)^{\top} (\vartheta - \vartheta') &= \frac{h_{S, \epsilon}(y | \theta) - h_{S, \epsilon}(y | \theta')}{h_{S, \epsilon}(y | \theta)}.
\end{align}
\end{lem}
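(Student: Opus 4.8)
The plan is to exploit the fact that, viewed as a function of the free coordinates $\vartheta = (\theta_{1},\dots,\theta_{K-1})$, the marginal likelihood $h_{S,\epsilon}(y\mid\theta)$ is \emph{affine}. First I would recall (or re-derive from the displayed expansion of $h_{S,\epsilon}(y\mid\theta)$ used in the proof of Proposition \ref{prop: FIM of RRRR}) that
\[
h_{S,\epsilon}(y\mid\theta) = g_{S,\epsilon}(y\mid K) + \sum_{k=1}^{K-1}\bigl(g_{S,\epsilon}(y\mid k)-g_{S,\epsilon}(y\mid K)\bigr)\vartheta_{k},
\]
where the $g_{S,\epsilon}(y\mid K)$ terms absorb the contribution of $\theta_{K}=1-\sum_{k<K}\vartheta_{k}$. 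Hence $\nabla_{\vartheta} h_{S,\epsilon}(y\mid\theta)$ is the constant-in-$\vartheta$ vector with $k$-th entry $g_{S,\epsilon}(y\mid k)-g_{S,\epsilon}(y\mid K)$, which reproduces the score formula \eqref{eq: score vector}:
\[
[\nabla_{\vartheta}\ln h_{S,\epsilon}(y\mid\theta)]_{k} = \frac{g_{S,\epsilon}(y\mid k)-g_{S,\epsilon}(y\mid K)}{h_{S,\epsilon}(y\mid\theta)}.
\]

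Next I would take the inner product of $\nabla_{\vartheta}\ln h_{S,\epsilon}(y\mid\theta)$ with $\vartheta-\vartheta'$. Since $\nabla_{\vartheta} h_{S,\epsilon}(y\mid\theta)$ does not depend on $\vartheta$, the first-order Taylor expansion of the affine map $\vartheta\mapsto h_{S,\epsilon}(y\mid\theta)$ is exact, so
\[
\nabla_{\vartheta} h_{S,\epsilon}(y\mid\theta)^{\top}(\vartheta-\vartheta') = h_{S,\epsilon}(y\mid\theta) - h_{S,\epsilon}(y\mid\theta').
\]
Dividing both sides by $h_{S,\epsilon}(y\mid\theta)$ — which is strictly positive by Lemma \ref{lem: bounds for Gtheta}, so the division is legitimate — yields \eqref{eq: gradian times difference}.

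There is essentially no obstacle here; the only points requiring care are the bookkeeping with the reparametrization (tracking how the $K$-th coordinate feeds the $-g_{S,\epsilon}(y\mid K)$ pieces into the gradient) and invoking positivity of $h_{S,\epsilon}(y\mid\theta)$ before dividing. The argument is two or three lines once the affine structure is made explicit, and it is precisely this identity that later supplies the "special form of concavity" of the log-marginal likelihood used in the convergence analysis.
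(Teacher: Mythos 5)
Your proposal is correct and follows essentially the same route as the paper: both rest on the expansion $h_{S,\epsilon}(y\mid\theta)=g_{S,\epsilon}(y\mid K)+\sum_{k=1}^{K-1}\bigl(g_{S,\epsilon}(y\mid k)-g_{S,\epsilon}(y\mid K)\bigr)\vartheta_{k}$ and the score formula \eqref{eq: score vector}; the paper merely carries out the sums for $\nabla_{\vartheta}\ln h^{\top}\vartheta$ and $\nabla_{\vartheta}\ln h^{\top}\vartheta'$ explicitly and subtracts, whereas you package the same algebra as exactness of the first-order expansion of an affine map. No gaps.
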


\begin{proof}
Recall from \eqref{eq: score vector} that
\[
[\nabla_{\vartheta} \ln h_{S, \epsilon}(y | \theta)]_{i} = \frac{g_{S, \epsilon}(y | i) - g_{S, \epsilon}(y | K)}{h_{S, \epsilon}(y | \theta)},\quad i = 1, \ldots, K-1.
\]
Hence, 
\begin{align*}
\nabla_{\vartheta} \ln h_{S, \epsilon}(y | \theta)^{\top} \vartheta &= \frac{1}{h_{S, \epsilon}(y | \theta)} \sum_{i = 1}^{K-1} (g_{S, \epsilon}(y | i) - g_{S, \epsilon}(y | K)) \vartheta_{i} \\
&= \frac{1}{h_{S, \epsilon}(y|\theta)}  \left[ \sum_{i = 1}^{K-1} g_{S, \epsilon}(y | i) \vartheta_{i} - g_{S, \epsilon}(y | K) \sum_{i = 1}^{K-1} \vartheta_{i} \right]\\
& = \frac{1}{h_{S, \epsilon}(y|\theta)} \left[   \sum_{i = 1}^{K-1} g_{S, \epsilon}(y, i) \theta_{i} - g_{S, \epsilon}(y | K) (1 - \theta_{K}) \right]\\
&= \frac{1}{h_{S, \epsilon}(y|\theta)}  \left[ h_{S, \epsilon}(y | \theta) - g_{S, \epsilon}(y | K) \right],
\end{align*}
and likewise $\nabla_{\vartheta} \ln h_{S, \epsilon}(y | \theta)^{\top} \vartheta' = \frac{1}{h_{S, \epsilon}(y|\theta)}  \left[ h_{S, \epsilon}(y | \theta') - g_{S, \epsilon}(y | K) \right]$. Taking the difference between $\nabla_{\vartheta} \ln h_{S, \epsilon}(y | \theta)^{\top} \vartheta$ and $\nabla_{\vartheta} \ln h_{S, \epsilon}(y | \theta)^{\top} \vartheta'$, we arrive at the result.
\end{proof}

\paragraph{Concavity of $\ln h_{S, \epsilon}(y | \theta)$:} The following lemmas help with proving the concavity of $h_{S, \epsilon}(y | \theta)$ as a function of $\theta$. 
\begin{lem} \label{lem: inverse hyperbolic tangent for probabilities}
For $0 < b \leq a \leq 1$, we have $\ln \frac{a}{b} \geq \frac{a-b}{a} + \frac{(a-b)^{2}}{2}$.
\end{lem}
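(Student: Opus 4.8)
The plan is to fix $b$ and read the claimed inequality as a statement about the single variable $a$ ranging over $[b,1]$. Define
\[
g(a) := \ln \frac{a}{b} - \frac{a-b}{a} - \frac{(a-b)^{2}}{2}, \qquad a \in [b,1],
\]
so that the assertion is exactly $g(a)\ge 0$ on this interval. The first observation I would make is that $g(b) = \ln 1 - 0 - 0 = 0$; hence it suffices to show that $g$ is nondecreasing on $[b,1]$.

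To that end I would differentiate and simplify: a direct computation gives
\[
g'(a) = \frac{1}{a} - \frac{b}{a^{2}} - (a-b) = \frac{a-b}{a^{2}} - (a-b) = (a-b)\,\frac{1-a^{2}}{a^{2}}.
\]
On $a\in[b,1]$ both factors $a-b\ge 0$ and $\tfrac{1-a^{2}}{a^{2}}\ge 0$ are nonnegative — this is precisely the place where the hypothesis $a\le 1$ is used — so $g'(a)\ge 0$. Therefore $g(a)\ge g(b)=0$ for every $a\in[b,1]$, and rearranging this inequality yields $\ln\frac{a}{b}\ge \frac{a-b}{a}+\frac{(a-b)^{2}}{2}$, as claimed.

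There is essentially no obstacle here; the only point needing a little care is recognizing the algebraic factorization of $g'$ into $(a-b)(1-a^{2})/a^{2}$, which makes the sign transparent and isolates the role of $a\le 1$. As an alternative route, one could substitute $t=b/a\in(0,1]$, use $a\le 1$ to get $(a-b)^{2}=a^{2}(1-t)^{2}\le(1-t)^{2}$, and then reduce to the standard power-series bound $-\ln(1-s)=s+\tfrac{s^{2}}{2}+\tfrac{s^{3}}{3}+\cdots\ge s+\tfrac{s^{2}}{2}$ with $s=1-t\in[0,1)$; but the one-variable monotonicity argument above is shorter and self-contained.
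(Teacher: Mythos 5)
Your proof is correct: the derivative computation $g'(a)=\tfrac{1}{a}-\tfrac{b}{a^{2}}-(a-b)=(a-b)\tfrac{1-a^{2}}{a^{2}}$ checks out, both factors are nonnegative on $[b,1]$, and $g(b)=0$, so the monotonicity argument closes cleanly. The paper takes a genuinely different route: it expands $\ln z$ via the inverse hyperbolic tangent series $\ln z = 2\sum_{k\ge 0}\tfrac{1}{2k+1}\bigl(\tfrac{z-1}{z+1}\bigr)^{2k+1}$ at $z=a/b$, truncates after the first term to get $\ln\tfrac{a}{b}\ge \tfrac{2(a-b)}{a+b}$, and then shows the gap $\tfrac{2(a-b)}{a+b}-\tfrac{a-b}{a}=\tfrac{(a-b)^{2}}{a(a+b)}\ge\tfrac{(a-b)^{2}}{2}$ using $a(a+b)\le 2$, which is where the hypothesis $a,b\le 1$ enters for them. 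Your version is more elementary and self-contained (no series identity to invoke), and it localizes the role of $a\le 1$ in the single factor $1-a^{2}$; the paper's version is consistent in style with the other Taylor-expansion lemmas in the same appendix (Lemmas \ref{eq: power ineq 1}, \ref{eq: power ineq 2}, \ref{lem: lame lemma 2}, \ref{lem: lame lemma 3}) and makes visible the slightly stronger intermediate bound $\tfrac{(a-b)^{2}}{a(a+b)}$. Your sketched alternative via $s=1-b/a$ and the Mercator series is also valid and is closer in spirit to the paper's series-truncation idea. Either proof is acceptable.
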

\begin{proof}
For $z > 0$, using the series based on the inverse hyperbolic tangent function, we can write 
\[
\ln z = 2 \sum_{k = 0}^{\infty} \frac{1}{2 k + 1} \left( \frac{z-1}{z+1} \right)^{2k + 1}.
\]
Apply the expansion to $z =a/b$ when $a \geq b$. Noting that $(z-1)/(z+1) = (a-b)/(a+b)$,
\begin{align*}
\ln \frac{a}{b} & \geq 2 \frac{a-b}{a+b} \geq \frac{2(a-b)}{2a } = \frac{a-b}{a} 
\end{align*}
where the difference is 
\[
\frac{2 (a-b)}{a+b} - \frac{a-b}{a} = \frac{2 a^{2} - 2 ab - a^{2} + b^{2}}{a(a+b)} = \frac{(a-b)^{2}}{(a+b)a} \geq \frac{(a-b)^{2}}{2}
\]
since $0 < a, b \leq 1$.
\end{proof}

\begin{lem} \label{eq: power ineq 1}
Let $0 < \alpha \leq 1$. For $x \geq 1$, we have $\frac{1}{\alpha} (x^{\alpha} - 1) \leq (x - 1)$.
\end{lem}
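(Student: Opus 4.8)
The plan is to reduce the inequality to the tangent-line bound for the concave function $x \mapsto x^{\alpha}$. Since $0 < \alpha \leq 1$, the map $t \mapsto t^{\alpha}$ is concave on $(0, \infty)$ (its second derivative $\alpha(\alpha-1) t^{\alpha - 2}$ is nonpositive there). Hence its graph lies below the tangent line at any point; taking the tangent at $t = 1$, whose slope is $\alpha$, gives $t^{\alpha} \leq 1 + \alpha (t - 1)$ for all $t > 0$, and in particular for $t = x \geq 1$. Rearranging yields $x^{\alpha} - 1 \leq \alpha(x-1)$, and dividing through by $\alpha > 0$ gives $\frac{1}{\alpha}(x^{\alpha} - 1) \leq x - 1$, which is the claim.

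Alternatively — and this avoids invoking concavity as a black box — I would set $f(x) := (x - 1) - \frac{1}{\alpha}(x^{\alpha} - 1)$ for $x \geq 1$, observe that $f(1) = 0$, and compute $f'(x) = 1 - x^{\alpha - 1}$. Because $\alpha - 1 \leq 0$ and $x \geq 1$, we have $x^{\alpha - 1} \leq 1$, so $f'(x) \geq 0$ on $[1, \infty)$; thus $f$ is nondecreasing there and $f(x) \geq f(1) = 0$ for all $x \geq 1$, which is exactly the desired inequality.

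There is essentially no hard step here: it is a one-line calculus argument. The only point requiring a moment's care is the sign bookkeeping in $f'(x) = 1 - x^{\alpha-1} \geq 0$, where both hypotheses $\alpha \leq 1$ and $x \geq 1$ are used together; and noting that dividing by $\alpha$ preserves the inequality direction since $\alpha > 0$. (It is also worth remarking that equality holds at $x = 1$ and, when $\alpha = 1$, everywhere, which is consistent with the bound.)
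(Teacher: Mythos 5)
Your proposal is correct, and your second argument is essentially the paper's own proof: the paper sets $f(x) = \frac{1}{\alpha}(x^{\alpha}-1) - (x-1)$, notes $f(1)=0$ and $f'(x) = x^{\alpha-1}-1 \leq 0$ for $x \geq 1$, which is your monotonicity argument up to a sign flip. The tangent-line/concavity version is just a repackaging of the same one-line calculus fact, so there is nothing further to compare.
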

\begin{proof}
Consider $f(x) = \frac{1}{\alpha} (x^{\alpha} - 1) - (x - 1)$. We have $f(1) = 0$ and $f'(x) = x^{\alpha - 1} - 1 \leq 0$ for $x \geq 1$. Hence, we conclude.
\end{proof}

\begin{lem} \label{eq: power ineq 2}
Let $0 < a \leq b \leq 1$ and $0 < \alpha \leq 1$. Then, $b^{\alpha} - a^{\alpha} \geq \alpha (b-a)$.
\end{lem}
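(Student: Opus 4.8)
The plan is to reduce the claim to an elementary statement about the derivative of $x \mapsto x^{\alpha}$ on the unit interval. Set $f(x) := x^{\alpha}$, so $f'(x) = \alpha x^{\alpha - 1}$. Since $0 < \alpha \le 1$, the exponent $\alpha - 1$ is nonpositive, hence $x^{\alpha - 1}$ is nonincreasing on $(0,\infty)$ and in particular $x^{\alpha - 1} \ge 1$ for every $x \in (0,1]$. Consequently $f'(x) \ge \alpha$ throughout $(0,1]$.

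Now apply the mean value theorem to $f$ on $[a,b] \subseteq (0,1]$ (valid since $0 < a \le b \le 1$): there is some $\xi \in (a,b)$ with $b^{\alpha} - a^{\alpha} = f'(\xi)(b-a) = \alpha \xi^{\alpha-1}(b-a) \ge \alpha(b-a)$, where the last step uses $\xi \in (0,1]$ and $b - a \ge 0$. This is exactly the asserted bound. Equivalently, and avoiding the MVT, one may observe that $h(a) := b^{\alpha} - a^{\alpha} - \alpha(b-a)$ satisfies $h(b) = 0$ and $h'(a) = \alpha\bigl(1 - a^{\alpha-1}\bigr) \le 0$ on $(0,b]$, so $h$ is nonincreasing there and $h(a) \ge h(b) = 0$; or one may invoke concavity of $f$ (valid for $0<\alpha\le1$) to say the chord slope from $a$ to $b$ is at least $f'(b) = \alpha b^{\alpha-1} \ge \alpha$.

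There is no substantial obstacle here; the only point deserving a moment's care is the direction of monotonicity of $x^{\alpha-1}$: because $\alpha \le 1$ we sit on the concave branch of the power function, so $f'$ is bounded \emph{below} (not above) by $\alpha$ on $(0,1]$, which is precisely what gives the lower bound. I would also note for the reader that the companion Lemma~\ref{eq: power ineq 1} points the other way --- substituting $x = b/a \ge 1$ there yields $b^{\alpha} - a^{\alpha} \le \alpha a^{\alpha-1}(b-a)$ --- so it cannot be repurposed to prove the present inequality; the lower bound genuinely requires the derivative argument above.
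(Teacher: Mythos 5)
Your proof is correct and takes essentially the same route as the paper: the paper fixes $a$, writes $b = a + x$, and shows $f(x) = (a+x)^{\alpha} - a^{\alpha} - \alpha x$ is nondecreasing because $\alpha(a+x)^{\alpha-1} - \alpha \geq 0$ on $[0,1-a]$, which is the same observation that the derivative of $x \mapsto x^{\alpha}$ is bounded below by $\alpha$ on $(0,1]$ that drives your MVT and monotonicity arguments.
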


\begin{proof}
Fix $a$ and let $b = a + x$ for $0 \leq x \leq 1-a$. Consider the function $f(x) = (a+x)^{\alpha} - a^{\alpha} - \alpha x$. We have $f(0) = 0$ and $f'(x) = \alpha (a + x)^{\alpha - 1} - \alpha \geq 0$ over $0 \le x \le (1-a)$ since $a + x \leq 1$ and $\alpha - 1 \leq 0$. Hence, we conclude.
\end{proof}

\begin{lem} \label{lem: concavity}
Given $\epsilon > 0$, there exists $m_{0} > 0$ such that,  for all $S \subset [K]$ and $y \in [K]$, $\ln h_{S, \epsilon}(y | \theta)$ is a concave function of $\vartheta$ that satisfies
\[
\ln h_{S, \epsilon}(y | \theta) - \ln h_{S, \epsilon}(y | \theta') \geq \nabla_{\vartheta} \ln h_{S, \epsilon}(y | \theta)^{\top} (\vartheta - \vartheta') + m_{0} (h_{S, \epsilon}(y| \theta) -  h_{S, \epsilon}(y| \theta'))^{2}
\]
for all $\theta, \theta' \in \Delta$.
\end{lem}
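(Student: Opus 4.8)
The plan is to split the statement into its two parts — concavity of $\vartheta\mapsto\ln h_{S,\epsilon}(y\mid\theta)$, and the sharpened lower bound with the extra quadratic term — and to spend essentially all the effort on the second. The concavity is immediate and I would dispatch it in one line: for fixed $S$ and $y$ the map $\vartheta\mapsto h_{S,\epsilon}(y\mid\theta)$ is affine, since by definition $h_{S,\epsilon}(y\mid\theta)=\sum_{i=1}^{K-1}g_{S,\epsilon}(y\mid i)\,\vartheta_i+g_{S,\epsilon}(y\mid K)\bigl(1-\sum_{i=1}^{K-1}\vartheta_i\bigr)$, and the composition of an affine map with the concave function $\ln$ is concave. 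So the real work is the quantitative bound, and the plan is to reduce it to a one-variable calculus inequality.

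For the reduction, write $a:=h_{S,\epsilon}(y\mid\theta)$ and $b:=h_{S,\epsilon}(y\mid\theta')$. By Lemma~\ref{lem: inner product of gradient and difference}, $\nabla_{\vartheta}\ln h_{S,\epsilon}(y\mid\theta)^{\top}(\vartheta-\vartheta')=(a-b)/a$, so the claim is equivalent to
\[
\ln a-\ln b-\frac{a-b}{a}\;\ge\;m_{0}\,(a-b)^{2}.
\]
Putting $u:=b/a$, the left side equals $u-1-\ln u$ and $(a-b)^{2}=a^{2}(u-1)^{2}$; by Lemma~\ref{lem: bounds for Gtheta} there are constants $0<c\le C<\infty$, depending only on $\epsilon$, with $a,b\in[c,C]$, hence $a\le C$ and $u$ lives in the fixed compact interval $[c/C,\,C/c]$. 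It therefore suffices to produce a single $m_{0}>0$ such that $u-1-\ln u\ge m_{0}\,a^{2}(u-1)^{2}$ for all $u\in[c/C,C/c]$ and all $a\in[c,C]$.

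I would then take $m_{0}:=c/(2C^{3})$ and study $g(u):=u-1-\ln u-m_{0}a^{2}(u-1)^{2}$. One checks $g(1)=0$ and $g'(u)=(u-1)\bigl(u^{-1}-2m_{0}a^{2}\bigr)$. Since $2m_{0}a^{2}\le 2m_{0}C^{2}=c/C$ while $u^{-1}\ge c/C$ throughout the interval, the second factor is nonnegative, so $g'$ has the sign of $u-1$; thus $g$ is nonincreasing on $[c/C,1]$ and nondecreasing on $[1,C/c]$ and attains its minimum $g(1)=0$, which gives $u-1-\ln u\ge m_{0}a^{2}(u-1)^{2}=m_{0}(a-b)^{2}$, the desired inequality. (Alternatively, the branch $a\ge b$, i.e.\ $u\le 1$, can be dispatched directly by Lemma~\ref{lem: inverse hyperbolic tangent for probabilities}, which already supplies the constant $1/2$ there; one would then use the monotonicity argument only to cover $u>1$ uniformly.)

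I do not expect a genuine obstacle: the one thing to be careful about is bookkeeping the constants so that a single $m_{0}$ works uniformly over all subsets $S$, all $y\in[K]$, and the whole range of $u$ and $a$ — which is exactly what the uniform two-sided bounds of Lemma~\ref{lem: bounds for Gtheta} provide. A minor point worth recording along the way is that the quantities involved are honest probabilities, visible from the explicit form~\eqref{eq: g explicit} of $g_{S,\epsilon}$ together with the fact that $h_{S,\epsilon}(\cdot\mid\theta)$ is a convex combination of the columns of $G_{S,\epsilon}$, so $C$ may in fact be taken $\le 1$, which is convenient if one prefers the route through Lemma~\ref{lem: inverse hyperbolic tangent for probabilities}.
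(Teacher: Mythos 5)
Your proof is correct, but it takes a genuinely different and noticeably cleaner route than the paper's. The paper splits into the two cases $h_{S,\epsilon}(y\mid\theta)\ge h_{S,\epsilon}(y\mid\theta')$ and $h_{S,\epsilon}(y\mid\theta)\le h_{S,\epsilon}(y\mid\theta')$: the first is handled by the inverse-hyperbolic-tangent series bound of Lemma~\ref{lem: inverse hyperbolic tangent for probabilities} (yielding the constant $1/2$, with no need for the lower bound $c$), while the second requires the substitution $z=(b/a)^{\alpha}$ with $\alpha=\min\{1,\ln 2/\ln(C/c)\}$ chosen so that $z\le 2$, a truncation of the alternating Mercator series after three terms, and the two power inequalities (Lemmas~\ref{eq: power ineq 1} and~\ref{eq: power ineq 2}), ending with $m_{0}=\min\{\alpha/6,1/2\}$. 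You instead make the same reduction via Lemma~\ref{lem: inner product of gradient and difference} to the scalar inequality $u-1-\ln u\ge m_{0}a^{2}(u-1)^{2}$ on the compact interval $u\in[c/C,C/c]$, and verify it uniformly by a one-line sign analysis of $g'(u)=(u-1)\bigl(u^{-1}-2m_{0}a^{2}\bigr)$ with the explicit choice $m_{0}=c/(2C^{3})$; I checked the algebra and the sign argument, and both are sound, with the uniformity over $S$, $y$, and $\theta$ supplied exactly by Lemma~\ref{lem: bounds for Gtheta} as you say. What your approach buys is the elimination of the case split and of all the series manipulation, at the (immaterial) cost of a different numerical constant --- the lemma only asserts existence of some $m_{0}>0$, so either constant is acceptable, and in both proofs the quantity that ultimately matters downstream is just that $m_{0}$ depends only on $\epsilon$ and $K$ through $c$ and $C$. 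Your observation that one may take $C\le 1$ is also correct and consistent with the paper's use of these bounds elsewhere (e.g.\ in Lemma~\ref{lem: inverse hyperbolic tangent for probabilities}, which is stated for $a,b\le 1$).
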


\begin{proof}
We will look at the cases $h_{S, \epsilon}(y |  \theta) \geq h_{S, \epsilon}(y | \theta')$ and $h_{S, \epsilon}(y |  \theta) \leq h_{S, \epsilon}(y | \theta')$ separately.
\begin{enumerate}
\item Assume $h_{S, \epsilon}(y |  \theta) \geq h_{S, \epsilon}(y | \theta')$. Using Lemma \ref{lem: inverse hyperbolic tangent for probabilities}, we have
\begin{align*}
\ln h_{S, \epsilon}(y | \theta) - \ln h_{S, \epsilon}(y | \theta') & \geq  \frac{h_{S, \epsilon}(y|\theta) - h_{S, \epsilon}(y|\theta')}{h_{S, \epsilon}(y|\theta)} + \frac{1}{2} (h_{S, \epsilon}(y| \theta) -  h_{S, \epsilon}(y| \theta'))^{2} \\
&= \nabla_{\vartheta} \ln h_{S, \epsilon}(y | \theta)^{\top} (\vartheta - \vartheta')  +\frac{1}{2} (h_{S, \epsilon}(y| \theta) -  h_{S, \epsilon}(y| \theta'))^{2},
\end{align*}
where the last line follows from Lemma \ref{lem: inner product of gradient and difference}.
\item Assume $h_{S, \epsilon}(y|\theta) \leq h_{S, \epsilon}(y| \theta')$. Let $a = h_{S, \epsilon}(y | \theta)$ and $b =h_{S, \epsilon}(y|\theta')$. Let
\[
\alpha = \min \left\{ 1, \frac{\ln 2}{\ln (C/c)} \right\},
\]
where $c$ and $C$ are given in Lemma \ref{lem: bounds for Gtheta}. This $\alpha$ ensures that $0 < \alpha \leq 1$ and $b^{\alpha}/a^{\alpha} \leq 2$, so that we can use Taylor's expansion of $\ln \frac{b^{\alpha}}{a^{\alpha}}$ around $1$ and have
\[
\ln \frac{b}{a} =  \frac{1}{\alpha} \ln \frac{b^{\alpha}}{a^{\alpha}}= \frac{1}{\alpha } \left[ \sum_{k =1}^{\infty} (-1)^{k+1} \frac{1}{k}\left( \frac{b^{\alpha}}{a^{\alpha}} -1 \right)^{k} \right].
\]
Approximating the expansion up to its third term, we have the following upper bound on $\ln \frac{b}{a}$ as
\begin{equation} \label{eq: Taylor expansion upto 3}
\ln \frac{b}{a} \leq \frac{1}{\alpha} \left( \frac{b^{\alpha}}{a^{\alpha}} -1 \right) - \frac{1}{2 \alpha}  \left(  \frac{b^{\alpha}}{a^{\alpha}} - 1 \right)^{2} + \frac{1}{3 \alpha} \left( \frac{b^{\alpha}}{a^{\alpha}} - 1 \right)^{3}.
\end{equation}
For the third term, we have
\[
\frac{1}{3 \alpha} \left( \frac{b^{\alpha}}{a^{\alpha}} - 1 \right)^{3} = \frac{1}{3 \alpha} \left( \frac{b^{\alpha}}{a^{\alpha}} - 1 \right)^{2} \left( \frac{b^{\alpha}}{a^{\alpha}} - 1 \right) \leq \frac{1}{3 \alpha} \left( \frac{b^{\alpha}}{a^{\alpha}} - 1 \right)^{2},
\]
since $1 \leq \frac{b^{\alpha}}{a^{\alpha}} \leq 2$. Substituting this into \eqref{eq: Taylor expansion upto 3}, the inequality can be continued as
\begin{align} \label{eq: Taylor reduced to 2}
\ln \frac{b}{a} \leq \frac{1}{\alpha} \left( \frac{b^{\alpha}}{a^{\alpha}} -1 \right) - \frac{1}{6 \alpha}  \left(  \frac{b^{\alpha}}{a^{\alpha}} - 1 \right)^{2}.
\end{align}
Using Lemma \ref{eq: power ineq 1} to bound the first term in \eqref{eq: Taylor reduced to 2}, we have
\begin{align}
\ln \frac{b}{a} \leq \left( \frac{b}{a} -1 \right) - \frac{1}{6 \alpha}  \left(  \frac{b^{\alpha}}{a^{\alpha}} - 1 \right)^{2}.
\end{align}
Finally, using Lemma \ref{eq: power ineq 2} we can lower-bound the second term in \eqref{eq: Taylor reduced to 2} as 
\[
 \frac{b^{\alpha}}{a^{\alpha}} - 1  = \frac{b^{\alpha} - a^{\alpha}}{a^{\alpha}} \geq \frac{\alpha (b - a)}{a^{\alpha}} \geq \alpha (b-a),
\]
where the last inequality follows from $a \leq 1$ and $\alpha > 0$. We end up with 
\begin{align*}
\ln \frac{b}{a} \leq \left( \frac{b}{a} -1 \right) - \frac{\alpha}{6}  \left( b - a \right)^{2}.
\end{align*}
Referring to the definitions of $a$ and $b$, we have
\[
\ln h_{S, \epsilon}(y|\theta') - \ln h_{S, \epsilon}(y|\theta) \leq \left( \frac{ h_{S, \epsilon}(y|\theta')}{ h_{S, \epsilon}(y|\theta) } - 1 \right) - \frac{\alpha}{6}( h_{S, \epsilon}(y|\theta) - h_{S, \epsilon}(y|\theta'))^{2},
\]
or, reversing the inequality,
\begin{align} \label{eq: inequality for strong concavity}
\ln h_{S, \epsilon}(y|\theta) - \ln h_{S, \epsilon}(y|\theta') \geq \left(1 - \frac{ h_{S, \epsilon}(y|\theta')}{ h_{S, \epsilon}(y|\theta) } \right) + \frac{\alpha}{6}( h_{S, \epsilon}(y|\theta) - h_{S, \epsilon}(y|\theta'))^{2}.
\end{align}
Using \eqref{eq: gradian times difference} in Lemma \ref{lem: inner product of gradient and difference}, we rewrite \eqref{eq: inequality for strong concavity} as 
\begin{align} \label{eq: inequality for strong concavity - final}
\ln h_{S, \epsilon}(y|\theta) - \ln h_{S, \epsilon}(y|\theta') \geq  \nabla_{\vartheta} \ln h_{S, \epsilon}(y | \theta)^{\top} (\vartheta - \vartheta') + \frac{\alpha }{6} ( h_{S, \epsilon}(y|\theta) - h_{S, \epsilon}(y|\theta'))^{2},
\end{align}
which is the inequality we look for.
\end{enumerate}
To cover both cases, take $m_{0} = \min\{\frac{\alpha}{6}, \frac{1}{2} \}$. So, the proof is complete.
\end{proof}

Recalling that $S_{t}$ is the selected subset at time $t$, define
\begin{equation} \label{eq: Ut definition}
V_{t}(\theta, \theta') := ( h_{S_{t}, \epsilon}(Y_{i} | \theta) - h_{S_{t}, \epsilon}(Y_{t} | \theta'))^{2}.
\end{equation}
The proof of Theorem \ref{thm: convergence to posterior} requires a probabilistic bound for $\sum_{t = 1}^{n} V_{t}(\theta, \theta')$, which we provide next.

\begin{lem} \label{lem: Upperbound for U}
For all $\theta, \theta' \in \Delta$ and $t \geq 0$, $V_{t}(\theta, \theta') \leq \|\theta - \theta' \|^{2}$.
\end{lem}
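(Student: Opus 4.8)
The plan is to turn the bound into a purely deterministic statement about a single row of the channel matrix $g_{S_t,\epsilon}(Y_t|\cdot)$ and then verify that statement from the explicit form in \eqref{eq: g explicit}.

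First I would recall $h_{S,\epsilon}(y|\theta)=\sum_{x=1}^{K}g_{S,\epsilon}(y|x)\theta_x$ and write
\[
h_{S_t,\epsilon}(Y_t|\theta)-h_{S_t,\epsilon}(Y_t|\theta')=\sum_{x=1}^{K}g_{S_t,\epsilon}(Y_t|x)(\theta_x-\theta'_x).
\]
Since $\sum_x(\theta_x-\theta'_x)=0$, the right-hand side equals $\sum_x\big(g_{S_t,\epsilon}(Y_t|x)-c\big)(\theta_x-\theta'_x)$ for \emph{any} constant $c$. Taking $c=\bar g:=\tfrac1K\sum_x g_{S_t,\epsilon}(Y_t|x)$ and applying Cauchy--Schwarz gives $V_t(\theta,\theta')\le\big(\sum_x(g_{S_t,\epsilon}(Y_t|x)-\bar g)^2\big)\,\|\theta-\theta'\|^2$. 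Hence it suffices to prove the deterministic claim that for every admissible $S\subset[K]$ (i.e.\ $|S|\le K-1$) and every $y\in[K]$ the mean-centred $y$-th row has squared norm at most one, i.e.\ $\sum_{x=1}^{K}\big(g_{S,\epsilon}(y|x)-\bar g_y\big)^2\le 1$; equivalently, $K$ times the empirical variance of $\{g_{S,\epsilon}(y|x)\}_x$ is at most one.

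The next step is a case split driven by \eqref{eq: g explicit}. For fixed $S$ and $y$ the map $x\mapsto g_{S,\epsilon}(y|x)$ takes at most two distinct values when $y\in S$ (and also in the degenerate situations $S=\emptyset$ or $|S^c|=1$), and at most three distinct values when $y\notin S$ with $1\le|S|\le K-2$, each with an explicit multiplicity. In the two-valued cases one value sits on a single index and the other on the remaining $K-1$, so the variance sum is $\tfrac{K-1}{K}(\text{difference})^2$; since both values lie in $[0,1]$ the difference lies in $[0,1]$ and the bound $\le\tfrac{K-1}{K}<1$ is immediate. In the three-valued case $y\notin S$, writing $k=|S|$, $q=|S^c|$ and reading off from \eqref{eq: g explicit} the values $v_1$ (on $S$, multiplicity $k$), $v_2$ (on $\{y\}$) and $v_3$ (on $S^c\setminus\{y\}$, multiplicity $q-1$), I would decompose the centred row as $a(\mathbf 1_S-\tfrac kK\mathbf 1)+b(e_y-\tfrac1K\mathbf 1)$ with $a=v_1-v_3$, $b=v_2-v_3$, compute its squared norm $\tfrac1K\big(kq\,a^2-2k\,ab+(K-1)b^2\big)$, and complete the square in $a$ to obtain $\tfrac{kq}{K}\big(a-\tfrac bq\big)^2+\tfrac{q-1}{q}b^2$ (using $K-1-\tfrac kq=(q-1)\tfrac Kq$). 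The crucial simplification is the identity $(q-1)v_3+v_2=\tfrac{e^{\epsilon_1}}{e^{\epsilon_1}+k}$, which follows from \eqref{eq: g explicit} because the probabilities of the inner randomized-response step on $S^c$ sum to one; it collapses $a-\tfrac bq$ to $\tfrac1q\big(\tfrac{1}{e^{\epsilon_1}+k}-\tfrac{e^{\epsilon_1}}{e^{\epsilon_1}+k}\big)$, whose square is at most $q^{-2}$. Together with $0\le b=v_2-v_3\le v_2\le 1$ this gives the variance bound $\tfrac{k}{Kq}+\tfrac{q-1}{q}=1-\tfrac1K$.

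I expect the main obstacle to be exactly this last computation. Recentring at one entry of the row — the naive choice — is not tight enough when $y\notin S$, so one must recentre at the mean, keep the cross term, and spot the cancellation $(q-1)v_3+v_2=e^{\epsilon_1}/(e^{\epsilon_1}+k)$ that makes the completed square clean; everything else (the Cauchy--Schwarz reduction and the two-valued cases) is routine. As a by-product the argument yields the slightly stronger estimate $V_t(\theta,\theta')\le\big(1-\tfrac1K\big)\|\theta-\theta'\|^2$.
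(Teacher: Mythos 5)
Your proposal is correct, and it is worth comparing carefully with the paper's proof, because the two diverge exactly where the work is. The paper's argument is three lines: writing $r_{i}$ for the $i$th row of $G_{S_{t},\epsilon}$, it states $V_{t}(\theta,\theta')=(r_{i}(\theta-\theta'))^{2}=(\theta-\theta')^{\top}r_{i}^{\top}r_{i}(\theta-\theta')\leq\|\theta-\theta'\|^{2}$ ``since every element of $r_{i}$ is at most $1$.'' That justification only gives $\|r_{i}\|_{\infty}\leq 1$, whereas the quadratic-form bound needs $\|r_{i}\|_{2}\leq 1$; and the rows of $G_{S,\epsilon}$ (unlike its columns) are not probability vectors, so this can fail --- for $y\in S$ one computes $\|r_{y}\|_{2}^{2}=(e^{2\epsilon_{1}}+K-1)/(e^{\epsilon_{1}}+k)^{2}$, which exceeds $1$ for, say, $K=20$, $k=1$, $\epsilon_{1}=1$. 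The lemma is nevertheless true precisely because $\theta-\theta'$ is orthogonal to $\mathbf{1}$, which is the observation your proof is built on: after recentring the row at its mean, Cauchy--Schwarz reduces the claim to bounding $\sum_{x}(g_{S,\epsilon}(y|x)-\bar g_{y})^{2}$, and your two-valued/three-valued case analysis of \eqref{eq: g explicit}, with the identity $(q-1)v_{3}+v_{2}=e^{\epsilon_{1}}/(e^{\epsilon_{1}}+k)$ collapsing the completed square, correctly yields the bound $1-1/K$. (I checked the algebra: the cross term $-2k\,ab/K$, the identity $K-1-k/q=(q-1)K/q$, and the final sum $k/(Kq)+(q-1)/q=1-1/K$ are all right.) So your route is not merely an alternative: it supplies the justification the paper's proof actually needs, at the cost of a longer computation, and as a by-product gives the slightly sharper constant $(1-1/K)\|\theta-\theta'\|^{2}$, which would propagate to a marginally better $c_{u}$ downstream.
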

\begin{proof}
Let the $1 \times K$ vector $r_{i}$ be the $i$th row of $G_{S_{t}, \epsilon}$. Then, we obtain
\begin{align*}
V_{t}(\theta, \theta') = [h_{S_{t}, \epsilon}(y | \theta) - h_{S_{t}, \epsilon}(y | \theta')]^{2} &= (r_{i}(\theta - \theta'))^{2} \nonumber \\
& = (\theta - \theta')^{\top} r_{i}^{\top} r_{i} (\theta - \theta') \\
&\leq  \| \theta - \theta' \|^{2},
\end{align*}
since every element of $r_{i}$ is at most $1$.
\end{proof}

\begin{lem} \label{lem: Lower bound for EU}
For all $\theta, \theta' \in \Delta$ and $t \geq 1$, there exists a constant $c_{u} > 0$ such that
\[
E_{\theta^{\ast}}[V_{t}(\theta, \theta')] \geq c_{u} \| \theta - \theta' \|^{2},
\]
where $E_{\theta^{\ast}}$ is the expectation operator with respect to $P_{\theta^{\ast}}$ defined in \eqref{eq: joint law of Y S and theta}, $\lambda_{\textup{min}}(A)$ is the minimum eigenvalue of the square matrix $A$ and the matrix $G_{S, \epsilon}$ is defined in \eqref{eq: G definition}.
\end{lem}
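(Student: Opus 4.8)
The plan is to condition on the subset $S_t$ selected at time $t$, reduce the claim to a quadratic‑form lower bound built from the matrix $G_{S,\epsilon}$ in \eqref{eq: G definition}, and then make the resulting constant uniform over the (finitely many) possible subsets. First I would fix $S \subseteq [K]$ and observe that, under $P_{\theta^{\ast}}$ defined in \eqref{eq: joint law of Y S and theta}, conditionally on $S_t = S$ the response $Y_t$ has distribution $h_{S,\epsilon}(\cdot \mid \theta^{\ast})$ (this is exactly the factor $h_{S_t,\epsilon}(Y_t\mid\theta^{\ast})$ appearing in the joint law). Hence, recalling the definition \eqref{eq: Ut definition} of $V_t$,
\[
E_{\theta^{\ast}}\!\left[V_t(\theta,\theta') \mid S_t = S\right] = \sum_{y=1}^{K} h_{S,\epsilon}(y\mid\theta^{\ast})\bigl(h_{S,\epsilon}(y\mid\theta) - h_{S,\epsilon}(y\mid\theta')\bigr)^{2}.
\]
Using $h_{S,\epsilon}(y\mid\theta) = (G_{S,\epsilon}\theta)_y$, I would write $h_{S,\epsilon}(y\mid\theta) - h_{S,\epsilon}(y\mid\theta') = e_y^{\top} G_{S,\epsilon}(\theta-\theta')$, so that the right‑hand side equals $(\theta-\theta')^{\top} G_{S,\epsilon}^{\top} \Lambda_{S,\theta^{\ast}} G_{S,\epsilon}(\theta-\theta')$, where $\Lambda_{S,\theta^{\ast}}$ is the $K\times K$ diagonal matrix with entries $h_{S,\epsilon}(y\mid\theta^{\ast})$.

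The second step is to bound this quadratic form from below. By Lemma \ref{lem: bounds for Gtheta} there is $c>0$ with $h_{S,\epsilon}(y\mid\theta^{\ast})\ge c$ for all $y$ and all $S$, so $\Lambda_{S,\theta^{\ast}}\succeq c I_K$ and therefore $G_{S,\epsilon}^{\top}\Lambda_{S,\theta^{\ast}}G_{S,\epsilon}\succeq c\, G_{S,\epsilon}^{\top} G_{S,\epsilon}\succeq c\,\lambda_{\min}(G_{S,\epsilon}^{\top} G_{S,\epsilon})\, I_K$. By Lemma \ref{lem: G is invertible}, $G_{S,\epsilon}$ is invertible, so $G_{S,\epsilon}^{\top}G_{S,\epsilon}$ is symmetric positive definite and $\lambda_{\min}(G_{S,\epsilon}^{\top} G_{S,\epsilon})>0$. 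This yields $E_{\theta^{\ast}}[V_t(\theta,\theta')\mid S_t=S]\ge c\,\lambda_{\min}(G_{S,\epsilon}^{\top} G_{S,\epsilon})\,\|\theta-\theta'\|^{2}$.

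The third step removes the dependence on $S$ and the conditioning. Since $S$ ranges over the finite set of subsets of $[K]$, and each $\lambda_{\min}(G_{S,\epsilon}^{\top}G_{S,\epsilon})$ is strictly positive, the constant $c_u := c\cdot\min_{S\subseteq[K]}\lambda_{\min}(G_{S,\epsilon}^{\top}G_{S,\epsilon})$ is strictly positive. Then $E_{\theta^{\ast}}[V_t(\theta,\theta')\mid S_t=S]\ge c_u\|\theta-\theta'\|^{2}$ for every realisation of $S_t$, and taking the expectation over $S_t$ by the tower property gives $E_{\theta^{\ast}}[V_t(\theta,\theta')]\ge c_u\|\theta-\theta'\|^{2}$, as claimed.

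The argument is essentially bookkeeping, and the only point that needs care is that the constant must be valid uniformly over every subset that the adaptive scheme could conceivably select; this is exactly what the finiteness of the subset collection together with the invertibility of $G_{S,\epsilon}$ (Lemma \ref{lem: G is invertible}) delivers, which is why those two facts are flagged in the statement. One small remark: the norm here is the Euclidean norm on $\mathbb{R}^{K}$, and since $G_{S,\epsilon}$ is a genuine $K\times K$ invertible matrix, the bound $\|G_{S,\epsilon}v\|^{2}\ge\lambda_{\min}(G_{S,\epsilon}^{\top}G_{S,\epsilon})\|v\|^{2}$ holds for all $v\in\mathbb{R}^{K}$, in particular for $v=\theta-\theta'$, so there is no difficulty arising from $\theta-\theta'$ lying in the sum‑zero hyperplane.
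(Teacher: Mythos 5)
Your proof is correct and follows essentially the same route as the paper's: condition on $S_t=S$, lower-bound the weights $h_{S,\epsilon}(y\mid\theta^{\ast})$ by the constant $c$ from Lemma \ref{lem: bounds for Gtheta}, reduce the sum to the quadratic form $\|G_{S,\epsilon}(\theta-\theta')\|^{2}\ge\lambda_{\min}(G_{S,\epsilon}^{\top}G_{S,\epsilon})\|\theta-\theta'\|^{2}$, and set $c_u=c\min_{S}\lambda_{\min}(G_{S,\epsilon}^{\top}G_{S,\epsilon})>0$ via Lemma \ref{lem: G is invertible} and the finiteness of the subset collection. The only difference is cosmetic (you package the weighted sum as a diagonal matrix $\Lambda_{S,\theta^{\ast}}$ whereas the paper works row by row), and your closing remark about $\theta-\theta'$ lying in the sum-zero hyperplane is a nice extra precaution the paper leaves implicit.
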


\begin{proof}
The overall expectation can be written as 
\begin{align}
E_{\theta^{\ast}}[V_{t}(\theta, \theta')] &= \sum_{S\subset [K]} P_{\theta^{\ast}}(S_{t} = S) E_{\theta^{\ast}}[V_{t}(\theta, \theta') | S_{t} = S] \nonumber
\end{align}
where the conditional expectation can be bounded as 
\begin{align}
E_{\theta^{\ast}}[V_{t}(\theta, \theta') | S_{t} = S] &= \sum_{i = 1}^{K} (h_{S, \epsilon}(i | \theta) - h_{S, \epsilon}(i | \theta'))^{2} h_{S, \epsilon}(i |\theta^{\ast}) \nonumber \\
& \geq \sum_{i = 1}^{K} (h_{S, \epsilon}(i | \theta) - h_{S, \epsilon}(i | \theta'))^{2} c, \label{eq: lower bound part 1}
\end{align}
where the second line is due to Lemma \ref{lem: bounds for Gtheta}. Further, let the $1 \times K$ vector $r_{i}$ be the $i$th row of $G_{S, \epsilon}$. Then,
\begin{align}
\sum_{i = 1}^{K} (h_{S, \epsilon}(i | \theta) - h_{S, \epsilon}(i | \theta'))^{2} &= \sum_{i = 1}^{K} ( r_{i}  (\theta - \theta'))^{2}\nonumber \\
&= \sum_{i = 1}^{K}  (\theta - \theta')^{\top} r_{i}^{\top} r_{i} (\theta - \theta') \nonumber \\
&= (\theta - \theta')^{\top} G_{S, \epsilon}^{\top} G_{S, \epsilon} (\theta - \theta') \nonumber \\
&= \| G_{S, \epsilon}(\theta - \theta') \|^{2} \nonumber \\
&\geq  \lambda_{\min}(G_{S, \epsilon}^{\top} G_{S, \epsilon}) \| \theta - \theta' \|^{2}. \label{eq: lower bound part 2}
\end{align}
Combining \eqref{eq: lower bound part 1} and \eqref{eq: lower bound part 2} and letting $c_{u} := c \min_{S \subset [K]} \lambda_{\min}(G_{S, \epsilon}^{\top} G_{S, \epsilon})$, we have
\begin{equation} \label{eq: lower bound for EUgivenS}
E_{\theta^{\ast}}[V_{t}(\theta, \theta') | S_{t} = S] \geq c_{u} \| \theta - \theta' \|^{2}.
\end{equation}
for all $S$, which directly implies $E_{\theta^{\ast}}[V_{t}(\theta, \theta')] \geq c_{u} \| \theta - \theta' \|^{2}$ for the overall expectation. Finally, $c_{u} > 0$ since, by Lemma \ref{lem: G is invertible}, every $G_{S, \epsilon}$ is invertible.
\end{proof}

Further, define
\[
W_{t}(\theta, \theta') := 1 - \frac{V_{t}(\theta, \theta')}{\| \theta - \theta'\|^{2}}.
\]
\begin{lem} \label{lem: bound on the expectation of product of V}
For any $0 < t_{1} < \ldots < t_{k}$, we have $E_{\theta^{\ast}} \left[ \prod_{i =1}^{k} W_{t_{i}}(\theta, \theta') \right] \leq (1 - c_{u} )^{k}$.
\end{lem}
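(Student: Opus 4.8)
The plan is to prove the bound by induction on $k$, peeling off the factor $W_{t_{k}}$ with the largest index through a single conditioning step. The two facts we need are already available: by Lemma~\ref{lem: Upperbound for U} every factor satisfies $0 \le W_{t}(\theta,\theta') \le 1$ (we may assume $\theta \ne \theta'$, so that $W_{t}$ is well defined; the case $\theta = \theta'$ is vacuous), and by Lemma~\ref{lem: Lower bound for EU}, for \emph{every} $S \subset [K]$,
\[
E_{\theta^{\ast}}\!\left[W_{t}(\theta,\theta') \mid S_{t} = S\right] = 1 - \frac{E_{\theta^{\ast}}[V_{t}(\theta,\theta') \mid S_{t} = S]}{\|\theta-\theta'\|^{2}} \le 1 - c_{u} .
\]
In particular $1-c_{u} \in [0,1]$, since the conditional expectation also lies in $[0,1]$ by the first fact.

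For the inductive step I would introduce the filtration $\mathcal{F}_{t} := \sigma(\Theta_{0:t}, S_{1:t}, Y_{1:t})$ of the variables produced up to time $t$ under $P_{\theta^{\ast}}$, and use two structural properties of the joint law \eqref{eq: joint law of Y S and theta}: (i) $S_{t} = S^{\ast}_{\Theta_{t-1}}$ is a deterministic function of $\Theta_{t-1}$, hence $\mathcal{F}_{t-1}$-measurable; and (ii) conditionally on $\mathcal{F}_{t-1}$, the response $Y_{t}$ has law $h_{S_{t},\epsilon}(\cdot \mid \theta^{\ast})$, which depends on the past only through $S_{t}$, so that $E_{\theta^{\ast}}[\,\cdot \mid \mathcal{F}_{t-1}] = E_{\theta^{\ast}}[\,\cdot \mid S_{t}]$ for any function of $(Y_{t},S_{t})$. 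Since $t_{i} \le t_{k}-1$ for $i<k$, the product $\prod_{i=1}^{k-1} W_{t_{i}}(\theta,\theta')$ and the set $S_{t_{k}}$ are all $\mathcal{F}_{t_{k}-1}$-measurable, so using nonnegativity of the factors and then (ii),
\[
E_{\theta^{\ast}}\!\left[\prod_{i=1}^{k} W_{t_{i}}(\theta,\theta') \,\Big|\, \mathcal{F}_{t_{k}-1}\right] = \left(\prod_{i=1}^{k-1} W_{t_{i}}(\theta,\theta')\right) E_{\theta^{\ast}}\!\left[W_{t_{k}}(\theta,\theta') \mid S_{t_{k}}\right] \le (1-c_{u}) \prod_{i=1}^{k-1} W_{t_{i}}(\theta,\theta') .
\]
Taking expectations gives $E_{\theta^{\ast}}\big[\prod_{i=1}^{k} W_{t_{i}}\big] \le (1-c_{u})\, E_{\theta^{\ast}}\big[\prod_{i=1}^{k-1} W_{t_{i}}\big]$; iterating down to the base case $k=1$, where $E_{\theta^{\ast}}[W_{t_{1}}(\theta,\theta')] = E_{\theta^{\ast}}\big[E_{\theta^{\ast}}[W_{t_{1}} \mid S_{t_{1}}]\big] \le 1-c_{u}$, yields $E_{\theta^{\ast}}\big[\prod_{i=1}^{k} W_{t_{i}}(\theta,\theta')\big] \le (1-c_{u})^{k}$.

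The argument is mostly bookkeeping once the filtration is fixed; the one point that needs care — and the main obstacle — is justifying property (ii), namely that the \emph{adaptive} selection of $S_{t}$ from the past does not disturb the conditional law of $Y_{t}$. This is read off directly from \eqref{eq: joint law of Y S and theta}: the only factor involving $Y_{t}$ there is $h_{S_{t},\epsilon}(Y_{t} \mid \theta^{\ast})$, with all history dependence routed through $S_{t}$, which in turn is a function of $\Theta_{t-1}$. This is exactly why the conditional bound of Lemma~\ref{lem: Lower bound for EU} holds uniformly over $S$ and can be applied at each peeling step irrespective of how the algorithm arrived at $S_{t_{k}}$.
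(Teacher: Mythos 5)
Your proof is correct and follows essentially the same route as the paper: both peel off the factor $W_{t_{k}}$ by conditioning on the past and apply the uniform-in-$S$ bound $E_{\theta^{\ast}}[W_{t}(\theta,\theta')\mid S_{t}=S]\le 1-c_{u}$ from Lemma~\ref{lem: Lower bound for EU}, then iterate the resulting recursion down to $k=1$. The only difference is cosmetic: you condition on the full filtration $\mathcal{F}_{t_{k}-1}$ and justify explicitly (via the structure of \eqref{eq: joint law of Y S and theta}) that the conditional law of $Y_{t_{k}}$ depends on the past only through $S_{t_{k}}$, a conditional-independence step the paper uses implicitly when it writes the inner expectation as $\sum_{S}P_{\theta^{\ast}}(S_{t_{k}}=S\mid W_{t_{1}},\ldots,W_{t_{k-1}})\,E_{\theta^{\ast}}(W_{t_{k}}\mid S_{t_{k}}=S)$.
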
 
\begin{proof}
For simplicity, we drop $(\theta, \theta')$ from the notation and denote the random variables in question as $W_{t_{1}}, \ldots, W_{t_{k}}$. We can write 
\begin{align}
 E_{\theta^{\ast}} \left( \prod_{i =1}^{k} W_{t_{i}}  \right) &= E_{\theta^{\ast}} \left[ E_{\theta^{\ast}} \left( \left. \prod_{i =1}^{k} W_{t_{i}} \right\vert W_{t_{1}}, \ldots, W_{t_{k-1}} \right) \right] \nonumber \\
& = E_{\theta^{\ast}} \left[ \left( \prod_{i =1}^{k-1} W_{t_{i}} \right) E_{\theta^{\ast}} \left ( W_{t_{k}}  |  W_{t_{1}}, \ldots, W_{t_{k-1}} \right) \right]. \label{eq: expectation of V rewritten two-fold}
\end{align}
By construction of $W_{i}(\theta, \theta')$, we have $E_{\theta^{\ast}}(W_{i} | S_{i} = S) \leq 1 - c_{u}$, which follows from \eqref{eq: lower bound for EUgivenS}. Using this, the inner conditional expectation can be bounded as
\begin{align}
E_{\theta^{\ast}} \left ( W_{t_{k}}  |  W_{t_{1}}, \ldots, W_{t_{k-1}} \right)  &= \sum_{S \subset [K]} P_{\theta^{\ast}} (S_{t_{k}} = S | W_{t_{1}},\ldots, W_{t_{k-1}})  E_{\theta^{\ast}} (W_{t_{k}} | S_{t_{k}} = S) \nonumber \\
& \leq \sum_{S\subset [K]} P_{\theta^{\ast}} (S_{t_{k}} = S | W_{t_{1}},\ldots, W_{t_{k-1}}) \left( 1 - c_{u} \right) \nonumber  \\
& = \left( 1 - c_{u} \right). \label{eq: conditional expectation bounded}
\end{align}
Combining \eqref{eq: expectation of V rewritten two-fold} and \eqref{eq: conditional expectation bounded}, we have
\begin{equation} \label{eq: prodV recursion downwards}
E_{\theta^{\ast}}\left( \prod_{i =1}^{k} W_{t_{i}}  \right) \leq \left( 1 - c_{u} \right) E_{\theta^{\ast}} \left ( \prod_{i =1}^{k-1} W_{t_{i}} \right).
\end{equation}
By Lemmas \ref{lem: Upperbound for U} and \ref{lem: Lower bound for EU}, we have $V_{t}(\theta, \theta') \leq \| \theta - \theta'\|^{2}$ and $E_{\theta^{\ast}}[V_{t}(\theta, \theta')] \geq c_{u} \| \theta - \theta'\|^{2}$. Thus, we necessarily have $c_{u} < 1$. Therefore, the recursion in \eqref{eq: prodV recursion downwards} can be used until $k = 1$ to obtain the desired result.
\end{proof}
Note that $W_{t}(\theta, \theta')$ is bounded as $0 \leq W_{i}(\theta, \theta') \leq 1$. We now quote a critical theorem from \citet[Theorem 3.2]{Pelekis_and_Ramon_2017} regarding the sum of dependent and bounded random variables, which will be useful for bounding $\sum_{t = 1}^{n} V_{t}(\theta, \theta')$.

\begin{thm}{\citep[Theorem 3.2]{Pelekis_and_Ramon_2017}} \label{thm: Hoeffding dependent}
Let $W_{1}, \ldots, W_{n}$ be random variables, such that $0 \leq W_{t} \leq 1$, for $t = 1,\ldots, n$. Fix a real number $\tau \in (0, n)$ and let $k$ be any positive integer, such that $0 < k < \tau$. Then
\begin{align}
\mathbb{P}\left(\sum_{t = 1}^{n} W_{t} \geq \tau \right) \leq \frac{1}{ \binom{\tau}{k}} \sum_{A \subseteq \{1, \ldots, n \}: |A| = k} \mathbb{E}\left[ \prod_{i \in A} W_{i} \right],
\end{align}
where $\binom{\tau}{k} = \frac{\tau (\tau-1)\ldots (\tau - k + 1)}{k!}$.
\end{thm}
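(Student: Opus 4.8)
The plan is to recognize the right-hand side as a normalization of the expected $k$-th elementary symmetric polynomial of the $W_t$'s, and to prove the tail bound by a Markov-type argument applied to that polynomial. Write $Z := \sum_{t=1}^n W_t$ and
\[
e_k(W) := \sum_{A \subseteq \{1,\ldots,n\}:\, |A| = k} \prod_{i \in A} W_i,
\]
so the claimed upper bound is exactly $\mathbb{E}[e_k(W)]/\binom{\tau}{k}$. Since $W_i \geq 0$, every summand is nonnegative, hence $e_k(W) \geq 0$ surely and $\mathbb{E}[e_k(W)] = \sum_{|A|=k}\mathbb{E}[\prod_{i\in A} W_i]$ by linearity of expectation (note that no independence is needed). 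The entire statement then reduces to the pointwise inequality
\[
e_k(W)\, \mathbb{I}(Z \geq \tau) \geq \binom{\tau}{k}\, \mathbb{I}(Z \geq \tau),
\]
because taking expectations and using $e_k(W) \geq 0$ yields $\mathbb{E}[e_k(W)] \geq \binom{\tau}{k}\,\mathbb{P}(Z \geq \tau)$, which is the assertion; here $\binom{\tau}{k} > 0$ since $\tau > k$ makes every factor in $\tau(\tau-1)\cdots(\tau-k+1)$ positive.

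To establish the pointwise inequality I would first record two elementary facts about $x \mapsto \binom{x}{k} = \frac{1}{k!}\prod_{j=0}^{k-1}(x-j)$. Writing $P(x) = \prod_{j=0}^{k-1}(x-j)$, one has $P'(x) = P(x)\sum_{j=0}^{k-1}\frac{1}{x-j}$ and $P''(x) = P(x)\big[(\sum_j \tfrac{1}{x-j})^2 - \sum_j \tfrac{1}{(x-j)^2}\big] = P(x)\sum_{j \neq l}\frac{1}{(x-j)(x-l)}$, both strictly positive for $x > k-1$ because $P(x) > 0$ there. Hence $\binom{x}{k}$ is strictly increasing and convex on $(k-1,\infty)$. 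On the event $\{Z \geq \tau\}$ we have $Z \geq \tau > k$, so $\lfloor Z\rfloor \geq k$, and monotonicity gives $\binom{Z}{k} \geq \binom{\tau}{k}$ at once.

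It remains to show $e_k(W) \geq \binom{Z}{k}$ on $\{Z \geq \tau\}$, which is the crux. The key fact is that $e_k$ is Schur-concave on the nonnegative orthant: using $\partial_i e_k = e_{k-1}(W \setminus W_i)$ one computes $(W_i - W_j)(\partial_i e_k - \partial_j e_k) = -(W_i - W_j)^2\, e_{k-2}(W \setminus \{W_i, W_j\}) \leq 0$. Consequently $e_k(w)$ is minimized over the polytope $\{w \in [0,1]^n : \sum_i w_i = Z\}$ at a vertex, each of which (up to a permutation that leaves $e_k$ unchanged) has $m := \lfloor Z \rfloor$ coordinates equal to $1$, one coordinate equal to $f := Z - m$, and the rest equal to $0$; a direct partial-sum check shows such a vertex majorizes every point of the polytope, so by Schur-concavity it realizes the minimum. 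At this vertex $e_k = \binom{m}{k} + f\binom{m}{k-1}$, and Pascal's rule rewrites this as $(1-f)\binom{m}{k} + f\binom{m+1}{k}$, i.e.\ the chord of $\binom{\cdot}{k}$ over $[m,m+1]$ evaluated at $m+f = Z$. Since $m \geq k > k-1$ on our event, $\binom{\cdot}{k}$ is convex on $[m,m+1]$, so the chord dominates the function and the vertex value is $\geq \binom{Z}{k}$. Chaining everything, $e_k(W) \geq \binom{Z}{k} \geq \binom{\tau}{k}$ on $\{Z \geq \tau\}$, which is the pointwise inequality, and the proof concludes.

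I expect the main obstacle to be the symmetric-function inequality $e_k(W) \geq \binom{Z}{k}$. It is genuinely \emph{false} without the constraint $\lfloor Z\rfloor \geq k$ (for instance $w = (f,0,\ldots,0)$ with $f \in (0,1)$ and $k = 3$ gives $e_3 = 0 < \binom{f}{3}$), so the argument must invoke the hypothesis $k < \tau$ precisely to force $m \geq k$ and thereby keep the relevant interval $[m,m+1]$ inside the convexity region $(k-1,\infty)$. The second delicate point is justifying that Schur-concavity drives the minimum to a vertex; I would handle this through the explicit majorization verification above rather than appealing to a general extreme-point theorem.
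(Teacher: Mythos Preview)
The paper does not prove this theorem; it is quoted from \citet{Pelekis_and_Ramon_2017} and used as a black box in the proof of Lemma~\ref{lem: almost concave}. There is no proof in the paper to compare against.

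Your argument is nonetheless correct. The Markov-type reduction to the pointwise inequality $e_k(W) \geq \binom{\tau}{k}$ on $\{Z \geq \tau\}$ is the natural route, and your handling of the symmetric-function step is sound: the Schur--Ostrowski check for Schur-concavity of $e_k$, the majorization verification that the configuration $(1,\ldots,1,f,0,\ldots,0)$ dominates every point of the sum-constrained cube, and the chord-above-graph convexity argument for $x \mapsto \binom{x}{k}$ on $(k-1,\infty)$ all go through. Your observation that the hypothesis $k < \tau$ is needed precisely to force $\lfloor Z \rfloor \geq k$ and keep $[m,m+1]$ inside the convexity region is exactly the delicate point. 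One cosmetic remark: calling the minimizer ``a vertex'' is a mild distraction, since the operative fact is the majorization rather than any extreme-point principle, but you already flag this in your closing caveat.
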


In the following, we apply Theorem \ref{thm: Hoeffding dependent} for $\sum_{t = 1}^{n} V_{t}(\theta, \theta')$. 
\begin{lem} \label{lem: almost concave}
For every $\theta, \theta' \in \Delta$ and $a \in (0, 1)$,
\[
\lim_{n \rightarrow \infty} P_{\theta^{\ast}}\left( \frac{1}{n}\sum_{t = 1}^{n} V_{t}(\theta, \theta') \leq a c_{u} \| \theta - \theta' \|^{2} \right) = 0.
\]
\end{lem}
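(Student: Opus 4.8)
The plan is to recast the event in terms of the bounded, dependent variables $W_t(\theta,\theta')$ defined just above the lemma, and then apply the Pelekis–Ramon tail bound (Theorem \ref{thm: Hoeffding dependent}) together with the product-moment estimate of Lemma \ref{lem: bound on the expectation of product of V}. Throughout I take $\theta\neq\theta'$ (if $\theta=\theta'$ then $V_t\equiv 0$ and the inequality inside the probability holds surely, so that degenerate case is understood to be excluded, as it is in the intended application). Dividing by $\|\theta-\theta'\|^2>0$ and using $W_t(\theta,\theta')=1-V_t(\theta,\theta')/\|\theta-\theta'\|^2$, the event becomes $\{\sum_{t=1}^n W_t(\theta,\theta')\ge\tau_n\}$ with $\tau_n:=n(1-ac_u)$. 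Since $0<c_u<1$ (established via Lemmas \ref{lem: Lower bound for EU} and \ref{lem: bound on the expectation of product of V}) and $0<a<1$, we have $ac_u\in(0,1)$ and hence $\tau_n\in(0,n)$, so Theorem \ref{thm: Hoeffding dependent} applies.

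Next I would feed this into Theorem \ref{thm: Hoeffding dependent}: for any integer $k$ with $0<k<\tau_n$,
\[
P_{\theta^\ast}\!\left(\sum_{t=1}^n W_t(\theta,\theta')\ge\tau_n\right)
\le\frac{1}{\binom{\tau_n}{k}}\sum_{\substack{A\subseteq\{1,\dots,n\}\\|A|=k}}E_{\theta^\ast}\!\left[\prod_{i\in A}W_i(\theta,\theta')\right]
\le\frac{\binom{n}{k}}{\binom{\tau_n}{k}}(1-c_u)^k ,
\]
where the last step uses Lemma \ref{lem: bound on the expectation of product of V} on each of the $\binom{n}{k}$ index sets $A$, all of which have size $k$ so the bound $(1-c_u)^k$ is uniform. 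Using $\binom{n}{k}\le n^k/k!$ and $\binom{\tau_n}{k}\ge(\tau_n-k)^k/k!$ (valid because $k<\tau_n$ keeps all factors positive) turns the right-hand side into $\big(n/(\tau_n-k)\big)^k(1-c_u)^k$.

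The decisive step is choosing $k=k_n$ as a function of $n$: it must grow linearly so that $(1-c_u)^{k_n}$ decays exponentially, yet slowly enough that $\big(n/(\tau_n-k_n)\big)^{k_n}$ stays controlled. I would set $\beta:=\tfrac12 c_u(1-a)>0$ and $k_n:=\lfloor\beta n\rfloor$. One checks the chain $\beta<c_u(1-a)<1-ac_u$ (the second inequality is just $c_u<1$), which guarantees $k_n<\tau_n$ for all large $n$, and also $\tfrac{1-c_u}{1-ac_u-\beta}<1$ (this rearranges precisely to $\beta<c_u(1-a)$, which is where $c_u<1$ and $a<1$ are both used). Since $\tau_n-k_n\ge(1-ac_u-\beta)n>0$, we get $n/(\tau_n-k_n)\le(1-ac_u-\beta)^{-1}$, hence
\[
P_{\theta^\ast}\!\left(\frac1n\sum_{t=1}^n V_t(\theta,\theta')\le ac_u\|\theta-\theta'\|^2\right)\le\left(\frac{1-c_u}{1-ac_u-\beta}\right)^{k_n}\longrightarrow 0
\]
as $n\to\infty$, because the base is strictly below $1$ and $k_n\to\infty$. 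The only subtle point is exactly this tuning of $k_n$ (any $\beta\in(0,c_u(1-a))$ works; the factor $\tfrac12$ is only for concreteness); everything else is routine manipulation of binomial coefficients.
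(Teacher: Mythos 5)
Your proof is correct and follows the same skeleton as the paper's: rewrite the event as $\{\sum_{t=1}^n W_t \ge n(1-ac_u)\}$, apply Theorem~\ref{thm: Hoeffding dependent} with the uniform product bound from Lemma~\ref{lem: bound on the expectation of product of V}, and let $k$ grow linearly in $n$. The one substantive difference is the choice of $k$. The paper takes $k^{\ast}=\lceil n(1-a)\rceil$ and ends with the bound $(1-c_u)^{k^{\ast}}\prod_{i=1}^{k^{\ast}}\frac{n-i+1}{n(1-ac_u)-i+1}$, asserting without further argument that it converges to $0$; verifying that assertion actually requires a Stirling-type computation, since the individual factors of the product approach $1/(1-c_u)$ and naively bounding each factor by its maximum gives only the trivial bound $1$. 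Your choice $k_n=\lfloor\beta n\rfloor$ with $\beta=\tfrac12 c_u(1-a)$, combined with the crude estimates $\binom{n}{k}\le n^k/k!$ and $\binom{\tau_n}{k}\ge(\tau_n-k)^k/k!$, trades a larger exponent for a transparent geometric ratio $\frac{1-c_u}{1-ac_u-\beta}<1$, so the convergence to $0$ is immediate. In that sense your version is self-contained where the paper's leaves a nontrivial limit unjustified. Your explicit exclusion of the degenerate case $\theta=\theta'$ is also the right reading: as stated the lemma fails there (the event holds surely), and both proofs implicitly divide by $\|\theta-\theta'\|^2$; in the only place the lemma is invoked, the case $\theta=\theta^{\ast}$ is handled trivially anyway.
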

\begin{proof}
For any integer $k < n (1 - a c_{u}) < n$, we have
\begin{align*}
P_{\theta^{\ast}}\left( \frac{1}{n}\sum_{t = 1}^{n} V_{t}(\theta, \theta') \leq a c_{u} \| \theta - \theta' \|^{2} \right) &=  
P_{\theta^{\ast}}\left( \| \theta - \theta' \|^{2} \frac{1}{n} \sum_{t = 1}^{n} (1 - W_{t}(\theta, \theta')) \leq a c_{u} \| \theta - \theta' \|^{2} \right) \\
& = P_{\theta^{\ast}}\left( \frac{1}{n} \sum_{t = 1}^{n} (1 - W_{t}(\theta, \theta')) \leq  a c_{u}  \right) \\
& = P_{\theta^{\ast}}\left( \sum_{t = 1}^{n} W_{t}(\theta, \theta') \geq n \left( 1 -  a c_{u} \right) \right) \\
& = P_{\theta^{\ast}}\left( \sum_{t = 1}^{n} W_{t}(\theta, \theta') \geq n \left( 1 -  a c_{u} \right) \right)  \\
& \leq \frac{1}{\binom{n \left( 1 -  a c_{u} \right)}{k}} \sum_{A \subseteq \{1, \ldots, n \}: |A| = k} E_{\theta^{\ast}} \left[ \prod_{i \in A} W_{i}(\theta, \theta') \right]  \\
& \leq \frac{1}{\binom{n \left( 1 -  a c_{u} \right)}{k}} \binom{n}{k} \left( 1 - c_{u} \right)^{k},
\end{align*}
where the last two lines follow from Theorem \ref{thm: Hoeffding dependent} and Lemma \ref{lem: bound on the expectation of product of V}, respectively. Select $k = k^{\ast} = \lceil n(1 - a) \rceil$ and note that when $n > 1/(a - ac_{u})$ one always has $k^{\ast} < n (1 - a c_{u})$. Then, for $n > 1/(a-a c_{u})$,
\begin{align}
P_{\theta^{\ast}}\left( \frac{1}{n}\sum_{t = 1}^{n} V_{t}(\theta, \theta') \leq a c_{u} \| \theta - \theta' \|^{2} \right) & \leq \frac{1}{ \binom{n \left( 1 -  a c_{u} \right)}{k^{\ast}}} { \binom{n}{k^{\ast}}} \left( 1 - c_{u} \right)^{k^{\ast}} \nonumber \\
& = \left(1 - c_{u} \right)^{k^{\ast}} \prod_{i = 1}^{k^{\ast}} \frac{n-i+1}{n (1 - a c_{u}) - i + 1}.\label{eq: upper bound for PU}
\end{align}
The right-hand side does not depend on $\theta, \theta'$ and converges to $0$.
\end{proof}

\paragraph{Smoothness of $\ln h_{S, \epsilon}(y | \theta)$:}
Next, we establish the $L$-smoothness of $h_{S, \epsilon}(y | \theta)$ as a function of $\theta$ for any $y \in [K]$ and $S \subset [K]$. Some technical lemmas are needed first.
\begin{lem} \label{lem: lame lemma 2}
For $x > 0$, $\ln(1 + x) \geq x - \frac{1}{2}x^{2}$.
\end{lem}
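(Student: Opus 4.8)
The plan is a one-variable calculus argument. I would define the auxiliary function $f(x) := \ln(1+x) - x + \tfrac{1}{2}x^{2}$ on $[0,\infty)$ and show that $f(x) \ge 0$ for all $x > 0$. The first step is to record the boundary value $f(0) = 0$. The second step is to differentiate: I would compute
\[
f'(x) = \frac{1}{1+x} - 1 + x = \frac{1 - (1+x) + x(1+x)}{1+x} = \frac{x^{2}}{1+x},
\]
which is nonnegative (in fact strictly positive) for every $x > 0$. The third step is to conclude that $f$ is nondecreasing on $[0,\infty)$, so $f(x) \ge f(0) = 0$ for $x \ge 0$, which is exactly the claimed inequality $\ln(1+x) \ge x - \tfrac{1}{2}x^{2}$.

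One remark on why I would not simply invoke the Taylor/Mercator series $\ln(1+x) = x - \tfrac{x^{2}}{2} + \tfrac{x^{3}}{3} - \cdots$: that series has radius of convergence $1$, so the alternating-series truncation bound only delivers the inequality on $(0,1)$, whereas the lemma is stated for all $x > 0$. The monotonicity argument above is uniform in $x$ and needs no such restriction, so it is the cleaner route.

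I do not anticipate any genuine obstacle here; the only thing to be careful about is the algebraic simplification of $f'(x)$ to the manifestly nonnegative form $x^{2}/(1+x)$, and the observation that $1+x > 0$ throughout the domain so the sign of $f'$ is determined by the numerator. The result then follows immediately by the mean value theorem (or, equivalently, by the fundamental theorem of calculus applied to $f' \ge 0$).
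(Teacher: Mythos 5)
Your proof is correct and takes essentially the same route as the paper: both define $f(x) = \ln(1+x) - x + \tfrac{1}{2}x^{2}$, observe $f(0)=0$, and simplify $f'(x)$ to $x^{2}/(1+x) \geq 0$ to conclude monotonicity. Your added remark about why the Mercator series is insufficient for $x \geq 1$ is a sensible aside but not needed for the argument.
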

\begin{proof}
The function $\ln(1 + x) - x + 0.5 x^{2}$ is $0$ at $x = 0$ and its derivative $1/(1+x) - 1 + x = \frac{1}{1+x} - (1 - x) = \frac{1 - (1-x^{2})}{1+x} = x^{2}/(1+x) > 0$ when $x > 0$.
\end{proof}

\begin{lem}\label{lem: lame lemma 3}
For $x > 0$, $\ln(x + 1) \leq 1 - \frac{1}{x+1} + \frac{1}{2} x^{2}$.
\end{lem}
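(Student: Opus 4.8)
The plan is to mimic the template used for Lemma~\ref{lem: lame lemma 2}: introduce the difference function
\[
f(x) := 1 - \frac{1}{x+1} + \frac{1}{2}x^{2} - \ln(x+1),
\]
and show $f(x) \geq 0$ for all $x > 0$ by verifying that $f(0) = 0$ and that $f$ is nondecreasing on $[0, \infty)$. The boundary value is immediate, since $f(0) = 1 - 1 + 0 - 0 = 0$.

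For the monotonicity, I would differentiate to get
\[
f'(x) = \frac{1}{(x+1)^{2}} + x - \frac{1}{x+1},
\]
and then substitute $u = x + 1 > 1$ and rearrange into a manifestly nonnegative form:
\[
f'(x) = \frac{1}{u^{2}} - \frac{1}{u} + (u - 1) = (u-1)\left(1 - \frac{1}{u^{2}}\right) = \frac{(u-1)^{2}(u+1)}{u^{2}} \geq 0 .
\]
Hence $f$ is nondecreasing on $[0,\infty)$, and combined with $f(0) = 0$ this gives $f(x) \geq 0$ for every $x > 0$, which is exactly the claimed inequality $\ln(x+1) \leq 1 - \frac{1}{x+1} + \frac{1}{2}x^{2}$.

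There is no real obstacle here; the only mildly non-obvious step is recognizing the factorization of $f'(x)$ after the substitution $u = x+1$, which turns an expression that is not obviously signed into a product of nonnegative factors. Everything else is a routine one-variable calculus argument, entirely parallel to the proof of Lemma~\ref{lem: lame lemma 2}.
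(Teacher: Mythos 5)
Your proposal is correct and is essentially the paper's own argument: the paper considers the negated difference function $\ln(x+1) - 1 + \frac{1}{x+1} - \frac{1}{2}x^{2}$, notes it vanishes at $x=0$, and shows its derivative equals $\frac{x(1-(x+1)^{2})}{(x+1)^{2}} < 0$ for $x>0$, which is exactly $-\frac{(u-1)^{2}(u+1)}{u^{2}}$ under your substitution $u = x+1$. The only difference is the sign convention and the cosmetic change of variable, so the two proofs coincide.
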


\begin{proof}
The function $\ln(x + 1) - 1 + \frac{1}{x+1} - \frac{1}{2} x^{2}$ is $0$ at $x = 0$ and its derivative,
\[
\frac{1}{x+1} - \frac{1}{(x+1)^{2}} - x = \frac{x - x (x+1)^{2}}{(x+1)^{2}} = \frac{x(1 -(x+1)^{2})}{(x+1)^{2}},
\]
is negative for $x > 0$.
\end{proof}

\begin{lem} \label{lem: smoothness of h}
There exists an $L_{0} > 0$ such that, for all $S \subset [K]$ and $y \in [K]$, the function $\ln h_{S, \epsilon}(y | \theta)$ is an $L_{0}$-smooth function of $\theta$. That is for all $\theta, \theta' \in \Delta$, we have
\[
\ln h_{S, \epsilon}(y | \theta) - \ln h_{S, \epsilon}(y | \theta') \leq \nabla_{\vartheta} \ln h_{S, \epsilon}(y | \theta)^{\top} (\vartheta - \vartheta') + L_{0} \| \theta - \theta' \|^{2}.
\]
\end{lem}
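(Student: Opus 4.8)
The plan is to reduce this $L$-smoothness inequality to a one-variable estimate on $x-\ln(1+x)$ and then control that quantity by the squared difference of the marginal likelihoods. First I would set $a:=h_{S,\epsilon}(y\mid\theta)$ and $b:=h_{S,\epsilon}(y\mid\theta')$, which both lie in the fixed interval $[c,C]$ by Lemma \ref{lem: bounds for Gtheta}, uniformly in $S$, $y$, $\theta$, $\theta'$. By Lemma \ref{lem: inner product of gradient and difference} the inner product $\nabla_{\vartheta}\ln h_{S,\epsilon}(y\mid\theta)^{\top}(\vartheta-\vartheta')$ equals $(a-b)/a$, so the target inequality is equivalent to
\[
\ln a-\ln b-\frac{a-b}{a}\;\le\;L_{0}\,\|\theta-\theta'\|^{2}.
\]
With the substitution $x:=b/a-1$, which lies in $[\,c/C-1,\;C/c-1\,]$ so that $1+x=b/a\ge c/C>0$, the left-hand side is exactly $x-\ln(1+x)\ge 0$. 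Hence it suffices to bound $x-\ln(1+x)$ by a fixed constant times $x^{2}$ over this range, and then bound $x^{2}$ by $\|\theta-\theta'\|^{2}$.

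For the first step I would split on the sign of $b-a$. If $b\ge a$, then $x\ge 0$ and Lemma \ref{lem: lame lemma 2} gives $\ln(1+x)\ge x-\tfrac12 x^{2}$, hence $x-\ln(1+x)\le\tfrac12 x^{2}=\tfrac12(b-a)^{2}/a^{2}\le (b-a)^{2}/(2c^{2})$. If $b<a$, I would write $1+x=b/a=1/(1+x')$ with $x':=a/b-1>0$; then $x=-x'/(1+x')$ and $x-\ln(1+x)=\ln(1+x')-x'/(1+x')$, and Lemma \ref{lem: lame lemma 3} applied to $x'>0$ (using $1-\tfrac{1}{1+x'}=\tfrac{x'}{1+x'}$) gives $\ln(1+x')-x'/(1+x')\le\tfrac12(x')^{2}=\tfrac12(a-b)^{2}/b^{2}\le (a-b)^{2}/(2c^{2})$. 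Either way, $x-\ln(1+x)\le (h_{S,\epsilon}(y\mid\theta)-h_{S,\epsilon}(y\mid\theta'))^{2}/(2c^{2})$. For the second step I would reuse the computation from the proof of Lemma \ref{lem: Upperbound for U}: writing $r_{y}$ for the $y$th row of $G_{S,\epsilon}$, whose entries lie in $[0,1]$, one has $h_{S,\epsilon}(y\mid\theta)-h_{S,\epsilon}(y\mid\theta')=r_{y}^{\top}(\theta-\theta')$, so $(h_{S,\epsilon}(y\mid\theta)-h_{S,\epsilon}(y\mid\theta'))^{2}\le\|\theta-\theta'\|^{2}$ for every $S$ and $y$. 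Taking $L_{0}:=1/(2c^{2})$, which depends only on $\epsilon$ and $K$, then completes the argument.

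The only point that needs care is \emph{uniformity}: the single constant $L_{0}$ must be valid for all $S\subset[K]$ and all $y\in[K]$, and this is exactly what the two-sided bound $c\le h_{S,\epsilon}(y\mid\theta)\le C$ of Lemma \ref{lem: bounds for Gtheta} provides — it keeps $1+x$ bounded away from $0$, so that $x-\ln(1+x)$ stays comparable to $x^{2}$ with a fixed constant and nothing blows up. (Equivalently, one could bypass Lemmas \ref{lem: lame lemma 2}–\ref{lem: lame lemma 3} and apply Taylor's theorem to $g(x)=x-\ln(1+x)$ about $x=0$, since $g(0)=g'(0)=0$ and $g''(x)=(1+x)^{-2}$ is bounded on the relevant interval; the route via those two lemmas is the one consistent with the preceding development.)
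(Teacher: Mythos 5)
Your proof is correct and follows essentially the same route as the paper's: the same case split on the sign of $h_{S,\epsilon}(y\mid\theta)-h_{S,\epsilon}(y\mid\theta')$, the same two elementary inequalities (Lemmas \ref{lem: lame lemma 2} and \ref{lem: lame lemma 3}), the identity from Lemma \ref{lem: inner product of gradient and difference}, the bound $(h_{S,\epsilon}(y\mid\theta)-h_{S,\epsilon}(y\mid\theta'))^{2}\le\|\theta-\theta'\|^{2}$, and the same constant $L_{0}=1/(2c^{2})$. The only cosmetic difference is that you phrase both cases through the single substitution $x=b/a-1$ rather than choosing the ratio separately in each case.
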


\begin{proof}
Assume $h_{S, \epsilon}(y| \theta) \geq h_{S, \epsilon}(y | \theta')$. Using Lemma \ref{lem: lame lemma 3} with $x = \frac{h_{S, \epsilon}(y| \theta)}{h_{S, \epsilon}(y| \theta')} -1 \geq 0$, we have
\begin{align}
\ln h_{S, \epsilon}(y| \theta) - \ln h_{S, \epsilon}(y| \theta') &\leq 1 - \frac{h_{S, \epsilon}(y| \theta')}{h_{S, \epsilon}(y| \theta)} + \frac{1}{2} \left(\frac{h_{S, \epsilon}(y| \theta)}{h_{S, \epsilon}(y| \theta')} -1 \right)^{2} \nonumber\\
& =\nabla_{\vartheta} \ln h_{S, \epsilon}(y | \theta)^{\top} (\vartheta - \vartheta') +  \frac{1}{2} \left(\frac{h_{S, \epsilon}(y| \theta) - h_{S, \epsilon}(y| \theta')}{h_{S, \epsilon}(y| \theta')} \right)^{2}  \nonumber \\
& \leq \nabla_{\vartheta} \ln h_{S, \epsilon}(y | \theta)^{\top} (\vartheta - \vartheta') +  \frac{1}{2 c^{2}} \left(h_{S, \epsilon}(y| \theta) - h_{S, \epsilon}(y| \theta') \right)^{2} \label{eq: same inequality for difference between lnh}  \\
& \leq \nabla_{\vartheta} \ln h_{S, \epsilon}(y | \theta)^{\top} (\vartheta - \vartheta') +  \frac{1}{2 c^{2}} \| \theta - \theta'\|^{2}, \nonumber 
\end{align}
where the first term in the second line follows from Lemma \ref{lem: inner product of gradient and difference}, the third line follows from Lemma \ref{lem: bounds for Gtheta}, and the last line follows from Lemma \ref{lem: Upperbound for U}.

Now, assume $h_{S, \epsilon}(y| \theta) \leq h_{S, \epsilon}(y | \theta')$. By Lemma \ref{lem: lame lemma 2} with $x = \frac{h_{S, \epsilon}(y| \theta')}{h_{S, \epsilon}(y| \theta)} -1 \geq 0$, 
\[
\ln h_{S, \epsilon}(y| \theta') - \ln h_{S, \epsilon}(y| \theta) \geq \left( \frac{h_{S, \epsilon}(y| \theta')}{h_{S, \epsilon}(y | \theta)} - 1\right) - \frac{1}{2}\left(  \frac{h_{S, \epsilon}(y| \theta')}{h_{S, \epsilon}(y | \theta)} - 1 \right)^{2},
\]
or, reversing the sign of inequality,
\begin{align*}
\ln h_{S, \epsilon}(y | \theta) - \ln h_{S, \epsilon}(y| \theta') &  \leq \left( 1- \frac{h_{S, \epsilon}(y| \theta')}{h_{S, \epsilon}(y | \theta)} \right) + \frac{1}{2}\left(  \frac{h_{S, \epsilon}(y| \theta')}{h_{S, \epsilon}(y | \theta)} - 1 \right)^{2} \\
& \leq \nabla_{\vartheta} \ln h_{S, \epsilon}(y | \theta)^{\top} (\vartheta - \vartheta') +  \frac{1}{2 c^{2}} \left(h_{S, \epsilon}(y| \theta) - h_{S, \epsilon}(y| \theta') \right)^{2}, \label{eq: same inequality for difference between lnh}
\end{align*}
where the third line follows from Lemma \ref{lem: bounds for Gtheta}. Hence, we have arrived at the same inequality as \eqref{eq: same inequality for difference between lnh}. Hence, Lemma \ref{lem: smoothness of h} holds with $L_{0} = \frac{1}{2 c^{2}}$.
\end{proof}

\paragraph{Second moment of the gradient at $\theta^{\ast}$:}
Let the average log-marginal likelihoods be defined as 
\begin{equation} \label{eq: F as average gradient of log marginal lkl}
 \Phi_{n}(\theta) := \frac{1}{n} \sum_{t = 1}^{n} \ln h_{S_{t}, \epsilon}(Y_{t} | \theta), \quad n \geq 1.
\end{equation}
The following bound on the second moment of this average at $\theta^{\ast}$ will be useful.
\begin{lem} \label{lem: expectation of F}
For $ \Phi_{n}(\theta)$ defined in \eqref{eq: F as average gradient of log marginal lkl}, we have
\begin{align*}
E_{\theta^{\ast}} \left[ \| \nabla_{\vartheta} \Phi_{n}(\theta^{\ast}) \|^{2} \right] \leq \frac{1}{n} \max_{S \subset [K]} \textup{Tr} \left[ F(\theta^{\ast}; S, \epsilon) \right].
\end{align*}
\end{lem}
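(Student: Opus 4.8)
The plan is to write $\nabla_{\vartheta}\Phi_{n}(\theta^{\ast}) = \frac{1}{n}\sum_{t=1}^{n} Z_{t}$ with $Z_{t} := \nabla_{\vartheta}\ln h_{S_{t},\epsilon}(Y_{t} \mid \theta^{\ast})$, and to show that $(Z_{t})$ is a martingale difference sequence with respect to the natural filtration. This makes the cross terms in $E_{\theta^{\ast}}[\|\sum_{t}Z_{t}\|^{2}]$ vanish, so that only the diagonal terms survive, each of which equals a trace of a Fisher information matrix.

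First I would set up the filtration $\mathcal{F}_{t-1} := \sigma(\Theta_{0:t-1}, Y_{1:t-1}, S_{1:t-1})$. By the construction of the joint law in \eqref{eq: joint law of Y S and theta}, $S_{t} = S^{\ast}_{\Theta_{t-1}}$ is $\mathcal{F}_{t-1}$-measurable, and conditionally on $\mathcal{F}_{t-1}$ one has $Y_{t} \sim h_{S_{t},\epsilon}(\cdot \mid \theta^{\ast})$. Then I would verify the zero-conditional-mean property of the score: for any fixed $S \subset [K]$,
\[
E\big[\nabla_{\vartheta}\ln h_{S,\epsilon}(Y \mid \theta^{\ast}) \,\big|\, S_{t}=S\big] = \sum_{y=1}^{K} h_{S,\epsilon}(y \mid \theta^{\ast})\,\frac{\nabla_{\vartheta}h_{S,\epsilon}(y \mid \theta^{\ast})}{h_{S,\epsilon}(y \mid \theta^{\ast})} = \nabla_{\vartheta}\sum_{y=1}^{K} h_{S,\epsilon}(y \mid \theta^{\ast}) = \nabla_{\vartheta}(1) = 0 ,
\]
where interchanging the finite sum with the gradient is legitimate and every quantity involved is bounded away from $0$ and $\infty$ by Lemma \ref{lem: bounds for Gtheta}. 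Hence $E_{\theta^{\ast}}[Z_{t} \mid \mathcal{F}_{t-1}] = 0$.

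Next, for $s < t$, since $Z_{s}$ is $\mathcal{F}_{t-1}$-measurable, conditioning on $\mathcal{F}_{t-1}$ yields $E_{\theta^{\ast}}[Z_{t}^{\top} Z_{s}] = E_{\theta^{\ast}}\big[Z_{s}^{\top} E_{\theta^{\ast}}[Z_{t} \mid \mathcal{F}_{t-1}]\big] = 0$, so $E_{\theta^{\ast}}\big[\|\sum_{t=1}^{n} Z_{t}\|^{2}\big] = \sum_{t=1}^{n} E_{\theta^{\ast}}[\|Z_{t}\|^{2}]$. For a single diagonal term, conditioning on $S_{t}=S$ and using the representation $F(\theta^{\ast};S,\epsilon) = \sum_{y} h_{S,\epsilon}(y\mid\theta^{\ast})\,\nabla_{\vartheta}\ln h_{S,\epsilon}(y\mid\theta^{\ast})\,\nabla_{\vartheta}\ln h_{S,\epsilon}(y\mid\theta^{\ast})^{\top}$ established in the proof of Proposition \ref{prop: FIM of RRRR}, I would identify
\[
E_{\theta^{\ast}}\big[\|Z_{t}\|^{2} \,\big|\, S_{t}=S\big] = \sum_{y=1}^{K} h_{S,\epsilon}(y\mid\theta^{\ast})\,\big\|\nabla_{\vartheta}\ln h_{S,\epsilon}(y\mid\theta^{\ast})\big\|^{2} = \textup{Tr}\big[F(\theta^{\ast};S,\epsilon)\big] \le \max_{S'\subset[K]}\textup{Tr}\big[F(\theta^{\ast};S',\epsilon)\big].
\]
Averaging over the conditional law of $S_{t}$ preserves the bound, giving $E_{\theta^{\ast}}[\|\sum_{t}Z_{t}\|^{2}] \le n\max_{S'\subset[K]}\textup{Tr}[F(\theta^{\ast};S',\epsilon)]$, and dividing by $n^{2}$ completes the proof.

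The computation is otherwise routine; the one point requiring genuine care is that the observations $Y_{1:n}$ generated by AdOBEst-LDP are \emph{dependent}, so the cross terms cannot be dropped by independence and must instead be eliminated via the martingale-difference (tower) property — it is the correct choice of filtration (including the posterior-sample variables $\Theta_{t}$) and the verification that $Y_{t}$ still has conditional law $h_{S_{t},\epsilon}(\cdot\mid\theta^{\ast})$ given $\mathcal{F}_{t-1}$, combined with the zero-mean-score identity, that constitute the substantive content of the argument.
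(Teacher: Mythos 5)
Your proof is correct and follows essentially the same route as the paper: both arguments kill the cross terms in $E_{\theta^{\ast}}\bigl[\|\sum_{t}\nabla_{\vartheta}\ln h_{S_{t},\epsilon}(Y_{t}\mid\theta^{\ast})\|^{2}\bigr]$ via the zero-conditional-mean property of the score and identify each diagonal term with $\textup{Tr}\,[F(\theta^{\ast};S,\epsilon)]$ after conditioning on $S_{t}=S$. Your explicit martingale-difference framing with the filtration $\mathcal{F}_{t-1}=\sigma(\Theta_{0:t-1},Y_{1:t-1},S_{1:t-1})$ is merely a cleaner packaging of the paper's tower-property conditioning on $(Y_{t'},S_{t'})$, not a genuinely different argument.
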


\begin{proof}
First, we evaluate the mean at $\theta = \theta^{\ast}$.
\begin{align*}
E_{\theta^{\ast}} \left[ \nabla_{\vartheta} \Phi_{n}(\theta^{\ast}) \right] = \frac{1}{n} \sum_{t = 1}^{n} E_{\theta^{\ast}} \left[ \nabla_{\vartheta} \ln h_{S_{t}, \epsilon}(Y_{t}|\theta^{\ast})\right].
\end{align*}
Focusing on a single term, 
\begin{align*}
E_{\theta^{\ast}} \left[ \nabla_{\vartheta} \ln  h_{S_{t}, \epsilon}(Y_{t}|\theta^{\ast}) \right] &= \sum_{S \subset [K]} P_{\theta^{\ast}}(S_{t} = S) E_{\theta^{\ast}} \left[ \nabla_{\vartheta} \ln h_{S, \epsilon}(Y_{t} |\theta^{\ast}) | S_{t} = S \right].
\end{align*}
Each term in the sum is equal to $0$, since
\begin{equation} \label{eq: zero expectation of grad ln}
E_{\theta^{\ast}} \left[ \nabla_{\vartheta} \ln h_{S, \epsilon}(Y_{t}| \theta^{\ast}) | S_{t} = S \right]  = \sum_{k = 1}^{K} \nabla_{\vartheta} \ln h_{S, \epsilon}(k | \theta^{\ast}) h_{S, \epsilon}(k | \theta^{\ast}) = 0.
\end{equation}
For the second moment at $\theta = \theta^{\ast}$,
\begin{align*}
E_{\theta^{\ast}} \left[ \nabla_{\vartheta}\Phi_{n}(\theta^{\ast}) \nabla_{\vartheta}\Phi_{n}(\theta^{\ast})^{\top} \right] = & \frac{1}{n^{2}} \sum_{t = 1}^{n} E_{\theta^{\ast}} \left[ \nabla_{\vartheta}\ln h_{S_{t}, \epsilon}(Y_{t}| \theta^{\ast}) \nabla_{\vartheta}\ln h_{S_{t}, \epsilon}(Y_{t}|\theta^{\ast})^{\top} \right]\\
&+ \frac{2}{n^{2}} \sum_{t=1}^{n} \sum_{t' = 1}^{t-1} E_{\theta^{\ast}}\left[ \nabla_{\vartheta}\ln h_{S_{t}, \epsilon}(Y_{t}| \theta^{\ast}) \nabla_{\vartheta}\ln h_{S_{t'}}(Y_{t'}|\theta^{\ast})^{\top} \right].
\end{align*}
For the diagonal terms, for all $t = 1, \ldots, n$, we have
\begin{align*}
& E_{\theta^{\ast}} \left[ \nabla_{\vartheta}\ln h_{S_{t}, \epsilon}(Y_{t}| \theta^{\ast}) \nabla_{\vartheta}\ln h_{S_{t}, \epsilon}(Y_{t}| \theta^{\ast})^{\top} \right] \\
& \quad\quad= \sum_{S \subset [K]} P_{\theta^{\ast}}(S_{t}=S) E_{\theta^{\ast}} \left[ \nabla_{\vartheta}\ln h_{S, \epsilon}(Y_{t} | \theta^{\ast}) \nabla_{\vartheta}\ln h_{S, \epsilon}(Y_{t} | \theta^{\ast})^{\top} | S_{t} = S\right] \\
&\quad\quad= \sum_{S \subset [K]} P_{\theta^{\ast}}(S_{t}=S) F(\theta^{\ast}; S, \epsilon).
\end{align*}
For the cross terms, for $1 \leq t' < t \leq n$,
\begin{align*}
E_{\theta^{\ast}} \left[ \nabla_{\vartheta}\ln h_{S_{t}, \epsilon}(Y_{t}| \theta^{\ast}) \nabla_{\vartheta}\ln h_{S_{t'}}(Y_{t'}| \theta^{\ast})^{\top} \right] &= E_{\theta^{\ast}}\left\{E_{\theta^{\ast}}\left[ \nabla_{\vartheta}\ln h_{S_{t}, \epsilon}(Y_{t}| \theta^{\ast}) \nabla_{\vartheta}\ln h_{S_{t'}}(Y_{t'}| \theta^{\ast})^{\top} | Y_{t'}, S_{t'}\right] \right\} \\
&= E_{\theta^{\ast}}  \left\{ E_{\theta^{\ast}} \left[ \nabla_{\vartheta}\ln h_{S_{t}, \epsilon}(Y_{t}| \theta^{\ast})  | Y_{t'}, S_{t'}\right] \nabla_{\vartheta}\ln h_{S_{t}, \epsilon}(Y_{t'}| \theta^{\ast})^{\top}  \right\}.
\end{align*}
The conditional expectation inside is zero, since, by \eqref{eq: zero expectation of grad ln}, 
\begin{align*}
E_{\theta^{\ast}}\left[ \nabla_{\vartheta}\ln h_{S_{t}, \epsilon}(Y_{t}|  \theta^{\ast})  | Y_{t'}, A_{t'}\right] = \sum_{S \subset [K]} P_{\theta^{\ast}}(S_{t} = S | Y_{t'}, S_{t'}) E_{\theta^{\ast}} \left[ \nabla_{\vartheta} \ln h_{S, \epsilon}(Y_{t} | \theta^{\ast} ) | S_{t} = S\right] = 0.
\end{align*}
Therefore, all the cross terms are zero,
\[
E_{\theta^{\ast}}\left[ \nabla_{\vartheta}\ln h_{S_{t}, \epsilon}(Y_{t}| \theta^{\ast}) \nabla_{\vartheta}\ln h_{S_{t'}}(Y_{t'}| \theta^{\ast})^{\top} \right] = 0,
\]
and hence, we arrive at
\begin{align}
E_{\theta^{\ast}} \left[ \nabla_{\vartheta}\Phi_{n}(\theta^{\ast}) \nabla_{\vartheta}\Phi_{n}(\theta^{\ast})^{\top} \right] = \frac{1}{n^{2}}  \sum_{t = 1}^{n} \sum_{S \subset [K]} P_{\theta^{\ast}}(S_{t}=S) \left[ F(\theta^{\ast}; S, \epsilon) \right]
\end{align}
for the second moment of the gradient of $\Phi_{n}(\theta^{\ast})$. Therefore,
\begin{align*}
E_{\theta^{\ast}} \left[ \| \nabla_{\vartheta}\Phi_{n}(\theta^{\ast}) \|^{2} \right]  &= E_{\theta^{\ast}} \left[ \textup{Tr} \left( \nabla_{\vartheta}\Phi_{n}(\theta^{\ast}) \nabla_{\vartheta}\Phi_{n}(\theta^{\ast})^{\top} \right) \right] \\
& = \textup{Tr} \left( E_{\theta^{\ast}} \left[ \nabla_{\vartheta}\Phi_{n}(\theta^{\ast}) \nabla_{\vartheta}\Phi_{n}(\theta^{\ast})^{\top} \right] \right) \ \\
& = \frac{1}{n^{2}}  \sum_{t = 1}^{n} \sum_{S \subset [K]} P_{\theta^{\ast}}(S_{t}=S) \textup{Tr} (F(\theta^{\ast}; S, \epsilon)) \\
&\leq \frac{1}{n} \max_{S \subset [K]} \textup{Tr} \left( F(\theta^{\ast}; S, \epsilon) \right),
\end{align*}
which concludes the proof.
\end{proof}

\subsubsection{Convergence of the posterior distribution} \label{appndx: Convergence of the posterior distribution}
Let $\mu \in (0, 1)$ and, for $\theta, \theta' \in \Delta$, define
\begin{equation} \label{eq: mathcalE}
\mathcal{E}^{\mu}_{n}(\theta, \theta') := \mu c_{u} \| \theta - \theta'\|^{2} -  \frac{1}{n} \sum_{t = 1}^{n} V_{t}(\theta, \theta'),
\end{equation}
where $V_{t}(\theta, \theta')$ was defined in \eqref{eq: Ut definition} and $c_{u} > 0$ was defined in the proof of Lemma \ref{lem: Lower bound for EU}, respectively. The proof of Theorem \ref{thm: convergence to posterior} requires the following lemma concerning $\mathcal{E}^{\mu}_{n}(\theta, \theta')$.
\begin{lem} \label{lem: convergence of integral of expE}
There exists $\mu \in (0, 1)$ such that, for any $\varepsilon > 0$, we have
\[
\lim_{n \rightarrow \infty} P_{\theta^{\ast}} \left(\int_{\Delta} e^{ n m_{0} \mathcal{E}^{\mu}_{n}(\theta, \theta^{\ast}) } \mathrm{d}\theta > e^{\varepsilon} \right)  = 0.
\]
\end{lem}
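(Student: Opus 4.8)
The plan is to bound the probability by Markov's inequality, push the expectation inside the $\theta$-integral by Tonelli, and then control $E_{\theta^\ast}[e^{n m_0 \mathcal{E}^\mu_n(\theta,\theta^\ast)}]$ for each fixed $\theta$ using the product-moment bound of Lemma~\ref{lem: bound on the expectation of product of V}. Since the integrand is nonnegative,
\[
P_{\theta^\ast}\!\left(\int_\Delta e^{n m_0 \mathcal{E}^\mu_n(\theta,\theta^\ast)}\,\mathrm{d}\theta > e^\varepsilon\right) \le e^{-\varepsilon}\,E_{\theta^\ast}\!\left[\int_\Delta e^{n m_0 \mathcal{E}^\mu_n(\theta,\theta^\ast)}\,\mathrm{d}\theta\right] = e^{-\varepsilon}\int_\Delta E_{\theta^\ast}\!\left[e^{n m_0 \mathcal{E}^\mu_n(\theta,\theta^\ast)}\right]\mathrm{d}\theta,
\]
so it suffices to show the last integral tends to $0$ for a suitable $\mu\in(0,1)$.

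Fix $\theta$ and set $r := \|\theta-\theta^\ast\|^2$. Using $V_t(\theta,\theta^\ast) = r\bigl(1 - W_t(\theta,\theta^\ast)\bigr)$, which is just the definition of $W_t$, the exponent becomes $n m_0 \mathcal{E}^\mu_n(\theta,\theta^\ast) = n m_0 r(\mu c_u - 1) + m_0 r\sum_{t=1}^n W_t(\theta,\theta^\ast)$. To bound $E_{\theta^\ast}\bigl[e^{\lambda\sum_t W_t}\bigr]$ with $\lambda := m_0 r \ge 0$, I would use the convexity bound $e^{\lambda w}\le 1 + (e^\lambda-1)w$ valid for $w\in[0,1]$, multiply over $t$, expand $\prod_{t=1}^n\bigl(1+(e^\lambda-1)W_t\bigr) = \sum_{A\subseteq\{1,\dots,n\}}(e^\lambda-1)^{|A|}\prod_{i\in A}W_i$, take expectations, and apply Lemma~\ref{lem: bound on the expectation of product of V} termwise together with the binomial theorem, getting $E_{\theta^\ast}\bigl[e^{\lambda\sum_t W_t}\bigr]\le\bigl(1+(e^\lambda-1)(1-c_u)\bigr)^n$. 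Combining the two displays yields $E_{\theta^\ast}\bigl[e^{n m_0\mathcal{E}^\mu_n(\theta,\theta^\ast)}\bigr]\le f_\mu(r)^n$, where
\[
f_\mu(r) := e^{m_0 r(\mu c_u - 1)}\bigl(1 + (e^{m_0 r}-1)(1-c_u)\bigr).
\]

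The remaining task is to choose $\mu\in(0,1)$ so that $f_\mu(r)\le 1$ for all $r$ in the compact range $[0,D]$, $D := \sup_{\theta,\theta'\in\Delta}\|\theta-\theta'\|^2<\infty$, with strict inequality for $r>0$. Writing $\ln f_\mu(r) = m_0 c_u\mu\, r + g(r)$ with $g(r) := -m_0 r + \ln\bigl(1+(e^{m_0 r}-1)(1-c_u)\bigr)$, one checks $g(0)=0$ and $g'(r)<0$ for every $r\ge0$ — after clearing denominators this inequality reduces exactly to $c_u>0$, which holds since every $G_{S,\epsilon}$ is invertible (Lemma~\ref{lem: G is invertible}). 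Hence $g<0$ on $(0,D]$, so $h(r) := -g(r)/(m_0 c_u r)$ is continuous and positive on $(0,D]$, and a Taylor expansion at $0$ gives $h(r)\to 1$, so $h$ extends continuously to $[0,D]$ and attains a minimum $\mu_0\in(0,1]$. Taking $\mu := \mu_0/2\in(0,1)$ gives $m_0 c_u\mu\, r < -g(r)$, i.e.\ $f_\mu(r)<1$, for every $r\in(0,D]$, while $f_\mu(0)=1$. Since $f_\mu(r)^n\le 1$ for all $\theta$ and $f_\mu(r)^n\to 0$ for $\theta\ne\theta^\ast$, dominated convergence on the finite-measure set $\Delta$ (dominating constant $1$) gives $\int_\Delta f_\mu(r)^n\,\mathrm{d}\theta\to\int_\Delta\mathbb{I}(\theta=\theta^\ast)\,\mathrm{d}\theta=0$, and the claim follows. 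The main obstacle is this last choice of $\mu$: verifying $g'<0$ globally and, more subtly, that $h$ stays uniformly bounded away from $0$ on $[0,D]$, which relies on compactness and on pinning down the limiting behavior of $g$ near $r=0$.
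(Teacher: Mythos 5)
Your proof is correct, and it takes a genuinely different route from the paper's. The paper proves this lemma by showing that the family $\max\{1, e^{n m_{0}\mathcal{E}^{\mu}_{n}(\Theta,\theta^{\ast})}\}$ is uniformly integrable under a product measure on $\Delta$ and the sample space, invoking Lemma~\ref{lem: almost concave} (which rests on the Pelekis--Ramon inequality, Theorem~\ref{thm: Hoeffding dependent}) for the pointwise convergence, and then applying the Vitali convergence theorem. You instead use Markov's inequality and Tonelli, and control the exponential moment $E_{\theta^{\ast}}\left[e^{\lambda\sum_{t}W_{t}}\right]$ directly via the linearization $e^{\lambda w}\le 1+(e^{\lambda}-1)w$ on $[0,1]$ combined with the product-moment bound of Lemma~\ref{lem: bound on the expectation of product of V}; the resulting bound $f_{\mu}(r)^{n}$ is then driven to $0$ by an explicit calculus choice of $\mu$ (your computation that $g'<0$ reduces exactly to $c_{u}>0$ is right, as are the limit $h(r)\to 1$ at $r=0$ and the compactness argument giving $\mu_{0}\in(0,1]$). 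Your route is more self-contained---it bypasses both the Vitali theorem and the external concentration inequality for this particular lemma---it yields the strictly stronger conclusion $E_{\theta^{\ast}}\left[\int_{\Delta}e^{nm_{0}\mathcal{E}^{\mu}_{n}(\theta,\theta^{\ast})}\,\mathrm{d}\theta\right]\to 0$ (convergence in $L^{1}$ rather than merely the stated tail bound), and it makes the admissible $\mu$ explicit, whereas the paper leaves ``the parameter $\mu$ can be arranged'' implicit. What the paper's route buys is economy, since it reuses Lemma~\ref{lem: almost concave}, which is needed elsewhere anyway. In a full write-up you should make explicit the two degenerate cases: $\theta=\theta^{\ast}$, where $W_{t}$ is formally $0/0$ but the integrand is identically $1=f_{\mu}(0)^{n}$ (a Lebesgue-null point in any case), and $c_{u}=1$, where $g(r)=-m_{0}r<0$ trivially; neither causes any difficulty.
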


\begin{proof}
Define the product measure
\begin{equation} \label{eq: product measure}
P^{\otimes}(\mathrm{d}(\theta, \cdot)) := \frac{\mathrm{d}\theta}{|\Delta|} \times d P_{\theta^{\ast}}(\cdot) 
\end{equation}
for random variables $( \Theta \in \Delta, \{ S_{t} \subset \{ 1,\ldots, K\}, Y_{t} \in [K] \}_{t \geq 1})$, where $\mathrm{d}\theta$ is the Lebesgue measure for $\vartheta$ restricted to $\Delta$ and $|\Delta| := \int_{\Delta} \mathrm{d}\theta$. We will show that the parameter $\mu$ in \eqref{eq: mathcalE} can be chosen such that the collection of random variables
\[
\mathcal{C} := \{f_{n}:= \max\{ 1, e^{ n m_{0} \mathcal{E}^{\mu}_{n}(\Theta, \theta^{\ast}) } \}: n \geq  1\}
\]
is uniformly integrable with respect to $P^{\otimes}$. For uniform integrability, we need to show that for any $\varepsilon > 0$, there exists a $K > 0$ such that
\[
E^{\otimes} [ |f_{n} | \cdot \mathbb{I}( f_{n}  > K)] < \varepsilon, \quad \forall n \geq 1.
\]
For any $K > 1$ and $n \geq 1$, we have 
\begin{align*}
E^{\otimes} [ |f_{n} | \cdot \mathbb{I}( f_{n}  > K)] &= E^{\otimes} [ e^{ n m_{0} \mathcal{E}^{\mu}_{n}(\Theta, \theta^{\ast}) } \mathbb{I}( f_{n}  > K)] \\
&\leq \sup_{\theta \in \Delta}  e^{ n m_{0} \mathcal{E}^{\mu}_{n}(\theta, \theta^{\ast}) } P^{\otimes} (f_{n} > K)  \\
&= \sup_{\theta \in \Delta} e^{ n m_{0} \mathcal{E}^{\mu}_{n}(\theta, \theta^{\ast}) } \int_{\Delta } P_{\theta^{\ast}} \left(\mathcal{E}^{\mu}_{n} (\theta, \theta^{\ast}) >\frac{\ln K}{n m_{0}} \right)  \frac{\mathrm{d}\theta}{|\Delta |} \\
& \leq e^{ n \mu m_{0}  c_{u}} P_{\theta^{\ast}} \left(\mathcal{E}^{\mu}_{n} (\theta, \theta^{\ast}) > 0 \right),
\end{align*}
where the last line follows from $\mathcal{E}^{\mu}_{n} (\theta, \theta^{\ast}) \leq \mu c_{u} \| \theta - \theta^{\ast}\|^{2} \leq  \mu c_{u}$. Using \eqref{eq: upper bound for PU}, the last expression can be upper-bounded as
\begin{align}
e^{ n m_{0} \mu c_{u} } P_{\theta^{\ast}} \left(\mathcal{E}^{\mu}_{n} (\theta, \theta^{\ast}) >0 \right) &= e^{ n \mu m_{0}  c_{u}} P_{\theta^{\ast}} \left( \frac{1}{n} \sum_{t = 1}^{t} V_{t} (\theta, \theta^{\ast}) < \mu c_{u} \| \theta - \theta'\|^{2} \right) \nonumber \\
& \leq e^{ n \mu m_{0}  c_{u}}  \left(1 - c_{u} \right)^{\lceil n(1 - \mu) \rceil} \prod_{i = 1}^{\lceil n(1 - \mu) \rceil} \frac{n-i+1}{n (1 - \mu c_{u}) - i + 1}.\label{eq: bound for uniform integrable}
\end{align}
The parameter $\mu$ can be arranged such that \eqref{eq: bound for uniform integrable} converges to $0$. For such $\mu$, we have that for any $\varepsilon$ there exists a $N_{\varepsilon} > 0$ such that for all $n > N_{\varepsilon}$, $E^{\otimes} [ |f_{n} | \cdot \mathbb{I}( f_{n}  > K)] < \varepsilon$ for any $K > 0$. Finally, choose $K_{\varepsilon} = e^{N_{\varepsilon} m_{0} \mu c_{u}}$ so that $E^{\otimes} [ | f_{n} | \cdot \mathbb{I}( f_{n}  > K_{\varepsilon})] < \varepsilon$ for any $n \geq 1$. Hence, $\mathcal{C}$ is uniformly integrable for a suitable choice of $\mu$.

Next, we show that each $f_{n}$ in $\mathcal{C}$ converges in probability to $1$. The convergence is implied by the fact that for every $\theta \in \Delta$ and $\varepsilon > 0$, we have
\[
P_{\theta^{\ast}}(\max\{1, e^{n m_{0} \mathcal{E}^{\mu}_{n}(\theta, \theta^{\ast})} \} > e^{\varepsilon}) = P_{\theta^{\ast}}(\mathcal{E}^{\mu}_{n}(\theta, \theta^{\ast}) > \varepsilon) \rightarrow 0.
\]
by Lemma \ref{lem: almost concave}. Since $\mathcal{C}$ is uniformly integrable, the Vitali convergence theorem ensures that $f_{n}$ converges in distribution (with respect to $P^{\otimes}$) to $1$, i.e., $\lim_{n \rightarrow \infty} E^{\otimes} (f_{n}) = 1$. Since $P^{\otimes} = \frac{\mathrm{d}\theta}{|\Delta|} \times d P_{\theta^{\ast}}(\cdot)$ is a product measure as defined in \eqref{eq: product measure}, the stated limit implies that
\[
E_{\theta^{\ast}} \left[ \int_{\Delta} \max \{ 1, e^{ n m_{0} \mathcal{E}^{\mu}_{n}(\theta, \theta^{\ast}) } \} \mathrm{d}\theta \right]\rightarrow 1, 
\] 
that is, the sequence $\int_{\Delta} \max \{ 1, e^{ n m_{0} \mathcal{E}^{\mu}_{n}(\theta, \theta^{\ast}) } \} \mathrm{d}\theta $ converges to $1$ in distribution with respect to $P_{\theta^{\ast}}$. Since convergence in distribution to a constant implies convergence in probability, we have  
\begin{equation} \label{eq: convergence of max of 1 and int}
\int_{\Delta} \max \{ 1, e^{ n m_{0} \mathcal{E}^{\mu}_{n}(\theta, \theta^{\ast}) } \} \mathrm{d}\theta  \overset{P_{\theta^{\ast}}}{\rightarrow} 1.
\end{equation}
Finally, since we have
\[
\int_{\Delta} e^{ n m_{0} \mathcal{E}^{\mu}_{n}(\theta, \theta^{\ast}) }\mathrm{d}\theta \leq \int_{\Delta} \max \{1, e^{ n m_{0} \mathcal{E}^{\mu}_{n}(\theta, \theta^{\ast}) } \} \mathrm{d}\theta,
\]
and the right-hand side converges in probability to 1, we conclude.
\end{proof}

\begin{proof}[Proof of Theorem \ref{thm: convergence to posterior}]
Writing down Lemma \ref{lem: concavity} with $\theta^{\ast}$ and any $\theta \in \Delta$ separately for $t = 1, \ldots, n$, summing the inequalities and dividing by $n$, we obtain
\begin{align*}
\Phi_{n}(\theta^{\ast}) - \Phi_{n}(\theta) & \geq \nabla_{\vartheta}\Phi_{n}(\theta^{\ast})^{\top} (\vartheta^{\ast} - \vartheta) + m_{0} \sum_{t = 1}^{n}  V_{t}(\theta, \theta^{\ast}) \\
&= \nabla_{\vartheta}\Phi_{n}(\theta^{\ast})^{\top} (\vartheta^{\ast} - \vartheta)  + m  \| \theta - \theta^{\ast}\|^{2} - m_{0}  \mathcal{E}^{\mu}_{n}(\theta, \theta^{\ast}).
\end{align*}
where $m := \mu m_{0} c_{u}$. Reversing the sign, 
\begin{align*}
\Phi_{n}(\theta) - \Phi_{n}(\theta^{\ast}) \leq \nabla_{\vartheta}\Phi_{n}(\theta^{\ast})^{\top} (\vartheta - \vartheta^{\ast}) - m \|\theta^{\ast} - \theta \|^{2} + m_{0} \mathcal{E}^{\mu}_{n}(\theta, \theta^{\ast}).
\end{align*}
Using Cauchy-Schwarz inequality for the first term on the right-hand side, we get
\begin{align*}
\Phi_{n}(\theta) - \Phi_{n}(\theta^{\ast})  \leq \| \nabla_{\vartheta}\Phi_{n}(\theta^{\ast}) \| \|\theta - \theta^{\ast}\| - m \|\theta^{\ast} - \theta \|^{2} + m_{0} \mathcal{E}^{\mu}_{n}(\theta, \theta^{\ast}).
\end{align*}
Using Young's inequality $u v \leq \frac{u^{2} }{2 \kappa} + \frac{v^{2} \kappa}{2 }$ for the second term with $u = \|\nabla_{\vartheta}\Phi_{n}(\theta^{\ast}) \|$, $v = \| \theta^{\ast} - \theta \|$, and $\kappa = m$, we get
\begin{equation} \label{eq: Difference between F - 1}
\Phi_{n}(\theta) - \Phi_{n}(\theta^{\ast})  \leq \frac{\| \nabla_{\vartheta}\Phi_{n}(\theta^{\ast}) \|^{2}}{2 m} - \frac{m}{2}  \| \theta^{\ast} - \theta \|^{2} + m_{0} \mathcal{E}^{\mu}_{n}(\theta, \theta^{\ast}).
\end{equation}
Similarly, using Lemma \ref{lem: smoothness of h} with $\theta^{\ast}$ and any $\theta' \in \Delta$ for $t = 1, \ldots, n$, summing the inequalities and dividing by $n$, we obtain
\[
\Phi_{n}(\theta^{\ast}) - \Phi_{n}(\theta') \leq \nabla_{\vartheta}\Phi_{n}(\theta^{\ast})^{\top} (\vartheta^{\ast} - \vartheta') + L_{0} \|\theta^{\ast} - \theta' \|^{2}.
\]
Again, using Cauchy-Schwarz inequality and Young's inequality $u v \leq \frac{u^{2} }{2 \kappa} + \frac{v^{2} \kappa}{2 }$ with $u = \| \nabla_{\vartheta}\Phi_{n}(\theta^{\ast}) \|$, $v = \| \theta^{\ast} - \theta \|$, and $\kappa = 2 L_{0}$, we get
\begin{align}
\Phi_{n}(\theta^{\ast}) - \Phi_{n}(\theta') & \leq \| \nabla_{\vartheta}\Phi_{n}(\theta^{\ast}) \| \|\theta^{\ast} - \theta' \| + L_{0} \|\theta^{\ast} - \theta' \|^{2} \nonumber  \\
 & \leq \frac{\| \nabla_{\vartheta}\Phi_{n}(\theta^{\ast}) \|^{2}}{4 L_{0}} + 2L_{0} \| \theta^{\ast} - \theta' \|^{2} \nonumber \\
 & = \frac{\| \nabla_{\vartheta}\Phi_{n}(\theta^{\ast}) \|^{2}}{2 L} + L \| \theta^{\ast} - \theta' \|^{2}, \label{eq: Difference between F - 2}
\end{align}
where we let $L := 2 L_{0}$. Summing the inequalities in \eqref{eq: Difference between F - 1} and \eqref{eq: Difference between F - 2}, we obtain
\begin{equation} \label{eq: Difference between F - 3}
\Phi_{n}(\theta)  - \Phi_{n}(\theta') \leq \left( \frac{1}{2L} + \frac{1}{2m} \right) \| \nabla_{\vartheta}\Phi_{n}(\theta^{\ast}) \|^{2} - \frac{m}{2}  \| \theta^{\ast} - \theta \|^{2} + \frac{L}{2}  \| \theta^{\ast} - \theta' \|^{2} + m_{0} \mathcal{E}^{\mu}_{n}(\theta, \theta^{\ast}).
\end{equation}
Let $a \in (0, 1)$ be a constant and define the sequences 
\begin{align*}
\Omega_{n} &:= \{ \theta \in \Delta: \|\theta - \theta^{\ast}\|^{2} < \max\{ 4/m, 4/L\} n^{-a} \}, \quad n\geq 1. \\
A_{n} &:=  \{ \theta \in \Delta: \|\theta - \theta^{\ast}\|^{2} > \max\{ 4/m, 4/L\} n^{-a} \}, \quad n\geq 1. \\
B_{n} &:= \{ \theta \in \Delta: \|\theta - \theta^{\ast}\|^{2} \leq \min\{ 2/m, 2/L\} n^{-a} \}, \quad n\geq 1.
\end{align*}
For $\theta \in A_{n}$ and $\theta' \in B_{n}$, \eqref{eq: Difference between F - 3} can be used to obtain
\begin{equation*}
\Phi_{n}(\theta)  - \Phi_{n}(\theta') \leq \left( \frac{1}{2L} + \frac{1}{2m} \right) \| \nabla_{\vartheta}\Phi_{n}(\theta^{\ast}) \|^{2} - n^{-a}\left( \max \left\{ 2, \frac{2 m}{L} \right\} - \min\left\{ \frac{L}{m}, 1 \right\} \right) + m_{0} \mathcal{E}^{\mu}_{n}(\theta, \theta^{\ast}).
\end{equation*}
Noting that $\max \left\{ 2, \frac{2 m}{L} \right\} - \min\left\{ \frac{L}{m}, 1\right\} \geq 1$, we have 
\begin{equation} \label{eq: Difference between F - 4}
\Phi_{n}(\theta)  - \Phi_{n}(\theta') \leq \left( \frac{1}{2L} + \frac{1}{2m} \right) \| \nabla_{\vartheta}\Phi_{n}(\theta^{\ast}) \|^{2} - n^{-a} + m_{0} \mathcal{E}^{\mu}_{n}(\theta, \theta^{\ast}), \quad \theta \in A_{n}; \theta' \in B_{n}.
\end{equation}
Multiplying \eqref{eq: Difference between F - 4} with $n$,  exponentiating, and multiplying the ratio of the priors, we get
\begin{align}
 \frac{\eta(\theta)\exp\{ n \Phi_{n}(\theta) \}}{\eta(\theta')\exp\{ n \Phi_{n}(\theta') \}} & \leq \frac{\eta(\theta)}{\eta(\theta')}\exp \left[ \left( \frac{1}{2L} + \frac{1}{2m} \right)  n \| \nabla_{\vartheta}\Phi_{n}(\theta^{\ast}) \|^{2} - n^{1-a} + n m_{0} \mathcal{E}^{\mu}_{n}(\theta, \theta^{\ast}) \right] \nonumber \\
&  \leq C_{\eta, n} \exp \left[ C_{1} n \| \nabla_{\vartheta}\Phi_{n}(\theta^{\ast}) \|^{2} - n^{1-a} + n m_{0} \mathcal{E}^{\mu}_{n}(\theta, \theta^{\ast}) \right]  \label{eq: Difference between F - 5}
\end{align}
for all $\theta \in A_{n}$ and $\theta' \in B_{n}$, where $C_{1} := \frac{1}{2L} + \frac{1}{2m}$ and $C_{\eta, n} := \sup_{\theta \in A_{n}, \theta' \in B_{n}} \frac{\eta(\theta)}{\eta(\theta')} $. The bound in \eqref{eq: Difference between F - 5} can be used to bound the ratio between the posterior probabilities $\Pi(A_{n} | Y_{1:n}, S_{1:n})$ and $\Pi(B_{n} | Y_{1:n}, S_{1:n})$, since
\begin{align*}
\frac{ \Pi(A_{n} | Y_{1:n}, S_{1:n}) }{ \Pi(B_{n} | Y_{1:n}, S_{1:n}) } &= \frac{\int_{A_{n}} \eta(\theta) \exp\{n \Phi_{n}(\theta) \}  \mathrm{d}\theta}{\int_{B_{n}} \eta(\theta) \exp\{n \Phi_{n}(\theta) \} \mathrm{d}\theta} \\
&= \frac{\int_{A_{n}} \frac{\eta(\theta) \exp\{n \Phi_{n}(\theta) \} }{\inf_{\theta' \in B_{n}} \eta(\theta') \exp\{n \Phi_{n}(\theta') \}} \mathrm{d}\theta}{\int_{B_{n}} \frac{\eta(\theta) \exp\{n \Phi_{n}(\theta) \}}{\inf_{\theta' \in B_{n}} \eta(\theta') \exp\{n \Phi_{n}(\theta') \}} \mathrm{d}\theta} \\
& \leq \frac{\int_{A_{n}} \frac{\eta(\theta) \exp\{n \Phi_{n}(\theta) \} }{\inf_{\theta' \in B_{n}} \eta(\theta') \exp\{n \Phi_{n}(\theta') \}} \mathrm{d}\theta}{\int_{B_{n}} \frac{\eta(\theta) \exp\{n \Phi_{n}(\theta) \}}{\inf_{\theta' \in B_{n}} \eta(\theta') \exp\{n \Phi_{n}(\theta') \}} \mathrm{d}\theta} \\
& \leq \frac{\int_{A_{n}}  C_{\eta, n} \exp \left[ C_{1} n \| \nabla_{\vartheta}\Phi_{n}(\theta^{\ast}) \|^{2} - n^{1-a} + n m_{0} \mathcal{E}^{\mu}_{n}(\theta, \theta^{\ast}) \right]  \mathrm{d}\theta}{\int_{B_{n}} 1 \mathrm{d}\theta} \\
& = \frac{1}{\text{Vol}(B_{n})}  C_{\eta, n} \exp \left[ C_{1}  n \| \nabla_{\vartheta}\Phi_{n}(\theta^{\ast}) \|^{2}  - n^{1-a} \right] \int_{A_{n}} \exp\left[ n m_{0} \mathcal{E}^{\mu}_{n}(\theta, \theta^{\ast}) \right] \mathrm{d}\theta,
\end{align*}
where $\text{Vol}(B_{n}) := \int_{B_{n} \cap \Delta} \mathrm{d}\theta$. Note that $B_{n}$ shrinks with $n$, so there exists a $N_{B}$ such that for $n > N_{B}$, the volume $B_{n}$ can be lower-bounded as
\[
B_{n} \geq \frac{1}{2(K-2)!} \frac{ (\sqrt{\pi} \min\{ 2/m, 2/L\} n^{-a})^{(K-1)}}{ \Gamma((K-1)/2+1)},
\]
where the factor $\frac{1}{2(K-2)!}$ corresponds to the worst-case situation where $\theta^{\ast}$ is on one of the corners of $\Delta$, such as $\theta^{\ast} = (1, 0, \ldots, 0)^{\top}$, and the rest is the volume of a $K-1$ dimensional sphere with radius $\min\{ 2/m, 2/L\} n^{-\alpha}$. The lower bound is the volume of the intersection of a simplex with a sphere centered at one of the sharpest corners of the simplex. Therefore, for $n > N_{B}$, ratio can further be bounded as 
\begin{equation*}
\frac{ \Pi(A_{n} | Y_{1:n}, S_{1:n}) }{ \Pi(B_{n} | Y_{1:n}, S_{1:n}) } \leq C_{2} C_{\eta, n}  \exp \left[ C_{1}  n \| \nabla_{\vartheta}\Phi_{n}(\theta^{\ast}) \|^{2} + (K-1) a \ln n - n^{1-a} \right]  \int_{A_{n}} \exp\left[ n m_{0} \mathcal{E}^{\mu}_{n}(\theta, \theta^{\ast})\right] \mathrm{d}\theta,
\end{equation*}
where $C_{2} := \frac{\Gamma((K-1)/2+1)  }{\min\{ 2/m, 2/L\}^{K-1} \pi^{(K-1)/2}}$ does not depend on $n$.

Next, we prove that the sequence of random variables
\[
Z_{n} := C_{\eta, n}  \exp \left[ C_{1} n \| \nabla_{\vartheta}\Phi_{n}(\theta^{\ast}) \|^{2} + (K-1) a \ln n - n^{1-a} \right]  \int_{A_{n}} \exp\left[ n m_{0} \mathcal{E}^{\mu}_{n}(\theta, \theta^{\ast})\right] \mathrm{d}\theta 
\]
converges to $0$ in probability, which in turn proves the convergence of $\frac{ \Pi(A_{n} | Y_{1:n}, S_{1:n}) }{ \Pi(B_{n} | Y_{1:n}, S_{1:n}) }$ in probability to $0$. To do that, we need to prove that for each $\varepsilon > 0$ and $\delta > 0$, there exists a $N > 0$ such that for all $n > N$ we have $P_{\theta^{\ast}}(Z_{n} \geq 2\varepsilon) < 2 \delta$. Fix $\varepsilon > 0$ and $\delta > 0$. 
\begin{itemize}
\item Firstly, by Assumption \ref{asmp: prior has mass around theta}, because $B_{n}$ shrinks towards $\theta^{\ast}$, there exists $N_{\eta} > 0$ such that  $C_{\eta, n} < B$ for all $n > N_{\eta}$. 
\item Next, let $\beta := C_{1} \max_{S \subset [K]} \textup{Tr} (F(\theta^{\ast}; S, \epsilon))/\delta$. Using Markov's inequality for $\| \nabla_{\vartheta}\Phi_{n}(\theta^{\ast}) \|^{2}$ with Lemma \ref{lem: expectation of F}, we have
\begin{align*}
P_{\theta^{\ast}} \left( \| \nabla_{\vartheta}\Phi_{n}(\theta^{\ast}) \|^{2} \geq \frac{1}{n}\frac{\beta}{C_{1}} \right) &\leq \frac{1}{n} \max_{S \subset \{1, \ldots,  K \}} \textup{Tr} (F(\theta; S, \epsilon)) \frac{ C_{1} n }{\beta} = \delta.
\end{align*}
Also, since the $n^{1 - a}$ dominates the term $\ln n$, one can choose an integer $N_{\Phi} > 0$ such that 
\[
\beta \leq \ln (\varepsilon/B) + n^{1 - a} - (K-1) a \ln n, \quad \forall n \geq N_{\Phi}.
\] 
\item Now we deal with the integral in $Z_{n}$. We have 
\[
P_{\theta^{\ast}} \left( \int_{A_{n}} \exp\left[ n m_{0} \mathcal{E}^{\mu}_{n}(\theta, \theta^{\ast})\right] \mathrm{d}\theta \geq 2 \right) \leq P_{\theta^{\ast}} \left( \int_{\Delta} \exp\left[ n m_{0} \mathcal{E}^{\mu}_{n}(\theta, \theta^{\ast})\right] \mathrm{d}\theta \geq 2 \right) \rightarrow 1,
\]
where the convergence is due to Lemma \ref{lem: convergence of integral of expE}. Hence, there exists a $N_{\mathcal{E}}$ such that for all $n > N_{\mathcal{E}}$,
\[
P_{\theta^{\ast}} \left( \int_{A_{n}} \exp\left[ n m_{0} \mathcal{E}^{\mu}_{n}(\theta, \theta^{\ast})\right] \mathrm{d}\theta \geq 2 \right) \leq \delta.
\]
\end{itemize}
Gathering the results, for $n > \max\{N_{\eta}, N_{\Phi}, N_{\mathcal{E}} \}$, we have
\begin{align*}
P_{\theta^{\ast}}(Z_{n} \geq \varepsilon) &\leq P_{\theta^{\ast}}\left( e^ {C_{1}  n \| \nabla_{\vartheta}\Phi_{n}(\theta^{\ast}) \|^{2} + (K-1) a \ln n - n^{1-a} } \geq \varepsilon/B \right) + P_{\theta^{\ast}} \left( \int_{\Delta} \exp\left[ n m_{0} \mathcal{E}^{\mu}_{n}(\theta, \theta^{\ast})\right] \mathrm{d}\theta \geq 2 \right) \\
& \leq P_{\theta^{\ast}} \left( C_{1}  n \| \nabla_{\vartheta}\Phi_{n}(\theta^{\ast}) \|^{2} + (K-1)a \ln n - n^{1-a}  \geq \ln (\varepsilon/B) \right) + \delta\\
& = P_{\theta^{\ast}} \left(  n \| \nabla_{\vartheta}\Phi_{n}(\theta^{\ast}) \|^{2}   \geq \frac{\ln (\varepsilon/B)  + n^{1-a} - (K-1) a \ln n}{C_{1} } \right) + \delta \\
& \leq P_{\theta^{\ast}} \left(  \| \nabla_{\vartheta}\Phi_{n}(\theta^{\ast}) \|^{2} \geq \frac{\beta}{C_{1}  n} \right) + \delta \\
& \leq 2\delta.
\end{align*}
(In the first line, we have used $\mathbb{P}(XY > pq) = 1 - \mathbb{P}(XY < pq) \leq 1 - \mathbb{P}(X < p \text{ and } Y < q) = \mathbb{P}(X > p \text{ or } Y > q) \leq \mathbb{P}(X > p) + \mathbb{P}(Y > q)$ for non-negative random variables $X, Y$ and positive $p, q$.) Therefore we have proved that $Z_{n} \rightarrow 0$ in probability. Finally, since $B_{n} \subset \Omega_{n}$, we have
\[
\frac{ \Pi(A_{n} | Y_{1:n}, S_{1:n}) }{ \Pi(\Omega_{n} | Y_{1:n}, S_{1:n}) } 
 \leq \frac{ \Pi(A_{n} | Y_{1:n}, S_{1:n}) }{ \Pi(B_{n} | Y_{1:n}, S_{1:n}) }  \leq  C_{2} Z_{n} 
\]
for all $n > N_{B}$. This implies that, 
\[
\frac{ \Pi(A_{n} | Y_{1:n}, S_{1:n}) }{ \Pi(\Omega_{n} | Y_{1:n}, S_{1:n}) } \overset{P_{\theta^{\ast}}}{\rightarrow} 0.
\]
Since $A_{n} = \Delta / \Omega_{n}$, as a result we get $\Pi(\Omega_{n} | Y_{1:n}, S_{1:n}) \overset{P_{\theta^{\ast}}}{\rightarrow} 1$. This concludes the proof.
\end{proof}

\subsubsection{Convergence of the expected frequency} \label{sec: Convergence of the expected frequency}
\begin{proof}[Proof of Theorem \ref{thm: choosing the best subset}]
Assumption \ref{asmp: unique maximizer} ensures that there exists a $\kappa_{0} > 0$ and $k^{\ast} \in \{0, \ldots, K-1\}$ such that for all $0 \leq k \neq k^{\ast} < K$,
\[
U(\theta^{\ast}; S^{\ast}, \epsilon) - U(\theta^{\ast}; \{\sigma_{\theta^{\ast}}(1), \ldots, \sigma_{\theta^{\ast}}(k)\}, \epsilon) \geq \kappa_{0}.
\]
By Assumption \ref{asmp: continuity of U}, there exists a $\delta_{1} > 0$ such that 
\[
\| \theta - \theta' \| \leq \delta_{1} \Rightarrow | U(\theta, S, \epsilon) - U(\theta', S, \epsilon) | < \kappa_{0}/2.
\]
Moreover, since the components of $\theta^{\ast}$ are strictly ordered, 
\[
\delta_{2} := \min_{k = 1, \ldots, K-1 }  ( \theta^{\ast}(k) - \theta^{\ast}(k+1) ) > 0.
\]
Choose $\delta = \min \{ \delta_{1}, \delta_{2}/\sqrt{2} \}$. Define the set
\[
\Omega_{\delta} = \{\theta \in \Delta: \|\theta- \theta^{\ast} \|^{2} \leq \delta^{2} \}.
\]
Then, for any $\theta \in \Omega_{\delta}$, $\sigma_{\theta} = \sigma_{\theta^{\ast}}$ and $S^{\ast}_{\theta} = S^{\ast}$. This implies that $\{\theta_{n} \in \Omega_{\delta}\} \subseteq \{ S_{n+1} = S^{\ast} \}$. Since perfect sampling is assumed, we have $Q(\mathrm{d}\theta_{t} | Y_{1:t}, S_{1:t}) = \Pi(\mathrm{d}\theta_{t} | Y_{1:t}, S_{1:t})$. Hence, 
\begin{equation} \label{eq: S is Sstar prob}
P_{\theta^{\ast}}(S_{n+1} = S^{\ast}) \geq E_{\theta^{\ast}} \left[ P_{\theta^{\ast}}(\theta_{n} \in \Omega_{\delta} | S_{1:n}, Y_{1:n}) \right]= E_{\theta^{\ast}} \left[ \Pi(\Omega_{\delta} | Y_{1:n}, S_{1:n}) \right]
\end{equation}
Recall the sequence of sets
\[
\Omega_{n} = \{ \theta \in \Delta: \|\theta - \theta^{\ast}\|^{2} \leq c n^{-a} \}
\]
defined in Theorem \ref{thm: convergence to posterior}. There exists an $N_{1} > 0$ such that $n > N_{1}$ we have $\Omega_{n} \subseteq \Omega_{\delta}$. For such $N_{1}$, we have 
\[
E_{\theta^{\ast}}\left[\Pi(\Omega_{\delta} | Y_{1:n}, S_{1:n}) \right] \geq E_{\theta^{\ast}}\left[ \Pi(\Omega_{n} | Y_{1:n}, S_{1:n}) \right], \quad n > N_{1}.
\]
Combining with \eqref{eq: S is Sstar prob}, we can write as
\begin{equation} \label{eq: convergence of PS LHS RHS}
P_{\theta^{\ast}}(S_{n+1} = S^{\ast}) \geq E_{\theta^{\ast}}\left[ \Pi(\Omega_{n} | Y_{1:n}, S_{1:n}) \right], \quad n > N_{1}.
\end{equation}
We will show that the right-hand side converges to $1$. To do that, fix $\varepsilon > 0$. By Theorem \ref{thm: convergence to posterior}, there exists a $N_{2} > 0$ such that
\[
P_{\theta^{\ast}} \left(\Pi(\Omega_{n} | Y_{1:n}, S_{1:n}) > \sqrt{1-\varepsilon} \right) > \sqrt{1 - \varepsilon}.
\]
This implies that, for $n > N_{2}$,
\[
E_{\theta^{\ast}} (\Pi(\Omega_{n} | Y_{1:n}, S_{1:n})) > \sqrt{1 - \varepsilon} \sqrt{1 - \varepsilon} + 0  (1 - \sqrt{1 - \varepsilon}) = 1 - \varepsilon.
\]
This shows that $E_{\theta^{\ast}}\left[ \Pi(\Omega_{n} | Y_{1:n}, S_{1:n}) \right] \rightarrow 1$ as $n \rightarrow \infty$.  
Since the right-hand side of \eqref{eq: convergence of PS LHS RHS} converges to $1$, so does the left-hand side. Therefore, we have proven \eqref{eq: convergence of probability}.

To prove \eqref{eq: sublinear regret}, we utilize the convergence of Cesaro means and write 
\[
\lim_{n \rightarrow \infty} \frac{1}{n} \sum_{t = 1}^{n} P_{\theta^{\ast}}(S_{t} = S^{\ast}) = \lim_{t \rightarrow \infty} P_{\theta^{\ast}}(S_{t} = S^{\ast}) = 1,
\]
where the last equality is by \eqref{eq: convergence of probability}. Finally, we replace $P_{\theta^{\ast}}(S_{t}= S)$ by $E_{\theta}(\mathbb{I}(S_{t} = S_{t}))$ on the left-hand side and conclude the proof.
\end{proof}

\end{document}